\newif\ifnoarxiv
\newcommand{\lcucolor}{0.1,0.1,0.9}
\newtheorem{theorem}{Theorem}
\newtheorem{lemma}[theorem]{Lemma}
\newtheorem{corollary}[theorem]{Corollary}
\newcommand{\gaussian}{Gaussian\xspace}
\newcommand{\subgaussian}{sub-Gaussian\xspace}
\newcommand{\subexponential}{subexponential\xspace}
\newcommand{\crefi}[2]{\cref{#1}\ref{#1:#2}}
\newcommand{\cond}[1]{\eqref{cond:#1}}
\newcommand{\defeq}{\doteq} % definition
\newcommand{\T}{\top} % transpose
\newcommand{\argmin}{\mathop{\mathrm{argmin}}}
\newcommand{\argmax}{\mathop{\mathrm{argmax}}}
\newcommand{\Ordo}{O}
\newcommand{\setN}{\mathbb{N}} % set of natural numbers (without zero)
\newcommand{\setR}{\mathbb{R}} % set of real numbers
\newcommand{\setX}{\mathbb{X}} % domain
\newcommand{\setF}{\mathcal{F}} % function set
\newcommand{\setK}{\mathcal{K}} % another function set
\newcommand{\setW}{\mathbb{W}} % range of random variable \W
\newcommand{\setS}{\mathcal{S}} % a set
\newcommand{\ind}{\mathbb{I}} % indicator function
\newcommand{\youngfunsym}{\Psi} % Young function symbol
\newcommand{\youngfun}[1][q]{\youngfunsym_{#1}} % Young function
\newcommand{\distF}{\psi} % metric on \setF
\newcommand{\proc}{\Lambda} % random process function
\newcommand{\procl}{\Gamma} % another random process function
\newcommand{\procmod}{\tau} % Lipschitz modulus
\newcommand{\Nc}{\mathcal{N}} % covering number
\newcommand{\NcF}{\Nc_{\distF}} % covering number with respect to \distF
\newcommand{\He}{\mathcal{H}} % entropy
\newcommand{\HeF}{\He_{\distF}} % entropy with respect to \distF
\newcommand{\gkf}{g_k(f)} % closest function to f in \setF_k
\newcommand{\gmf}{g_m(f)} % closest function to f in \setF_m
\newcommand{\gf}{g_f} % closest function to f in \setF_\epsilon
\newcommand{\maxgf}{\gf^*}
\newcommand{\maxgh}{g_h^*}
\newcommand{\setFXR}{\{\setX\to\setR\}} % functions mapping from X to R
\newcommand{\setFlinUc}{\setF_{\textrm{aff}}} % unconstrained affine functions
\newcommand{\setFlinL}[1]{\setFlinUc^{#1}} % slope constrained affine functions
\newcommand{\setFlin}{\setFlinL{L}} % slope constrained affine functions
\newcommand{\setFlinmuL}[1]{\setFlinUc^{#1,\mu}}
\newcommand{\setFlinmu}{\setFlinmuL{L}}
\newcommand{\setFlinDnL}[1]{\setFlinUc^{#1,n}}
\newcommand{\setFlinDn}{\setFlinDnL{L}}
\newcommand{\hL}{\hat{L}}
\newcommand{\Id}{I} % identity matrix
\newcommand{\norm}[1]{\left\lVert#1\right\rVert}
\newcommand{\smallnorm}[1]{\lVert#1\rVert} % forced small-version
\newcommand{\enorm}[1]{\norm{#1}} % 2-norm
\newcommand{\esmallnorm}[1]{\smallnorm{#1}} % 2-norm (forced small version)
\newcommand{\onorm}[2][q]{\norm{#2}_{\youngfunsym_{#1}}} % Orlicz norm
\newcommand{\osmallnorm}[2][q]{\smallnorm{#2}_{\youngfunsym_{#1}}} % Orlicz norm (forced small version)
\renewcommand{\vec}[1]{{\boldsymbol{#1}}}
\newcommand{\vzero}{\vec{0}} % full zero vector
\newcommand{\vx}{\vec{x}}
\newcommand{\va}{\vec{a}}
\newcommand{\vb}{\vec{b}}
\newcommand{\E}{\mathbb{E}} % expected value
\renewcommand{\P}{\mathbb{P}} % probability
\newcommand{\K}{\mathbb{K}} % excess kurtosis
\newcommand{\Kz}{\K_0} % kurtosis about the origin
\newcommand{\as}{\textrm{a.s.}\xspace} % almost surely
\newcommand{\iid}{\textrm{i.i.d.}\xspace} % independent, identically distributed
\newcommand{\gaussiandistr}{\mathcal{N}} % gaussian distribution
\newcommand{\X}{\mathcal{X}}
\newcommand{\vX}{\vec{\X}}
\newcommand{\Y}{\mathcal{Y}}
\newcommand{\Z}{\mathcal{Z}}
\newcommand{\vZ}{\vec{\Z}}
\newcommand{\W}{\mathcal{W}}
\newcommand{\vW}{\vec{\W}}
\newcommand{\V}{\mathcal{V}}
\newcommand{\vXbar}{\mkern 5mu\overline{\mkern-5mu\vX\mkern-2mu}\mkern 2mu}
\newcommand{\Ybar}{\mkern 3mu\overline{\mkern-3mu\Y\mkern-1mu}\mkern 1mu}
\newcommand{\radFo}{r_0} % localization radius for \setFo
\newcommand{\momr}{r} % moment condition constant
\newcommand{\moms}{\theta} % moment condition constant
\newcommand{\oqmod}{S} % q-Orlicz modulus
\newcommand{\procmodB}{T} % Lipschitz modulus bound
\newcommand{\apxB}{B_*} % approximation bound
\newcommand{\Bc}{C} % Bernstein condition constant
\newcommand{\Kn}{K_n} % \moms constant = Ordo(kurtosis,ln(n))
\newcommand{\scl}{\eta} % strong-convexity constant
\newcommand{\tn}{t} % bias offset for proving enclosement for linear LSEs
\newcommand{\Rtn}{B} % proof constant
\newcommand{\eigmu}{\eta_\mu} % smallest eigenvalue of the feature covariance matrix
\newcommand{\loss}{\ell} % loss function
\newcommand{\losssq}{\loss_{\textrm{sq}}} % squared loss function
\newcommand{\lossce}{\loss_{\textrm{ce}}} % cross-entropy loss
\newcommand{\dr}{\mu} % regression distribution
\newcommand{\setFr}{\setF_*} % reference class
\newcommand{\fr}{f_*} % reference function
\newcommand{\Dn}{\mathcal{D}_n} % training sample
\newcommand{\setFn}{\setF_n} % hypothesis class
\newcommand{\fe}{f_n} % ERM estimate
\newcommand{\err}{\alpha} % ERM error term
\newcommand{\pen}{\beta} % ERM penalty term
\newcommand{\setFi}{\hat\setF_n} % inscribed "hypothesis" class
\newcommand{\setFo}{\setF} % outscribed "hypothesis" class
\newcommand{\setFor}{\setFo(\radFo)} % localized \setFo
\newcommand{\risk}{R_\dr} % risk
\newcommand{\riskn}{R_n} % empirical risk
\newcommand{\erisk}{L_{\dr}} % excess risk
\newcommand{\eriskn}{L_n} % empirical excess risk
\newcommand{\Lmod}{G} % Lipschitz modulus for (C1)
\newcommand{\Zrisk}{\Z} % risk difference of two functions
\newcommand{\Zriski}{\Z_i} % risk difference of two functions
\newcommand{\hn}{h_n} % estimator
\newcommand{\Wf}{\Delta_{f,\fr}} % difference of f and \fr
\newcommand{\Zf}{\Z_f} % shorthand notation for \Zrisk (for proofs only)
\newcommand{\Vf}{\V_f} % shorthand notation for \Zrisk/\Wf (for proofs only)
\newcommand{\hy}{\hat{y}} % estimator's response value (used to discuss loss functions)
\newcommand{\cgamma}{T_{\ln}} % logarithmic terms
\newcommand{\BX}{\rho} % subexponential norm of X-EX
\newcommand{\BY}{\sigma} % subexponential norm of Y-EY
\newcommand{\fcov}{\Sigma} % feature covariance matrix
\newcommand{\fcovh}{\hat\Sigma} % feature covariance matrix
\newcommand{\rlip}{\tau} % refined Lipschitz constant L
\newcommand{\rlipLog}{\rlip_{\ln}} % logarithmic factors for \rlipL
\newcommand{\ripar}{\lambda} % ridge regression parameter
\newcommand{\riva}{\va_{n,\ripar}} % slope of ridge regression estimator
\newcommand{\rife}{\fe^\ripar} % ridge regression estimator
\newcommand{\penLtwo}{\pen_{\ripar}} % L_2-penalty
\newcommand{\Ls}{L_*} % Lipschitz bound of f*
\newcommand{\setM}{\mathbb{M}} % general set notation for a regression class
\newcommand{\setMbg}{\setM_{\textrm{bg}}} % Bernoulli-Gaussian class to present weakness
\newcommand{\setMls}{\setM_{\textrm{subg}}^{\BX,\BY,d}} % subGausian linear least squares regression
\newcommand{\todob}[2][]{\todo[color=blue!20!white,#1]{BG: #2}}
\begin{document}

\title{Chaining Bounds for Empirical Risk Minimization}

\ifnoarxiv
\author{\name G\'abor Bal\'azs \email gbalazs@ualberta.ca \\
        \name Csaba Szepesv\'ari \email szepesva@ualberta.ca \\
        \addr Department of Computing Science \\
              University of Alberta \\
              Edmonton, Alberta, T6G 2E8, Canada
        \AND
        \name Andr\'as Gy\"orgy \email a.gyorgy@imperial.ac.uk \\
        \addr Department of Electrical and Electronic Engineering \\
              Imperial College London \\
              London, SW7 2BT, United Kingdom}

\editor{Unknown (draft version)}
\else
\author{G\'abor Bal\'azs and Csaba Szepesv\'ari \\
        Department of Computing Science,
        University of Alberta, \\
        Edmonton, Alberta, T6G 2E8, Canada \\
        \ \\
        Andr\'as Gy\"orgy \\
        Department of Electrical and Electronic Engineering, \\
        Imperial College London,
        London, SW7 2BT, United Kingdom}
\fi

\maketitle

\begin{abstract}%   <- trailing '%' for backward compatibility of .sty file
  This paper extends the standard chaining technique to prove excess risk
  upper bounds for empirical risk minimization with random design settings
  even if the magnitude of the noise and the estimates is unbounded.
  The bound applies to many loss functions besides the squared loss,
  and scales only with the \subgaussian or \subexponential parameters
  without further statistical assumptions
  such as the bounded kurtosis condition over the hypothesis class.
  A detailed analysis is provided for slope constrained and penalized
  linear least squares regression with a \subgaussian setting,
  which often proves tight sample complexity bounds up to logartihmic factors.
\end{abstract}

\ifnoarxiv
\begin{keywords}
  excess risk, upper bound, empirical risk minimization, linear least squares
\end{keywords}
\fi

\section{Introduction}
\label{sec:intro}

This paper extends the standard chaining technique
\citep[e.g.,][]{Pollard1990,Dudley1999,Gyorfi2002,BoucheronLugosiMassart2012}
to prove high-probability excess risk upper bounds
for empirical risk minimization (ERM)
for random design settings
even if the magnitude of the noise and the estimates is unbounded.
Our result (\cref{thm:ermub}) covers bounded settings
\citep{BartlettBousquetMendelson2005,Koltchinskii2011},
extends to \subgaussian or even \subexponential noise
\citep{VanDeGeer2000,GyorfiWegkamp2008},
and handles hypothesis classes with unbounded magnitude
\citep{LecueMendelson2013,Mendelson2014,LiangRakhlinSridharan2015}.
Furthermore, it applies to many loss functions besides the squared loss,
and does not need additional statistical assumptions
such as the bounded kurtosis of the transformed covariates
over the hypothesis class,
which prevent the latest developments to provide tight
excess risk bounds for many \subgaussian cases (\cref{sec:review}).

To demonstrate the effectiveness of our method for such unbounded settings,
we use our general excess risk bound (\cref{thm:ermub})
to provide a detailed analysis for linear least squares estimators
using quadratic slope constraint and penalty
with \subgaussian noise and domain
for the random design, nonrealizable setting (\cref{sec:linear}).
Our result for the slope constrained case
extends Theorem~A of \citet{LecueMendelson2013}
and nearly proves the conjecture of \citet{Shamir2015},
while our treatment for the penalized case (ridge regression)
is comparable to the work of \citet{HsuKakadeZhang2014}.

The rest of this section introduces our notation through the formal definition
of the regression problem and ERM estimators (\cref{sec:regprb}),
and discusses the limitations of current excess risk upper bounds
in the literature (\cref{sec:review}).
Then, we provide our main result in \cref{sec:ubound} to upper bound
the excess risk of ERM estimators, and discuss its properties
for various settings including many loss functions besides the squared loss.
Next, \cref{sec:linear} provides a detailed analysis
for linear least squares estimators
including the slope constrained case (\cref{sec:linear-lasso})
and ridge regression (\cref{sec:linear-ridge}).
Finally, \cref{sec:uboundprf} proves our main result (\cref{thm:ermub}).

\subsection{Empirical risk minimization}
\label{sec:regprb}

For the formal definition of a regression problem,
consider a probability distribution $\mu$
over some set $\setX \times \setR$
with some \emph{domain} $\setX$ being a separable Hilbert space,%
\footnote{
  All sets and functions considered are assumed to be measurable as necessary.
  To simplify the presentation, we omit these conditions by noting here that
  all the measurability issues can be overcome
  using standard techniques as we work with separable Hilbert spaces
  \citep[e.g.,][Chapter~5]{Dudley1999}.
}
a \emph{loss function} $\loss : \setR \times \setR \to [0,\infty)$,
and a \emph{reference class}
$\setFr \subseteq \setFXR \defeq \{f \,|\, f : \setX \to \setR\}$.

The task of a regression estimator is to produce a function
$f \in \setFXR$ based on a \emph{training sample}
$\Dn \defeq \{(\vX_1,\Y_1),\ldots,(\vX_n,\Y_n)\}$
of $n \in \setN$ pairs $(\vX_i,\Y_i) \in \setX\times\setR$
independently sampled from $\mu$ (in short $\Dn \sim \mu^n$),
such that the prediction error,
$\loss(\Y,f(\vX))$, is ``small'' on a new instance $(\vX,\Y) \sim \mu$
with respect to $\loss$.

The \emph{risk} of function $f \in \setFXR$
is defined as $\risk(f) \defeq \E[\loss(\Y,f(\vX))]$
and the cost of using a fixed function $f$
is measured by the \emph{excess risk}
with respect to $\setFr$:
\[ \erisk(f,\setFr) \defeq \risk(f) - \min_{g\in\setFr}\risk(g) \,. \]
We also use the notation $\erisk(f,g) \defeq \risk(f) - \risk(g)$
for any $f, g \in \setFXR$, hence we can write
$\erisk(f,\setFr) = \erisk(f,\fr)$
for any $\fr \in \argmin_{g\in\setFr}\risk(g)$.%
\footnote{A straightforward limiting argument can be used
          if the minimums are not attained for $\fr$.}

An \emph{estimator} $\hn$ is a
sequence of functions $h \defeq (\hn)_{n\in \setN}$, where
$\hn : (\setX \times \setR)^n \to \setFXR$
maps the data $\Dn$ to an estimate $\fe \defeq \hn(\Dn)$.
These estimates lie within some \emph{hypothesis class}
$\setFn \subseteq \setFXR$, that is $\fe \in \setFn$,
where $\setFn$ might depend on the random sample $\Dn$.

Then, for a regression problem specified by $(\loss,\mu,\setFr)$,
the goal of an estimator $\hn$ is to produce estimates which
minimize the excess risk $\erisk(\fe,\setFr)$
with high-probability or in expectation,
where the random event is induced by the random sample $\Dn$
and the possible randomness of the estimator $\hn$.

In this paper, we consider ERM estimators.
Formally, $\fe$ is called an
\emph{$\err$-approximate \mbox{$\pen$-penalized} ERM estimate}
with respect to the class $\setFn$,
in short $\fe \in (\err,\pen)$-ERM($\setFn$),
when $\fe \in \setFn$ and
\begin{equation}
\label{eq:erm}
  \riskn(\fe) + \pen(\fe)
  \le \inf_{f\in\setFn}\riskn(f) + \pen(f) + \err \,,
\end{equation}
where $\riskn(f) \defeq \frac{1}{n}\sum_{i=1}^n\loss(\Y_i,f(\vX_i))$
      is the \emph{empirical risk} of function $f \in \setFXR$,
      \mbox{$\pen : \setFn \to [0,\infty)$} is a \emph{penalty function}
and $\err \ge 0$ is an \emph{error term}.
All $\err$, $\pen$, and $\setFn$ might depend on the sample $\Dn$.
When the penalty function is zero (that is $\pen \equiv 0$),
we simply write $\fe \in \alpha$-ERM($\setFn$).
If both $\err = 0$ and $\pen \equiv 0$, we say $\fe \in$ ERM($\setFn$).

\subsection{Limitations of current methods}
\label{sec:review}

Now we provide a simple regression problem class for which
we are not aware of any technique in the literature
that could provide a tight excess risk bound
up to logarithmic factors for empirical risk minimization.
Consider the following problem set:
\begin{equation*} \begin{split}
  \setMbg
  \defeq \big\{ \dr \,\big|\, & (\vX,\Y) \sim \dr
                          ,\, \vX = [\W\,\,\Z\,\,1]^\T
                              \in \setX \subset \setR^3
                ,\, \\ & \hspace{1mm}
                \W \sim \gaussiandistr(0,1) ,\,
                \P\{\Z = -p\} = 1-p ,\, \P\{\Z = 1-p\} = p ,\, p \in (0,1)
                ,\, \\ & \hspace{1mm}
                          \Y = \fr(\vX) + 1/2 + \xi~\as,\,
                          \fr(\vx) = [0\,\,0\,\,1/2]\vx
                          ,\,\, \vx \in \setX
                          ,\,\, \xi \sim \gaussiandistr(0,1)
         \big\}
  \,,
\end{split} \end{equation*}
where $\setX \defeq \setR \times [-1,1] \times \{1\}$,
$\as$~stands for almost surely,
and $\gaussiandistr(0,1)$ denotes the centered \gaussian distribution
with unit variance.
Furthermore, define the linear function class
\mbox{$\setF = \{f\,|\,f(\vx) = \va^\T\vx,\,\enorm{\va} \le 1/2\}$},
and consider the squared loss $\loss = \losssq$ defined as
\mbox{$\losssq(y,\hy) \defeq |y-\hy|^2$} for all $y, \hy \in \setR$.
Notice that $\fr \in \setF$
and $\fr = \argmin_{f\in\setF}\risk(f)$ for all $\dr \in \setMbg$.
Then, we discuss various techniques from the literature which aim to
bound the ``performance'' of an estimate $\fe \in$ ERM($\setF$).

Because here we have a random design setting,
the results of \citet[Theorems~9.1 and 9.2]{VanDeGeer2000} do not apply.
Moreover, the regression function cannot be represented by the class $\setF$,
so the methods of \citet[Theorem~11.3]{Gyorfi2002},
and \citet[Corollary~1]{GyorfiWegkamp2008}
do not provide an excess risk bound.

As the domain $\setX$ is unbounded,
so does the range of any nonzero function in $\setF$.
Additionally, the squared loss $\losssq$ is neither Lipschitz, nor bounded
on the range of response $\Y$ which is the whole real line $\setR$.
Hence, the techniques including
\citet[Corollary~5.3]{BartlettBousquetMendelson2005},
\citet[Theorem~5.1]{Koltchinskii2011},
\citet[Theorem~6]{MehtaWilliamson2014},
\citet[Theorem~14 with Proposition~4]{GrunwaldMehta2016}
fail to provide any rate for this case.

We also mention the work of \citet[Theorem~3.2]{VaartWellner2011},
which, although works for this setting,
can only provide an $\Ordo(\sqrt{\ln(n)/n})$ rate
for sample size $n$, which
can be improved to $\Ordo(\ln(n)/n)$
by our result (\cref{thm:ermub,thm:cond-mom}).

Next, denote the \emph{kurtosis about the origin} 
by $\Kz[\W] \defeq \E[\W^4]/\E[\W^2]^2$
for some random variable $\W$,
and consider the recent developments
of \citet[Theorem~A]{LecueMendelson2013},
and \citet[Theorem~7]{LiangRakhlinSridharan2015}.
These results need that the kurtosis of the random variables
$\Wf \defeq f(\vX)-\fr(\vX)$ is bounded
for any $f\in\setF \setminus \{\fr\}$.
However, observe that $\Kz[\Wf] \ge (1-p)^2/p$ for any function
$f(\vx) = [0\,\,a\,\,0]\vx$ with $0 < |a| \le 1/2$,
which can be arbitrarily large as $p$ gets close to zero.

Finally, we mention the result of \citet[Theorem~2.2]{Mendelson2014},
which bounds the squared deviation of the ERM estimator $\fe$ and $\fr$,
that is $\E\big[|\fe(\vX)-\fr(\vX)|^2\big]$.
However, as pointed out by \citet[Section~1]{Shamir2015},
this can be arbitrarily smaller than the excess risk $\erisk(\fe,\fr)$
for functions $f_a(\vx) = [0\,\,0\,\,a]\vx$ with $|a| \le 1/2$,
which means that
\mbox{$\erisk(f_a,\fr)/\E\big[|f_a(\vX)-\fr(\vX)|^2\big] \to \infty$}
as $a \to 1/2$.

\subsection{Highlights of our technique}

Our excess risk bound builds on the development of inexact oracle inequalities
for ERM estimators \citep[e.g.,][Theorem~11.5]{Gyorfi2002},
which uses the decomposition
\begin{equation*} \begin{split}
  \erisk(\fe,\fr)
  &= \erisk(\fe,\fr) - c\eriskn(\fe,\fr) + c\eriskn(\fe,\fr)
  \\
  &\le \sup_{f\in\setFn} \big\{\erisk(f,\fr) - c\eriskn(f,\fr)\big\}
     + c \inf_{f\in\setFn}\eriskn(f,\fr)
  \,,
\end{split} \end{equation*}
with some $c > 1$. Then the random variables
$\erisk(f,\fr) - c\eriskn(f,\fr)$ for all~$f \ne \fr$, having a negative bias,
often satisfy a moment condition \cond{mom},
which we cannot guarantee for $c = 1$.
Using this moment condition, we can augment the chaining technique
\citep[e.g.,][Section~3]{Pollard1990} with an extra
initial step, which provides a new $\Ordo(1/n)$ term in the bound.
This new term can be balanced with the (truncated) entropy integral,
so tightening the bound significantly in many cases.

By defining $\fr$ as a reference function (instead of regression function),
the inexact oracle inequalities become exact when $\fr \in \setFn$.
In fact, the notion of exact and inexact becomes meaningless 
as long as the approximation error between $\setFn$ and $\fr$
is kept under control and incorporated into the bound
as it is often done for sieved estimators
\citep[e.g.,][Section~10.3]{VanDeGeer2000}.

To prove the moment condition \cond{mom}
for a reference function $\fr$ and a hypothesis class~$\setFn$,
we use Bernstein's inequality (\cref{thm:Bernstein})
with the Bernstein condition \eqref{eq:Bernstein}.
These tools are standard, however we have to use Bernstein's inequality
for the \subgaussian random variable $\Wf$
so that $\E[\Wf^2]$ appears in the bound.
A naive way to do this would require the kurtosis $\Kz[\Wf]$
to be bounded for all $f \in \setFn$, which cannot be guaranteed
in many cases (\cref{sec:review}).
Hence, we use a truncation technique (\cref{thm:onorm-mom})
that pushes the kurtosis bound $\sup_{f\in\setFn}\Kz[\Wf]$
under a logarithmic transformation,
which can be eliminated
by considering functions $f \in \setFn$
with excess risk $\erisk(f,\fr)$ bounded away from zero.

The Bernstein condition \eqref{eq:Bernstein} has been well-studied
for strongly-convex loss functions
\citep[e.g.,][Lemma~7]{BartlettJordanMcAuliffe2006},
by exploiting that strong-convexity provides
an upper bound to the quadratic function.
However, because it is enough for our technique to consider functions
with excess risk bouded away from zero, we can use the Bernstein condition
for any Lipschitz loss function (\cref{sec:ubound-lip})
by scaling its parameters depending on the sample size $n$
and balancing the appropriate terms in the excess risk bound (\cref{thm:ermub}).
In many cases, this provides an alternative way for deriving
excess risk bounds for other loss functions without using the entropy integral.

\section{Excess risk upper bound}
\label{sec:ubound}

Here we are going to state our excess risk upper bound for ERM estimators.

Our result requires a few conditions to be satisfied by the random variables
$\Zrisk(f,g) \defeq \loss(\Y,f(\vX))-\loss(\Y,g(\vX))$
with $f,g \in \setFXR$,
which are related to the excess risk
through $\erisk(f,g) = \E[\Zrisk(f,g)]$.
Similarly, we use the \emph{empirical excess risk} defined as
$\eriskn(f,g) \defeq \riskn(f) - \riskn(g)
                   = \frac1n \sum_{i=1}^n \Zriski(f,g)$,
where $\Zriski(f,g) \defeq \loss(\Y_i,f(\vX_i)) - \loss(\Y_i,g(\vX_i))$.

We also use \subexponential random variables ($d=1$)
and vectors $\vW \in \setR^d$ characterized by
the $\youngfun$-Orlicz norm with $q \ge 1$ defined as
$\onorm{\vW} \defeq \inf\{B > 0 : \E[\youngfun(\vW/B)] \le 1\}$,
where $\youngfun(\vx) \defeq e^{\enorm{\vx}^q}-1$, $\vx \in \setR^d$,
$\enorm{\cdot}$ is the Euclidean norm,
and $\inf\emptyset = \infty$.
The properties of random vectors $\vW$ with $\onorm{\vW} < \infty$
are reviewed in \cref{sec:random}.

Furthermore, we need covering numbers and entropies.
Let $(\setF,\distF)$ be a nonempty metric space and $\epsilon \ge 0$.
The set $\{f_1,\ldots,f_k\} \subseteq \setF$ is called an (internal)
\emph{$\epsilon$-cover} of $\setF$ under~$\distF$
if the $\distF$-balls of centers $\{f_1,\ldots,f_k\}$ and radius $\epsilon$
cover~$\setF$:
for any \mbox{$f \in \setF$}, $\min_{i=1,\ldots,k}\distF(f,f_i) \le \epsilon$.
The \emph{$\epsilon$-covering number} of $\setF$ under~$\distF$,
denoted by $\NcF(\epsilon,\setF)$,
is the cardinality of the $\epsilon$-cover with the fewest elements:
\begin{equation*}
    \NcF(\epsilon,\setF)
    \defeq \inf\Big\{k \in \setN
                     \,\big|\,
                     \exists f_1, \ldots, f_k \in \setF :
                     \sup_{f\in\setF}\,\min_{i=1,\ldots,k}
                                       \distF(f,f_i) \le \epsilon
               \Big\}
\end{equation*}
with $\inf\emptyset \defeq \infty$.
Further, the \emph{$\epsilon$-entropy} of $\setF$ under $\distF$
is defined as the logarithm of the covering number,
$\HeF(\epsilon,\setF) \defeq \ln\NcF(\epsilon,\setF)$.

Finally, our upper bound on the excess risk of ERM estimates
is the following:

\begin{theorem}
\label{thm:ermub}
  Consider a regression problem $(\loss,\dr,\setFr)$
  with an i.i.d.~training sample \mbox{$\Dn \sim \dr^n$}.
  Let $\setFn \subseteq \setFXR$ be a hypothesis class
  which might depend on the data $\Dn$,
  and let $\fe \in (\err,\pen)$-ERM($\setFn$).
  Further, let $\setFi, \setFo \subseteq \setFXR$ be two function classes,
  where $\setFi$ might depend on $\Dn$, but $\setFo$ might depend on the
  sample only through its size $n$.
  Finally, suppose that the following conditions hold
  for some metric $\distF : \setFo \times \setFo \to [0,\infty)$,
  $\gamma \in (0,1)$, $\fr \in \argmin_{f\in\setFr}\risk(f)$,
  and $\setFor \defeq \{f \in \setFo : \erisk(f,\setFr) > \radFo/n\}$
  for some $\radFo \ge 0$:
  \begin{enumerate}[label=($C_\arabic*$),ref=$C_\arabic*$]
  \itemsep0em
  \item \label{cond:enclose}
        the enclosement $\setFi \subseteq \setFn \subseteq \setFo$
        holds with probability at least $1-\gamma/4$,
  \item \label{cond:bound}
        there exists $\apxB \ge 0$ such that
        $\P\big\{\inf_{f\in\setFi}\eriskn(f,\fr) + \pen(f) + \err
                 \le \apxB\big\} \ge 1-\gamma/4$,
  \item \label{cond:lip}
        there exists $\Lmod : \setX \times \setR \to [0,\infty)$
        and $T \ge 0$ such that
        $\Zrisk(f,g) \le \Lmod(\vX,\Y) \, \distF(f,g)$~\as
        for all $f, g \in \setFor$, and
        $\P\big\{\E[\Lmod(\vX,\Y)]+\frac1n\sum_{i=1}^n\Lmod(\vX_i,\Y_i)
                 \le \procmodB\big\} \ge 1-\gamma/4$,
  \item \label{cond:psi}
        there exists $\oqmod \in (0,\infty]$
        and $q \in \{1,2\}$
        \todob{Would be nice with $q \ge 1$ in general.
               But extending \cref{thm:onorm-sum-indept} is not trivial.}
        such that
        $\onorm{\E[\Zrisk(f,g)] - \Zrisk(f,g)} \le \oqmod \, \distF(f,g)$
        holds for all $f, g \in \setFor$,
  \item \label{cond:mom}
        there exist $\momr \in (0,1]$ and $\moms > 0$ such that
        \[ \sup_{f\in\setFor}\E\big[e^{(\momr/\moms)\E[\Zrisk(f,\fr)]
                                     - (1/\moms)\Zrisk(f,\fr)}\big] \le 1
           \,. \]
  \end{enumerate}
  Then for all $\epsilon \ge \delta \ge 0$,
  we have with probability at least $1 - \gamma$ that
  \begin{equation*}
    \erisk(\fe,\setFr)
    \le \frac1\momr 
        \bigg( \hspace{-1mm}
               \moms
               \frac{\HeF\big(\epsilon,\setFor\hspace{-0.5mm}\big)
               \hspace{-1mm}+\hspace{-1mm}
               \ln(\frac4\gamma)}{n}
             + \frac{32\oqmod}{\sqrt{n}}
               \hspace{-1mm}
               \int_{\delta}^{\epsilon} \hspace{-2mm}
               \sqrt[q]{2\HeF\big(z,\setFor\hspace{-0.5mm}\big) 
                        \hspace{-1mm}
                        + \hspace{-1mm}
                        \ln\hspace{-1mm}\Big(\frac{32\epsilon}
                                                  {z\gamma}\Big)}
               dz
             + 8\delta\procmodB + \apxB
               \hspace{-1mm}
        \bigg)
        + \frac{\radFo}{n}
    .
  \end{equation*}
  Furthermore, the result holds without \cond{psi}, that is
  using $S = \infty$, $\epsilon = \delta$, and $\infty\cdot0 = 0$.
\end{theorem}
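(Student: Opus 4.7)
The approach is to reduce the claim to a uniform one-sided deviation bound $\sup_{f \in \setFor} [\momr\,\erisk(f,\fr) - \eriskn(f,\fr)] \le \Rtn$ and then apply it at $\fe$. On the intersection of the three high-probability events guaranteed by \cond{enclose}--\cond{lip}, which occurs with probability at least $1 - 3\gamma/4$ by union bound, the ERM inequality~\eqref{eq:erm} combined with $\setFi \subseteq \setFn$ and \cond{bound} yields $\eriskn(\fe,\fr) + \pen(\fe) + \err \le \apxB$. I would then split on whether $\fe \in \setFor$: if not, then $\erisk(\fe,\setFr) \le \radFo/n$ by the very definition of $\setFor$; otherwise, the uniform bound applied at $\fe$ gives $\momr\,\erisk(\fe,\fr) \le \apxB + \Rtn$, and dividing by $\momr$ and adding the nonnegative $\radFo/n$ term produces the stated inequality.

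The uniform bound is established by chaining, with the coarsest scale handled by \cond{mom} and the finer scales by \cond{psi}. Fix nested $\epsilon_k$-covers $\Nc_k$ of $(\setFor,\distF)$ with $\epsilon_k = \epsilon \cdot 2^{-k}$ for $k = 0,1,\ldots,K$, terminating at $\epsilon_K \approx \delta$, and let $\pi_k(f) \in \Nc_k$ denote the closest element to $f$. Telescoping
\[
  \momr\,\erisk(f,\fr) - \eriskn(f,\fr)
  = \bigl[\momr\,\erisk(\pi_0 f,\fr) - \eriskn(\pi_0 f,\fr)\bigr]
  + \sum_{k=1}^{K}\bigl[\momr\,\erisk(\pi_k f,\pi_{k-1}f) - \eriskn(\pi_k f,\pi_{k-1}f)\bigr]
  + \textrm{tail},
\]
I would bound each piece separately. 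For the initial term, \cond{mom} together with the i.i.d.\ assumption gives $\E[\exp((\momr/\moms) n\,\erisk(g,\fr) - (1/\moms)\sum_i \Zriski(g,\fr))] \le 1$ for each $g \in \Nc_0$, so Markov's inequality yields $\P\{\momr\,\erisk(g,\fr) - \eriskn(g,\fr) \ge t\} \le e^{-nt/\moms}$; selecting $t = (\moms/n)[\HeF(\epsilon,\setFor) + \ln(4/\gamma)]$ (absorbed into a sub-budget of the remaining $\gamma/4$) and union bounding over $\Nc_0$ produces the first term of the stated bound.

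For each chaining increment I would split $\momr\,\erisk - \eriskn = (\momr - 1)\,\erisk + (\erisk - \eriskn)$. The deterministic piece $(\momr - 1)\,\erisk(\pi_k f,\pi_{k-1}f)$ is controlled via \cond{lip} by $|\momr-1|\,\E[\Lmod](\epsilon_k + \epsilon_{k-1})$, and together with the analogous a.s.\ control of the tail beyond level $K$ (using $\sum_{k > K}\epsilon_k \le 2\delta$ and the $\procmodB$-bound in \cond{lip}) these deterministic terms sum to at most $8\delta\procmodB$. For the centered stochastic piece, \cond{psi} combined with an Orlicz sum inequality for i.i.d.\ centered random variables (cf.\ \cref{thm:onorm-sum-indept}) gives $\onorm{\sum_i[\erisk - \Zriski](\pi_k f,\pi_{k-1}f)} \le c_q\sqrt n\,\oqmod\,(\epsilon_k + \epsilon_{k-1})$. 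Allocating per-level failure probabilities geometrically as $\gamma_k \propto \gamma\,\epsilon_k/\epsilon$ across levels and union bounding over the at most $|\Nc_k|\cdot|\Nc_{k-1}|$ increment pairs, the $\youngfun$-Orlicz tail $\P\{|X| \ge u\} \le 2\exp(-(u/\onorm{X})^q)$ translates into a per-level contribution of order $(\oqmod\epsilon_{k-1}/\sqrt n)\sqrt[q]{\HeF(\epsilon_k,\setFor) + \ln(\epsilon/(\epsilon_k\gamma))}$; summing over $k$ and recognizing the result as a Riemann approximation of the Dudley-type entropy integral produces the $(32\oqmod/\sqrt n)\int_\delta^\epsilon \sqrt[q]{2\HeF(z,\setFor) + \ln(32\epsilon/(z\gamma))}\,dz$ term. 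The variant without \cond{psi} follows immediately from the initial-step bound alone by setting $\oqmod = \infty$ and $\epsilon = \delta$, so the chaining integral collapses under the convention $\infty \cdot 0 = 0$.

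The main obstacle is handling the subexponential ($q = 1$) and subgaussian ($q = 2$) cases uniformly in the chaining step: the Orlicz sum inequality for $q = 1$ carries an extra maximum-type term absent for $q = 2$, and the geometric allocation of per-level failure probabilities $\gamma_k$ must be tuned so that, after converting the accumulated sum across levels into the Riemann integral, the stated $\sqrt[q]{2\HeF(z,\setFor) + \ln(32\epsilon/(z\gamma))}$ integrand emerges uniformly in $q \in \{1,2\}$, with the constants $32$ and $8$ falling out of careful bookkeeping of the projection distances $\distF(\pi_k f,\pi_{k-1}f) \le \epsilon_k + \epsilon_{k-1}$ and of the probability budget $\gamma/4$ shared between the initial and chaining steps.
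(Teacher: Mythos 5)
There is a genuine gap in the chaining step: you chain the \emph{uncentered} process $\momr\,\erisk - \eriskn$ throughout and split each increment as $(\momr-1)\,\erisk + (\erisk - \eriskn)$, treating the first piece as a deterministic quantity controlled by \cond{lip}. But when you sum $(\momr-1)\,\erisk(\pi_k f,\pi_{k-1}f)$ over $k=1,\ldots,K$, it telescopes to $(\momr-1)\,\erisk(\pi_K f,\pi_0 f)$, whose sign you cannot control, and whose magnitude is only bounded by $(1-\momr)\,\E[\Lmod]\,\distF(\pi_K f,\pi_0 f)$, which is of order $\epsilon$ (not $\delta$). Your claim that ``these deterministic terms sum to at most $8\delta\procmodB$'' is therefore false: the $8\delta\procmodB$ budget can only absorb the tail beyond level $K$, where the remaining distances $\distF(f,\pi_K f)$ really are $\lesssim\delta$. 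The deterministic drift accumulated along the chain is an extra unaccounted term of order $\epsilon\cdot\procmodB$, and the stated bound would need an additional term of that size, which it does not have.

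The paper's proof (\cref{thm:eproc-sup-n}) dodges this exactly by changing the process \emph{after} the coarsest-scale step. Writing $\procl(f,\W) = r\,\erisk(f,\fr) - \eriskn(f,\fr)$ and $\proc(f,\W) = \procl(f,\W) - \E[\procl(f,\W)] = \erisk(f,\fr) - \eriskn(f,\fr)$, it picks for each net point not the net point itself as the anchor, but the point $\maxgf \in \argmax_{g:\distF(g,\gf)\le\epsilon}\E[\procl(g,\W)]$ maximizing the (nonpositive) expectation over the $\epsilon$-ball; then the decomposition
\[
\procl(f,\W) = \procl(\maxgf,\W) + \bigl(\proc(f,\W) - \proc(\maxgf,\W)\bigr) + \E\bigl[\procl(f,\W) - \procl(\maxgf,\W)\bigr]
\]
has a nonpositive last term that can be dropped. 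The first term is handled by \cref{thm:eproc-max-n} over the finite set $\setF_\epsilon^*=\{\maxgf\}$, and the remaining chaining (\cref{thm:eproc-sup-sqrtn}) runs entirely on the \emph{centered} process $\proc$, so no $(\momr-1)\,\erisk$ drift ever appears. This anchor-maximization idea is the missing ingredient in your proposal; your reduction to the supremal bound, the use of \cond{mom} with Markov and a union bound at the coarsest net, and the high-level two-stage structure are otherwise sound and mirror the paper.
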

The proof of \cref{thm:ermub} is presented in \cref{sec:uboundprf}.

We point out that \cond{enclose} disappears when one sets
$\setFi = \setFn = \setFo$ as it is usually done in the literature.
However, this is an implicit assumption that either $\E\vX$ is small
enough to be negligible (i.e., $\E\vX \approx \vzero$),
or equivalently the estimator knows the value of $\E\vX$.
By choosing the sets $\setFi, \setFn, \setFo$ to be slightly different,
\cref{thm:ermub} covers the practical case when an estimator approximates
$\E\vX$ and $\E\Y$ by their empirical versions
$\vXbar \defeq \frac1n\sum_{i=1}^n\vX_i$
and $\Ybar \defeq \frac1n\sum_{i=1}^n\Y_i$, respectively,
so uses a data-dependent hypothesis class $\setFn$.

Notice that if $\fr \in \setFi$, then \cond{bound} reduces to bounding the
penalty and error terms, that is proving
$\inf_{f\in\setFi}\pen(f) + \err \le \apxB$ with
probability at least $1-\gamma/4$.
When $\fe \in$ ERM($\setFn$), which is a usual setting in the literature,
\cond{bound} is immediately satisfied by $\apxB = 0$.
In this case \cref{thm:ermub} is an \emph{exact oracle inequality}
\citep[e.g.,][Eq.~1.1]{LecueMendelson2013}.

Furthermore, observe that \cref{thm:ermub} uses metric $\distF$ for the
entropy $\HeF(\cdot,\setFor)$, which is related to the loss function $\loss$
through \cond{lip} and \cond{psi}. This allows us to apply the result to
estimates $\fe$ with unbounded magnitude,
which can be parametrized by some bounded space.
In such case $\distF$ is defined on the bounded parameter space
which keeps the entropy finite.

In the following sections we provide a detailed analysis
for the \emph{moment condition} \cond{mom},
showing that it holds for many practical settings
and loss functions besides the squared loss.
We note that \cond{mom} is very similar to
the \emph{stochastic mixability condition} of
\citet[Section~2.1]{MehtaWilliamson2014},
which is equivalent to \cond{mom} with $\momr = 0$ and $\radFo = 0$.

Finally, we mention that if the conditions of \cref{thm:ermub} hold
for all $\gamma \in (0,1)$,
we can transform the result to an expected excess risk bound.
To see this, suppose that $\P\{\erisk(\fe,\setFr)
                           \le b + \frac{c\,\ln^m(1/\gamma)}{n}\} \ge 1-\gamma$
holds for all $\gamma \in (0,1)$, some $b, c > 0$, and some $m \in \setN$.
Then setting $\gamma = e^{-(n\,t/c)^{1/m}}$ for any $t > 0$, we get
\begin{equation} \begin{split}
\label{eq:experisk}
  \E[\erisk(\fe,\setFr)] - b
 &\le \E\big[\max\{0,\erisk(\fe,\setFr)-b\}\big]
  \\
   &= \int_0^\infty \P\{\erisk(\fe,\setFr) > b+t\} \, dt
  \le \int_0^\infty e^{-(n\,t/c)^{1/m}} \, dt
    = \frac{m!\,c}{n}
  % int e^{-(nt/c)^{1/m}} dt = -(cm/n) IG(m,(nt/c)^{1/m}),
  % where IG(.,.) is the incomplete gamma function
  \,,
\end{split} \end{equation}
where the expectation is taken with respect to the random sample $\Dn$
and the potential extra randomness of the estimator $\hn$
producing $\fe = \hn(\Dn)$.

\subsection{Bounded losses}
\label{sec:ubound-bnd}

We start with a simple case when the loss function $\loss$ is bounded,
which implies that $|\Zrisk(f,\fr)| \le B$ holds for some $B > 0$.
Now notice that the random variable in the exponent of \cond{mom},
that is $r\E[\Zrisk(f,\fr)] - \Zrisk(f,\fr)$,
has a negative expected value $(r-1)\E[\Zrisk] < 0$.
Furthermore, $\E[\Zrisk(f,\fr)]$ is bounded away from zero
as $\E[\Zrisk(f,\fr)] > \radFo/n$ by the definition of $\setFor$.
Then, combining these observations
with Hoeffding's lemma, we get the following result:
\begin{lemma}
  Suppose that $\sup_{f\in\setFor}|\Zrisk(f,\fr)| \le B$
  \as~holds for some $B > 0$.
  Then $(\dr,\loss,\setFor,\fr)$ satisfies \cond{mom}
  with any $r \in (0,1)$ and $\moms \ge nB^2/(2(1-r)\radFo)$.
\end{lemma}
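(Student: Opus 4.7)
\medskip

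The plan is to recognize that under the boundedness assumption the argument reduces to a one-line computation with Hoeffding's lemma once we separate the deterministic and stochastic parts of the exponent.

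First, fix $f\in\setFor$ and write $W \defeq \Zrisk(f,\fr)$, together with the centered version $X \defeq \E[W] - W$. By hypothesis, $|W|\le B$ almost surely, so $X$ is a zero-mean random variable taking values in an interval of length at most $2B$. Hoeffding's lemma therefore gives
\[
  \E\!\left[e^{(1/\moms)\,X}\right] \;\le\; e^{B^2/(2\moms^2)}
\]
for every $\moms>0$.

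Next, I would rewrite the exponent appearing in \cond{mom} to isolate the deterministic part. Using $W=\E[W]-X$,
\[
  \frac{\momr}{\moms}\,\E[W] \;-\; \frac{1}{\moms}\,W
  \;=\; -\frac{1-\momr}{\moms}\,\E[W] \;+\; \frac{1}{\moms}\,X,
\]
so that
\[
  \E\!\left[e^{(\momr/\moms)\E[W] - (1/\moms)W}\right]
  \;\le\; \exp\!\left(\frac{B^2}{2\moms^2} - \frac{1-\momr}{\moms}\,\E[W]\right).
\]
For the right-hand side to be at most $1$, it suffices that $B^2/(2\moms)\le (1-\momr)\,\E[W]$. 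Since $f\in\setFor$, the definition gives $\E[W] = \erisk(f,\fr) > \radFo/n$, so it is enough to ensure $B^2/(2\moms) \le (1-\momr)\,\radFo/n$, i.e.\ $\moms\ge nB^2/(2(1-\momr)\radFo)$, which is precisely the hypothesis on $\moms$.

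Taking the supremum over $f\in\setFor$ at the end does not affect the argument because the bound on $\moms$ was established uniformly (it only used the universal lower bound $\radFo/n$ on $\E[W]$), so \cond{mom} holds with the given $\momr$ and $\moms$. There is no real obstacle here: the only thing to be a little careful about is that the strict inequality $\E[\Zrisk(f,\fr)] > \radFo/n$ from the definition of $\setFor$ is what lets us absorb the non-strict requirement $\moms\ge nB^2/(2(1-\momr)\radFo)$ without loss.
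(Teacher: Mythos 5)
Your proof is correct and follows essentially the same route as the paper: fix $f$, apply Hoeffding's lemma to the bounded centered variable $\E[\Zrisk(f,\fr)]-\Zrisk(f,\fr)$, and then use $\E[\Zrisk(f,\fr)]>\radFo/n$ together with the lower bound on $\moms$ to make the resulting exponent nonpositive. The paper arranges the final inequality slightly differently (factoring out $\E[\Zrisk(f,\fr)]$ from the whole expression), but the mechanism and the point where $\radFo/n$ enters are identical.
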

\begin{proof}
  Fix any $f \in \setFor$, and set $\Zf \defeq \Zrisk(f,\fr)$.
  Then, apply Hoeffding's lemma to the bounded random variable $\Zf$
  to get
  \begin{equation*}
    \E\big[e^{(r/\moms)\E[\Zf] - (1/\moms)\Zf}\big]
      = e^{(r-1)\E[\Zf]/\moms} \, \E\big[e^{(\E[\Zf]-\Zf)/\moms}\big]
    \le e^{(r-1)\E[\Zf]/\moms + B^2/(2\moms^2)}
    \le 1
    \,,
  \end{equation*}
  as $\frac{(r-1)\E[\Zf]}{\moms} + \frac{B^2}{2\moms^2}
        < \E[\Zf]\big(\frac{(r-1)}{\moms} + \frac{n}{2\radFo\moms^2}\big)
      \le 0$
  by $1 < \frac{n}{\radFo}\E[\Zf]$ due to the definiton of $\setFor$
  and $\E[\Zf] = \erisk(f,\fr)$.
\end{proof}

However, as $\moms$ scales with $n/\radFo$,
one should choose $\radFo > 0$ to balance the appropriate terms
of \cref{thm:ermub}. This can be achieved by setting
$\radFo = \Theta\big(B\sqrt{n\,\HeF(\epsilon,\setFor)}\big)$
which balances $\frac{\moms}{n}\HeF(\epsilon,\setFor)
                = \Theta\big(\frac{B^2}{\radFo}\HeF(\epsilon,\setFor)\big)$
with $\radFo/n$.
This way the bound of \cref{thm:ermub} scales with
$B\sqrt{\HeF(\epsilon,\setFor)/n}$,
which cannot be improved in general
\citep[e.g.,][Section~5]{BartlettBousquetMendelson2005}.

\subsection{Unbounded losses}
\label{sec:ubound-unbnd}

We show that the moment condition \cond{mom} is often implied
by the \emph{Bernstein condition}
\citep[e.g.,][Definition~1.2]{LecueMendelson2013},
which is said to be satisfied by $(\dr,\loss,\setF,\fr)$
if there exists $\Bc > 0$ such that for all $f \in \setF$, we have
\begin{equation}
\label{eq:Bernstein}
  \E\big[\Wf^2\big] \le \Bc \, \E\big[\Zrisk(f,\fr)\big]
  \,, \quad
  \Wf \defeq f(\vX)-\fr(\vX)
  \,.
\end{equation}

Then, \cref{thm:cond-mom} shows that the Bernstein condition implies
\cond{mom} when $\Zrisk(f,\fr)$ can be decomposed
to the \subexponential random variables, $\Zrisk(f,\fr)/\Wf$ and $\Wf$.
\begin{lemma}
\label{thm:cond-mom}
  Let $\radFo > 0$
  and suppose that $(\dr,\loss,\setFor,\fr)$ satisfies
  the Bernstein condition~\eqref{eq:Bernstein} with some $\Bc > 0$.
  Furthermore,
  suppose that $\sup_{f\in\setFor}\onorm[p]{\Zrisk(f,\fr)/\Wf} \le R$
  and $\sup_{f\in\setFor}\onorm{\Wf} \le B$
  with some $B, R > 0$ and $p, q \ge 1$ having $\frac1p+\frac1q \le 1$.
  Then $(\dr,\loss,\setFor,\fr)$ satisfies \cond{mom}
  for any $\momr \in (0,1)$ and
  $\moms \ge 4t\Kn^{2/\min\{p,q\}}
             \max\big\{\frac{4R^2\Bc}{(t-1)(1-r)},BR\big\}$,
  where $t > 1$ is arbitrary, and
  $\Kn \defeq 4\ln\big(4\min\big\{\sup_{f\in\setFor}\Kz^{1/4}[\Wf],
                                  \frac{nBR}{\radFo}\big\}\big)$.
\end{lemma}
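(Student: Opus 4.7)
The plan is to reduce condition \cond{mom} to a Bernstein-type MGF inequality. Fix $f \in \setFor$ and set $\Zf \defeq \Zrisk(f,\fr)$, $a \defeq \E[\Zf] = \erisk(f,\fr)$, and $U \defeq \Zf/\Wf$. Taking $\lambda = 1/\moms$, the inequality
\begin{equation*}
  \E\big[e^{(\momr/\moms)a - (1/\moms)\Zf}\big] \le 1
\end{equation*}
is equivalent to $\ln\E[e^{\lambda(a-\Zf)}] \le (1-\momr)\,\lambda\, a$, so it suffices to upper bound the MGF of the centered variable $Y \defeq a - \Zf$.

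I would first establish a Bernstein MGF bound. Since $\Zf = U\Wf$ is a product of \subexponential variables with $\onorm[p]{U} \le R$ and $\onorm{\Wf} \le B$, the product rule for Orlicz norms (using $\tfrac1p+\tfrac1q \le 1$) yields $\onorm[r']{Y} \le c_0 BR$ for an absolute constant $c_0$ and $\tfrac1{r'} = \tfrac1p+\tfrac1q$. A standard Bernstein MGF estimate then gives
\begin{equation*}
  \ln \E\big[e^{\lambda Y}\big]
  \;\le\; \frac{\lambda^2\,\E[Y^2]}{2\,(1 - c_1\,\lambda\, BR)}
\end{equation*}
for $\lambda < 1/(c_1 BR)$. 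Requiring $\moms \ge 2 c_1 BR$ keeps the denominator above $1/2$, so the MGF is at most $\lambda^2\,\E[\Zf^2]$, using $\E[Y^2] \le \E[\Zf^2]$.

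The crux is to bound $\E[\Zf^2]$ by $\E[\Zf]$ up to the advertised polylogarithmic factor \emph{without} paying the full $\Kz[\Wf]$. Assume without loss of generality that $p \le q$, and introduce the truncation level $M \defeq R\,K_n^{1/p}/c_2$ with event $A \defeq \{|U| \le M\}$. On $A$ we have $\Zf^2 \le M^2 \Wf^2$, so the Bernstein condition~\eqref{eq:Bernstein} gives
\begin{equation*}
  \E\big[\Zf^2\,\ind_A\big]
  \;\le\; M^2\,\Bc\,\E[\Zf]
  \;\le\; c_3\,R^2\,\Bc\,K_n^{2/p}\,\E[\Zf].
\end{equation*}
On $A^c$, Cauchy--Schwarz combined with $\E[\Zf^4] \le c_4(BR)^4$ (from $\onorm[r']{\Zf} \le c_0 BR$) and the \subexponential tail $\P(A^c) \le 2\,e^{-K_n/c_5}$ gives $\E[\Zf^2\,\ind_{A^c}] \le c_6(BR)^2\,e^{-K_n/(2c_5)}$. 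The two choices inside $K_n = 4\ln\!\big(4\min\{\Kz^{1/4}[\Wf],\,nBR/\radFo\}\big)$ enter here: either the tail is directly dominated by $\radFo/n \le \E[\Zf]$ (using $K_n \ge 4\ln(4nBR/\radFo)$), or it is absorbed via $\E[\Wf^4] \le \Kz[\Wf]\,\E[\Wf^2]^2$ and the Bernstein condition, whichever choice produces the smaller $K_n$. Invoking \cref{thm:onorm-mom} packages the truncation into a clean bound $\E[\Zf^2] \le c_7\,R^2\,\Bc\,K_n^{2/\min\{p,q\}}\,\E[\Zf]$.

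Plugging the variance bound back into the Bernstein MGF estimate yields $\ln\E[e^{\lambda Y}] \le c_7\,\lambda^2 R^2 \Bc K_n^{2/\min\{p,q\}}\,\E[\Zf]$, which is at most $(1-\momr)\,\lambda\,\E[\Zf]$ whenever $\moms = 1/\lambda \ge c_7\,R^2\,\Bc\,K_n^{2/\min\{p,q\}}/(1-\momr)$. Combining with $\moms \ge 2c_1 BR$ from the earlier step and inserting a free parameter $t > 1$ (which quantifies how tightly one saturates the Bernstein denominator $1 - c_1\lambda BR \ge 1 - 1/t$) reproduces the stated threshold $\moms \ge 4t\,K_n^{2/\min\{p,q\}}\,\max\!\big\{\tfrac{4R^2\Bc}{(t-1)(1-\momr)},\,BR\big\}$. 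The main obstacle lies in the third step: one has to thread the needle between making $M$ large enough that $\E[\Zf^2\,\ind_{A^c}]$ is negligible (which forces $K_n$ to grow at least logarithmically in $nBR/\radFo$ or $\Kz^{1/4}[\Wf]$) and keeping $M$ small enough that the $M^2 = R^2 K_n^{2/p}$ prefactor remains polylogarithmic, which is exactly the trick of pushing the kurtosis under a logarithm via \cref{thm:onorm-mom}.
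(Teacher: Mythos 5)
Your high-level architecture is correct and you identify the right ingredients: the Orlicz product structure, the truncation that trades the full kurtosis for a logarithm, the two choices hidden in $\Kn$ (either $\Kz^{1/4}[\Wf]$ or $nBR/\radFo$), \cref{thm:onorm-mom}, Bernstein's lemma, and the final use of the Bernstein condition to replace $\E[\Wf^2]$ by $\Bc\,\E[\Zf]$. This is essentially the same route as the paper. However, there is one genuine gap in the logical order of your steps.

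The problematic step is the claim that the Orlicz bound $\onorm[r']{Y} \le c_0 BR$ alone yields a ``standard Bernstein MGF estimate'' of the form
\begin{equation*}
  \ln \E\big[e^{\lambda Y}\big] \le \frac{\lambda^2\,\E[Y^2]}{2(1-c_1\lambda BR)} \,.
\end{equation*}
Bernstein's lemma (\cref{thm:Bernstein}) requires a moment-growth condition of the form $\E[|Y|^k] \le (k!/2)\,v^2\,c^{k-2}$ for all $k \ge 2$, and the $v^2$ that then appears in the denominator-protected MGF bound is exactly the $v^2$ from that moment inequality. From a bare Orlicz bound on $Y$ you only get such a condition with $v^2 \asymp (BR)^2$, not $v^2 = \E[Y^2]$; a \subexponential variable can have $\E[Y^2]$ much smaller than $\onorm[1]{Y}^2$, and there is no free upgrade. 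You cannot first write the MGF bound ``with $\E[Y^2]$'' and then separately replace $\E[Y^2]$ by the truncated bound $\E[\Zf^2] \le c_7 R^2\Bc K_n^{2/\min\{p,q\}}\E[\Zf]$: the two steps must be fused. Getting a genuine second-moment factor into the Bernstein $v^2$ requires carrying the truncation through every moment $\E[|\Zf|^k]$, which is precisely what \cref{thm:onorm-mom} delivers: $\E[|\W\Z|^k] \le (k!/2)\,\E[\W^2]\,(2cR)^2(c^2 BR)^{k-2}$ for all $k \ge 2$. The paper then applies Bernstein's lemma with $v^2 = 16\Kn^{2z}\E[\Wf^2]R^2$ and $c = 4\Kn^{2z}BR$ directly, and only afterwards invokes the Bernstein condition to turn $\E[\Wf^2]$ into $\Bc\,\E[\Zf]$. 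A second, minor remark: the paper first uses the definition of $\setFor$ to show $\E[\Wf^2] \ge (\radFo/(2nR))^2$ and hence $\Kz[\Wf] \le (4nBR/\radFo)^4$, which is how the $\min$ inside $\Kn$ is justified; your ``two choices'' argument gestures at this but you should make the lower bound on $\E[\Wf^2]$ explicit rather than bounding $\P(A^c)$ by $\radFo/n \le \E[\Zf]$ directly, since it is $\Kn$ itself (fed into \cref{thm:onorm-mom}) that must carry this information, not a leftover tail term.
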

\begin{proof}
  Let $\Zf \defeq \Zrisk(f,\fr)$,
  $\Vf \defeq \Zrisk(f,\fr)/\Wf$, and fix any $f \in \setFor$.
  Then by the definition of $\setFor$, the Cauchy-Schwartz inequality,
  and \crefi{thm:onorm}{mom} with $s = 2$, we get
  \[ \frac{\radFo}{n}
             \le \E[\Zf] \le \E\big[|\Vf \Wf|\big]
             \le \E\big[\Vf^2\big]^{\frac12}
                 \, \E\big[\Wf^2\big]^{\frac12}
             \le 2 R \, \E\big[\Wf^2\big]^{\frac12}
     \,, \]
  which implies $\E\big[\Wf^2\big] \ge \big(\radFo / (2nR)\big)^2$.
  Combining this with
  \crefi{thm:onorm}{mom} for $s = 4$, we obtain
  % 2*(4/e)^4 < 10 < 2^4
  $\Kz[\Wf] \le \E\big[\Wf^4\big]\big(\radFo/(2nR)\big)^{-4}
            \le (4nBR/\radFo)^4$.
  Then, by using \cref{thm:onorm-mom} with
  $\W \leftarrow \Wf$, $\Z \leftarrow \Vf$, and
  $2\ln(64\,\Kz[\Wf]) \le 2\Kn$, $z \defeq \min\{p,q\}^{-1}$,
  we get for all $2 \le k \in \setN$ that
  \begin{equation*} \begin{split}
    \E\big[|\Zf|^k\big]
      = \E\big[|\Vf \Wf|^k\big]
    \le (k!/2) \big(16 \Kn^{2z} \, \E\big[\Wf^2\big] R^2\big)
               \big(4\Kn^{2z} BR\big)^{k-2}
    \,.
  \end{split} \end{equation*}
  Hence, the conditions of Bernstein's lemma (\cref{thm:Bernstein})
  hold for $\Zf$ and $\theta \ge 4t \Kn^{2z} B R$ with any $t > 1$,
  so by $(1-4\Kn^{2z} B R / \theta)^{-1} \le t/(t-1)$, we obtain
  \begin{equation*} \begin{split}
    \E\big[e^{(\momr/\moms)\E[\Zf]-(1/\moms)\Zf}\big]
    &= e^{(\momr-1)\E[\Zf]/\moms}\,\E\big[e^{(\E[\Zf]-\Zf)/\moms}\big]
    \\
    &\le \exp\Big(\frac{\momr-1}{\moms}\E[\Zf]
                  + \frac{16t\Kn^{2z} R^2}{(t-1)\moms^2}\E[\Wf^2]\Big)
    \\
    &\le \exp\bigg(\frac{\E[\Zf]}{\theta}
                   \Big(\momr-1 + \frac{16t\Kn^{2z} R^2 \Bc}
                                       {(t-1)\moms}\Big)\bigg)
     \le 1
    \,,
  \end{split} \end{equation*}
  where in the last step we applied the Bernstein condition
  \eqref{eq:Bernstein} also implying
  $\E[\Zf] \ge 0$ for all $f \in \setFor$,
  and used $\moms \ge \frac{16t\Kn^{2z} R^2 \Bc}{(t-1)(1-r)}$.
\end{proof}

Notice that \cref{thm:cond-mom} ``splits'' the \subexponential property
of the random variable $\Zrisk(f,\fr)$ between $\Zrisk(f,\fr)/\Wf$ and $\Wf$.
For the squared loss $\loss = \losssq$, using $p = q = 2$
provides the \subgaussian setting \citep{LecueMendelson2013}.
Furthermore, when the random variable $\Wf$ is bounded,
that is $|\Wf| \le B$~\as, we have $\onorm{\Wf} \le 2B$ for all $q \ge 1$,
hence we can use $p = 1$ and $q = \infty$ to cover the setting of
uniformly bounded functions and \subexponential noise with the squared loss
\citep[Section~9.2]{VanDeGeer2000}.
For bounded problems
\citep[e.g.,][]{BartlettMendelson2006,Koltchinskii2011},
when $\Zrisk(f,\fr)$ is bounded,
we can use $p = q = \infty$, and eliminate the $\Kn$ term completely.

\cref{thm:cond-mom} also shows that even in the worst case
we can set the leading constant in the bound of \cref{thm:ermub}
as $\theta = \Ordo\big(\Kn^2\,\Bc\big)$
with $\Kn^2 = \Ordo\big(\ln^2(n/\radFo)\big)$,
which scales logarithmically in the sample size $n$,
and depends on the regression parameters only through
the Bernstein condition \eqref{eq:Bernstein}
of $(\mu,\loss,\setFor,\fr)$.
In the following sections we investigate this dependence
for a few popular regression settings.

\subsubsection{Lipschitz losses}
\label{sec:ubound-lip}

Observe that if $\sup_{f\in\setFor}\onorm{\Wf} \le B$ holds
for some $B > 0$,
then the Bernstein condition is always satisfied for the function class
$\setFor$ with any $\radFo > 0$ by $\Bc = 2n B^2 / \radFo$.
To see this, use \crefi{thm:onorm}{mom} with $s = 2$,
and $1 < (n/\radFo)\erisk(f,\setFr)$ for any $f \in \setFor$
due to the definition of $\setFor$, to obtain
\begin{equation*}
  \E\big[\Wf^2\big] \le 2B^2
                      < \frac{2n B^2}{\radFo} \erisk(f,\setFr)
                      = \frac{2n B^2}{\radFo} \E[\Zrisk(f,\fr)]
  \,.
\end{equation*}

When the loss function $\loss$ is $R$-Lipschitz in its second argument
with some $R > 0$, that is
$|\loss(y,\hy_1)-\loss(y,\hy_2)| \le R\,|\hy_1-\hy_2|$
for all $y, \hy_1, \hy_2 \in \setR$,
we clearly have $|\Zrisk(f,\fr)| \le R\,|\Wf|$ for any $f\in\setF$.
Then, the requirements of \cref{thm:cond-mom} hold
with $R$, $\Bc = 2nB^2/\radFo$, and any $\radFo > 0$,
so we obtain \cond{mom}.

As $\Bc$ and so $\moms$ scale with $n/\radFo$,
this setting is similar to the bounded case (\cref{sec:ubound-bnd}),
so we choose $\radFo > 0$ to balance the appropriate terms
of \cref{thm:ermub}. For this, here we use
$\radFo = \Theta\big(BR\sqrt{n\,\Kn\HeF(\epsilon,\setFor)}\big)$
which balances $\frac{\moms}{n}\HeF(\epsilon,\setFor)
                = \Theta\big(\frac{\Kn(BR)^2}
                                  {\radFo}\HeF(\epsilon,\setFor)\big)$
with $\radFo/n$.
This way the bound of \cref{thm:ermub} scales with
$BR\sqrt{\Kn\HeF(\epsilon,\setFor)/n}$,
which again cannot be improved in general.%
\footnote{For example,
  consider estimating the mean of a standard \gaussian random variable
  through constant functions using the absolute value loss,
  and derive the optimal $\Ordo(n^{-1/2})$ rate by \cref{thm:ermub}.}
%Consider the regression problem of estimating a constant regression function
%$\fr$ in $\setFconst \defeq
%          \big\{f \,|\, f(x) = c,\,x \in \setX,\,c \in [0,1]\big\}$
%on the unit domain $\vX \in \setX = [0,1]$ sampled uniformly
%using a standard \gaussian noise model
%$\Y = \fr(\vX) + \xi$ with $\xi$
%being a centered \gaussian random variable with unit variance.
%Then, for $\distF(f,g) = |f(x)-g(x)|$, we have
%\cond{lip} by $\Lmod(\vX,\Y) \equiv 1$. This is matching for
%the $1$-Lipschitz absolute value loss $\loss = \loss_1$
%with $\loss_1(y,\hy) \defeq |y-\hy|$,
%for which \cref{thm:cond-mom} provides \cond{mom}
%according to the above discussion.
%Hence, as
%$\HeF\big(\epsilon,\setFconst(n^{-1/2})\big)
% = \Ordo\big(\ln(n^{-1/2}/\epsilon)\big)$
%and $\Kn = \Theta(1)$ for constant functions $\setFconst$,
%the bound of \cref{thm:ermub} with $\epsilon = \delta = \Theta(n^{-1/2})$
%and $\radFo = \Theta\big(n^{-1/2}\big)$
%scales by $n^{-1/2}$.
%As this problem is equivalent to estimating
%the mean of a \gaussian random variable from i.i.d.~samples,
%the $\Ordo(n^{-1/2})$ rate is the best possible. 
%\todob{By the central limit theorem?}

\subsubsection{Strongly-convex losses}
\label{sec:ubound-strong}

Now consider a loss function $\loss$, which is
\emph{$\scl$-strongly convex} in its second argument, that is
$ \loss(y,\lambda \hy_1 + (1-\lambda)\hy_2) \le
  \lambda\loss(y,\hy_1) + (1-\lambda)\loss(y,\hy_2)
  - \frac{\scl \lambda (1-\lambda)}{2} |\hy_1-\hy_2|^2 $
holds for all $y, \hy_1, \hy_2 \in \setR$ and $\lambda \in (0,1)$.
Then, if $\risk(\fr) \le \risk\big((f+\fr)/2\big)$ is satisfied
for all $f \in \setFo$,
the Bernstein condition \eqref{eq:Bernstein} holds with $C = 4/\scl$.
To see this, proceed similarly to
\citet[Lemma~7]{BartlettJordanMcAuliffe2006}
by using the strong convexity property of $\loss$
to get for all $f \in \setFo$ that
\begin{equation} \begin{split}
\label{eq:Bernstein-scl}
  \E\big[\Wf^2\big]
  &\le (4/\scl) \Big(\risk(f)+\risk(\fr)-2\risk\big((f+\fr)/2\big)\Big)
  \\
  &\le (4/\scl) \big(\risk(f)-\risk(\fr)\big)
   = (4/\scl) \E\big[\Zrisk(f,\fr)\big]
  \,.
\end{split} \end{equation}
Notice that the condition
$\risk(\fr) \le \inf_{f\in\setF}\risk\big((f+\fr)/2\big)$
is implied by the definition of $\fr$ if
either $\fr$ is a regression function
defined by the reference class $\setFr = \setFXR$,
or when $\fr \in \argmin_{f\in\setF}\risk(f)$
and $\setF$ is midpoint convex,
that is $f,\fr \in \setF$ implies $(f+\fr)/2 \in \setF$.%
%\footnote{One can go even further by localizing midpoint convexity
%          to convex combinations of the functions in $\setF$ and $\fr$,
%          which leads to the star-shaped property
%          \citep[Section~2]{BartlettBousquetMendelson2005}.}

Then, we need $\onorm{\Wf} \le B$
and $\onorm[p]{\Zrisk(f,\fr)/\Wf} \le R < \infty$
to satisfy the requirements of \cref{thm:cond-mom}.
Again, we get the latter for any Lipschitz loss
as in \cref{sec:ubound-lip}. However, here the constant $\Bc$ of the
Bernstein condition \eqref{eq:Bernstein} does not scale with the sample
size $n$, which provides better rates by \cref{thm:ermub}.

One such example is logistic regression with $\Y \in (0,1)$ \as~%
using the cross-entropy loss $\loss = \lossce$ where
$\lossce(y,\hy) \defeq y\ln(y/\hy)
                         + (1-y)\ln\big((1-y)/(1-\hy)\big)$
for $y, \hy \in (0,1)$,
and a hypothesis class 
$\setFo \subseteq \{\setX \to [\lambda,1-\lambda]\}$
with some $\lambda \in (0,1/2)$.
Because the $\lossce$ function is $1/\lambda$-Lipschitz
and $(1-\lambda)^{-2}$-strongly convex in its second argument
over the domain $(0,1) \times [\lambda,1-\lambda]$,
we get the requirements of \cref{thm:cond-mom}
with $p = q = \infty$, $B = 1-\lambda$, $R = 1/\lambda$,
and $\Bc = 4(1-\lambda)^2$ by \eqref{eq:Bernstein-scl}.%
\footnote{To get these values, use
          $\partial_z\lossce(y,z) = \frac{z-y}{z(1-z)}$ and
          $\partial_{zz}\lossce(y,z) = \frac{y}{z^2}+\frac{1-y}{(1-z)^2}$.}
Here notice that $\Bc$ does not scale with the sample size $n$
as for the general Lipschitz case in \cref{sec:ubound-lip},
which allows \cref{thm:ermub} to deliver better rates.

Notice that the squared loss $\loss = \losssq$
is $2$-strongly convex, however, it is not Lipschitz over the real line.
Fortunately, this is not needed for the condition
$\onorm[p]{\Zrisk(f,\fr)/\Wf} \le R$ which holds for some $R > 0$
when $\onorm[p]{\Y-\fr(\vX)}$ is bounded due to the decomposition
$\Zrisk(f,\fr)/\Wf = \Wf+2(\fr(\vX)-\Y)$.

In the following section, we combine these observations
for the squared loss $\loss = \losssq$
with \cref{thm:ermub,thm:cond-mom}
to provide a detailed analysis for linear least squares estimation.

\section{Linear least squares regression}
\label{sec:linear}

Here we provide an analysis for the \emph{linear least squares regression}
setting, which uses the squared loss
and considers ERM estimators over affine hypothesis classes
for regression problems with \subgaussian distributions defined as
\begin{equation*} \begin{split}
\label{eq:subgregprbs}
  \setMls
  \defeq \big\{\mu \,\big|\,
                 (\vX,\Y) \sim \mu ,\,
                 \vX \in \setR^d ,\, \Y \in \setR ,\,
                 \onorm[2]{\vX-\E\vX} \le \BX ,\,
                 \osmallnorm[2]{\Y-\E\Y} \le \BY
         \big\}
  \,,
\end{split} \end{equation*}
with some \subgaussian parameters $\BX, \BY > 0$,
and feature space $\setX \defeq \setR^d$ with dimension $d \in \setN$.
Further, we consider affine reference classes
$\setFr \subseteq \setFlinUc
        \defeq \{\vx \mapsto \va^\T\vx + b,\, \vx \in \setR^d\}$,
and use \emph{least squares estimators} (LSEs),
that is ERM estimators \eqref{eq:erm} using the squared loss
$\loss = \losssq$, over some hypothesis class within affine functions
$\setFn \subseteq \setFlinUc$.

First, we derive a general result (\cref{thm:ermub-constrained})
which is specialized later for the slope constrained (\cref{sec:linear-lasso})
and penalized (\cref{sec:linear-ridge}) settings.
For the general result,
we set the reference class to the set of slope-bounded affine functions
as $\setFr \defeq \setFlin$,
where $\setFlin \defeq \{\vx\mapsto\va^\T\vx+b : \enorm{\va} \le L\}$
for some Lipschitz bound $L > 0$.

Here we only consider penalty functions which are independent of the bias term
satisfying $\frac{\partial}{\partial b}\pen(\vx\mapsto\va^\T\vx+b) = 0$.
Then, we have
\begin{equation*}
  \Ybar-\va^\T\vXbar
  = \argmin_{b\in\setR}
    \frac1n\sum_{i=1}^n|\va^\T\vX_i+b-\Y_i|^2
  + \penLtwo(\vx\mapsto\va^\T\vx+b)
  \,,
\end{equation*}
hence any estimate $\fe \in (\err,\penLtwo)$-ERM($\setFlinUc$)
can be expressed as $\fe(\vx) \defeq \va_n^\T(\vx-\vXbar)+\Ybar$
with some $\va_n \in \setR^d$.
Moreover, because
$\E[\Y-\va^\T\vX] = \argmin_{b\in\setR}\E\big[|\va^\T\vX+b-\Y|^2\big]$,
we can also write any reference function $\fr$ as
$\fr(\vx) \defeq \va_*^\T(\vx-\E\vX)+\E\Y$ with some $\va_* \in \setR^d$.

Now introduce the following linear function classes:
\begin{equation*} \begin{split}
  \setFlinmu(t)
    &\defeq \big\{\vx \mapsto \va^\T(\vx-\E\vX)+b
                  : \enorm{\va} \le L,\, b-\E\Y \in [-t,t]
            \big\}
  \,, \\
  \setFlinDn(t)
    &\defeq \big\{\vx \mapsto \va^\T(\vx-\vXbar)+b
                  : \enorm{\va} \le L,\, b-\Ybar \in [-t,t]
            \big\}
  \,,
\end{split} \end{equation*}
for any $t \ge 0$.
Observe that any reference function satisfies
$\fr \in \argmin_{f\in\setFlinmu(t)}\risk(f)$ for any $t \ge 0$,
and any estimate $\fe \in (\err,\pen)$-ERM($\setFlinUc$)
with Lipschitz bound $\enorm{\va_n} \le L$
satisfies $\fe \in (\err,\pen)$-ERM\big($\setFlinDn(t)$\big)
for all $t \ge 0$.

Because distribution $\mu$ is unknown,
estimators cannot be represented by the class $\setFlinmu(t)$,
just by its data-dependent approximation $\setFlinDn(t)$.
However, as the quantities $\E\vX$ and $\E\Y$ are ``well-approximated''
by $\vXbar$ and $\Ybar$ for \subgaussian random variables $\vX$ and $\Y$,
the function classes $\setFlinmu(t)$ and $\setFlinDn(t)$ are ``close''.
More precisely, one can show (see the proof of \cref{thm:ermub-lin})
that
\mbox{$\setFlinmu(0) \subseteq \setFlinDn(\tn) \subseteq \setFlinmu(2\tn)$}
holds with probability at least $1-\gamma$,
where $\tn \defeq \Theta\big(\max\{L\BX,\BY\}\sqrt{\ln(1/\gamma)}\big)$.
Hence, \cref{thm:ermub} is applicable for such function sets
and provides an excess risk upper bound for
$(\err,\pen)$-ERM($\setFlinDn$) estimators.

Next we point out that the Lipschitz bound on the slope
$\enorm{\va_n}$ can be often improved by using the ERM property \eqref{eq:erm}
when the smallest eigenvalue of the feature covariance matrix
is bounded away from zero.
Denote the covariance matrix by
$\fcov \defeq \E\big[(\vX-\E\vX)(\vX-\E\vX)^\T\big]$
and let its smallest eigenvalue be $\BX^2\eigmu \ge 0$.
So we have $\fcov \succeq \BX^2\eigmu\Id_d$, where
$\Id_d$ denotes the $d \times d$ identity matrix.
Then consider \cref{thm:Lref},
which provides a refinement for the Lipschitz bound when $\eigmu > 0$.
\begin{lemma}
  \label{thm:Lref}
  Let $\mu \in \setMls$ be any distribution such that $\eigmu > 0$,
  $\loss = \losssq$ be the squared loss,
  and consider an estimate $\fe \in (\alpha,\beta)$-ERM($\setFlinDn(0)$)
  such that $\alpha \le \frac1n\sum_{i=1}^n|\Y_i-\Ybar|^2$
  and $\pen : \setFlinUc \to [0,\infty)$
  satisfies $\frac{\partial}{\partial b}\pen(\vx\mapsto\va^\T\vx+b) = 0$
  for some $\gamma \in (0,1)$.
  Then, $\P\{\enorm{\va_n} \le \rlip(L,\gamma)\} \ge 1-\gamma$ holds
  for any $n \in \setN$, where
  \[ \rlip(L,\gamma) \defeq
     \min\bigg\{L^2,\, \frac{1}{\eigmu}
                       \bigg(\frac{\BY^2}{\BX^2}
                             + \frac{dL^2}{n}
                       \bigg)\rlipLog(d,n,\gamma,\eigmu)\bigg\}^{1/2}
     \,,
  \]
  and $\rlipLog(d,n,\gamma,\eigmu)
       \defeq 10
              \big(11 \ln(23/\eigmu) \ln(3n/\min\{d,n\}) + 6\big)
              \ln(6/\gamma)$.

  Furthermore, for any $\fr \in \argmin_{f\in\setFlin}\risk(f)$,
  we have $\enorm{\va_*} \le \rlip(L,\gamma)$.
\end{lemma}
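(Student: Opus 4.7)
My plan has three parts: (a) derive an empirical quadratic inequality in $\va_n$ from the ERM property, (b) apply \subgaussian concentration to transfer this inequality to population quantities, and (c) solve the quadratic to recover $\rlip(L,\gamma)^2$. The bound on $\enorm{\va_*}$ follows from a deterministic analog of (a) and (c) at the population level.

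For step (a), I would apply the $(\err,\pen)$-ERM inequality \eqref{eq:erm} with the comparator $f_0(\vx) \defeq \Ybar$, which lies in $\setFlinDn(0)$ (zero slope, intercept $\Ybar$). Since $\pen$ is independent of the intercept and $f_0$ has zero slope, $\pen(f_0) \le \pen(\fe)$ in the two regimes of interest (slope-constrained and ridge, both of which satisfy $\pen(f_0) = 0$), so the ERM inequality reduces to $\riskn(\fe) \le \riskn(f_0) + \err$. Expanding with $\fe(\vx) = \va_n^\T(\vx - \vXbar) + \Ybar$ yields
\[
  \va_n^\T \fcovh \va_n - 2 \va_n^\T \vec{c} \le \err,
\]
where $\fcovh \defeq \frac{1}{n}\sum_i (\vX_i - \vXbar)(\vX_i - \vXbar)^\T$ and $\vec{c} \defeq \frac{1}{n}\sum_i (\vX_i - \vXbar)(\Y_i - \Ybar)$.

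For step (b), I would apply the paper's \subgaussian concentration tools to the random outer products $(\vX_i - \E\vX)(\vX_i - \E\vX)^\T - \fcov$, the random vectors $(\vX_i - \E\vX)(\Y_i - \E\Y) - \vec{c}_*$ with $\vec{c}_* \defeq \E[(\vX - \E\vX)(\Y - \E\Y)]$, and the centering errors $\vXbar - \E\vX$ and $\Ybar - \E\Y$, to obtain on a joint event of probability $\ge 1 - \gamma$ that $\fcovh \succeq \fcov/2$ (so that $\va_n^\T \fcov \va_n \le 2\va_n^\T \fcovh \va_n$), $\enorm{\vec{c} - \vec{c}_*} \le C_1 \BX \BY \sqrt{d\rlipLog/n}$, and $\err \le C_2 \BY^2 \rlipLog$ (the last using the hypothesis $\err \le \frac{1}{n}\sum_i (\Y_i - \Ybar)^2$ together with sample-variance concentration of $\Y$). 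The particular $\rlipLog \sim \ln(1/\eigmu)\ln(n/d)\ln(1/\gamma)$ shape arises from the sample-size condition needed to make the matrix concentration of $\fcovh - \fcov$ hold at the resolution $\BX^2\eigmu/2$ demanded by $\lambda_{\min}(\fcov)$.

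For step (c), I would decompose $\vec{c} = \vec{c}_* + (\vec{c} - \vec{c}_*)$. The key observation is $\vec{c}_*^\T \fcov^{-1} \vec{c}_* \le \E[(\Y-\E\Y)^2] \le \BY^2$, which follows from $\vec{c}_*^\T \fcov^{-1} \vec{c}_* = \sup_\va \{2\va^\T \vec{c}_* - \va^\T \fcov \va\} = \E[(\Y-\E\Y)^2] - \min_\va \E[(\Y - \E\Y - \va^\T(\vX-\E\vX))^2]$. By Cauchy--Schwarz in the $\fcov$-norm this gives $\va_n^\T \vec{c}_* \le \BY \sqrt{\va_n^\T \fcov \va_n}$; combined with $|\va_n^\T (\vec{c} - \vec{c}_*)| \le L \enorm{\vec{c} - \vec{c}_*}$ (from the slope constraint $\va_n \in \setFlinDn(0)$) and $\va_n^\T \fcov \va_n \le 2\va_n^\T \fcovh \va_n$, the step-(a) inequality becomes a quadratic in $u = \sqrt{\va_n^\T \fcov \va_n}$ of the form $u^2 \lesssim \BY u + L \enorm{\vec{c} - \vec{c}_*} + \err$. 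Solving for $u$ and applying AM--GM to the cross term $L \BY \sqrt{d/n}$ gives $\va_n^\T \fcov \va_n \lesssim (\BY^2 + d L^2/n)\rlipLog$; dividing by $\BX^2 \eigmu \le \lambda_{\min}(\fcov)$ produces the second branch of $\rlip(L,\gamma)^2$, and the first branch $L^2$ is the trivial slope-constraint bound. The deterministic bound $\enorm{\va_*}^2 \le \BY^2/(\BX^2 \eigmu) \le \rlip(L,\gamma)^2$ follows from the same $\fcov$-norm Cauchy--Schwarz applied to the population inequality $\va_*^\T \fcov \va_* \le 2\va_*^\T \vec{c}_*$ (which is a consequence of $\risk(\fr) \le \risk(\vx \mapsto \E\Y)$), with no concentration required. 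The hard part will be the \subgaussian matrix and vector concentration in step (b), particularly ensuring that the $\ln(1/\eigmu)$ and $\ln(n/d)$ factors combine into the precise $\rlipLog$ form stated in the lemma, and splitting the failure probability across the several concentration sub-events while maintaining the overall $\ge 1 - \gamma$ guarantee.
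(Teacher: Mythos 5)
Your step (a) and your deterministic bound on $\va_*$ are essentially the paper's. The breakdown is in step (b), in the specific form of the concentration event you invoke.

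You propose to establish the matrix event $\fcovh \succeq \fcov/2$ on a probability-$(1-\gamma)$ set, so that $\va_n^\T\fcov\va_n \le 2\va_n^\T\fcovh\va_n$, and then run the quadratic in $u = \sqrt{\va_n^\T\fcov\va_n}$. This cannot work as stated, because the lemma holds ``for any $n \in \setN$'' with only polylogarithmic factors in $\rlipLog$, yet $\fcovh \succeq \fcov/2$ fails deterministically whenever $n \le d$ (the centered sample covariance has rank at most $n-1$, so it has a zero eigenvalue while $\lambda_{\min}(\fcov) \ge \BX^2\eigmu > 0$), and even for $n > d$ it requires an oversampling condition on $n$ depending on $d$ and $\eigmu$ that the lemma does not impose. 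You flag this yourself when you write that $\rlipLog$ ``arises from the sample-size condition needed to make the matrix concentration hold,'' but there \emph{is} no sample-size condition in the lemma — $\rlipLog$ is a multiplicative polylog factor, not a constraint on $n$.

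The paper avoids this by never converting the concentration of $\fcovh$ into a two-sided PSD statement. Starting from the empirical inequality $\va_n^\T\fcovh\va_n \lesssim \frac1n\sum_i|\Y_i-\Ybar|^2$, it writes, for $r = 1/2$,
\begin{equation*}
  \va_n^\T\fcov\va_n
  \le \frac{1}{r}\Big(\frac{5}{n}\sum_{i=1}^n|\Y_i-\Ybar|^2
                      + \va_n^\T(r\fcov-\fcovh)\va_n\Big)
  ,
\end{equation*}
and then bounds the single scalar $\va_n^\T\big((1/2)\fcov - \fcovh\big)\va_n$ \emph{additively} by
$\sup_{\enorm{\va}\le L}\procl(\va) + L^2\esmallnorm{\vXbar-\E\vX}^2$, where $\procl(\va) = (1/2)\E[|\va^\T(\vX-\E\vX)|^2] - \frac1n\sum_i|\va^\T(\vX_i-\E\vX)|^2$ is an empirical process indexed by the $L$-ball of slopes. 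That supremum is controlled via the paper's own chaining result (\cref{thm:eproc-sup-n}) with the moment condition built through \cref{thm:cond-mom} (using $\Wf \leftarrow \va^\T(\vX-\E\vX)$, squared loss, zero target), giving a bound of order $(L\BX)^2\,d\,\rlipLog / n$. This additive error is exactly the $dL^2/n$ term in $\rlip$, and because it is additive the overall inequality holds for \emph{every} $n$: when $n$ is small the error term is large and the $\min\{\cdot, L^2\}$ branch of $\rlip$ takes over. Your cross term $L\enorm{\vec{c}-\vec{c}_*}$ would also contribute a $\sqrt{d/n}$-type quantity, but it appears on the wrong side of the inequality to repair the missing PSD step; to fix the argument you would have to replace $\fcovh \succeq \fcov/2$ with precisely the additive bound on $\sup_{\enorm{\va}\le L}\va^\T(\fcov/2 - \fcovh)\va$ the paper uses, which is where the chaining machinery and the $\ln(23/\eigmu)\ln(3n/\min\{d,n\})$ factors actually come from.
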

\begin{proof}
  Here we only prove the claim for $\va_*$
  and provide the proof for $\va_n$ later in \cref{sec:aux}.

  Using the definition of $\fr$ and the fact that the constant function
  $\vx\mapsto\E\Y$ is in $\setFlin$,
  we have $\risk(\fr) \le \risk(\vx\mapsto\E\Y)$
  which can be rearranged into
  $\E\big[|\va_*^\T(\vX-\E\vX)|^2\big] \le 2\E[\va_*^\T(\vX-\E\vX)(\Y-\E\Y)]$.
  Using this, $2ab \le \frac{a^2}{2}+2b^2$,
  and \crefi{thm:onorm}{mom} with $s = q = 2$, we obtain
  \begin{equation*} \begin{split}
    \BX^2\eigmu \enorm{\va_*}^2
    \le \va_*^\T\fcov\va_*
      = \E\big[|\va_*^\T(\vX-\E\vX)|^2\big]
      \le 4\,\E\big[|\Y-\E\Y|^2\big]
      \le 4\BY^2
      \,,
  \end{split} \end{equation*}
  which proves the claim for $\va_*$ after rearrangement.
\end{proof}

For convenience, define $\rlip(L,\gamma) \defeq L$
for distributions $\mu$ with $\eigmu = 0$.
Then notice that \cref{thm:Lref} improves the Lipschitz bound significantly
when $\eigmu$ is bounded away from zero, $\BY/\BX \ll L$, and $n \gg d$.

Finally, we put all the details together
and provide the following bound on the excess risk
for linear least squares estimation:
\begin{theorem}
\label{thm:ermub-lin}
  Consider any distribution $\mu \in \setMls$,
  the squared loss $\loss = \losssq$,
  and an estimator $\fe \in (\err,\pen)$-ERM($\setFlinUc$)
  with penalty $\pen : \setFlinUc \to [0,\infty)$
  satisfying \mbox{$\frac{\partial}{\partial b}\pen(\vx\mapsto\va^\T\vx+b) = 0$},
  $\P\{\enorm{\va_n} > L\} \le \gamma/2$ for some $L > 0$,
  $\P\big\{\pen(\fr) + \err > \apxB\big\} \le \gamma/16$
  for some $\gamma \in (0,1)$ and reference function $\fr$.
  Then for all $n \in \setN$,
  we have with probability at least $1-\gamma$ that
  \begin{equation*} \begin{split}
    \erisk\big(\fe,\setFlin\big)
    &= \Ordo\bigg(\frac{d\max\{\rlip(L,\gamma)\BX,\BY\}^2}{n}
                  \, \cgamma(n,d,\gamma)
                  + \apxB
            \bigg)
    ,
  \end{split} \end{equation*}
  where 
  $\cgamma(n,d,\gamma) \defeq
                       \big(\ln(e\,n/\min\{d,n\}) + \ln\ln(e/\gamma)
                        \big)\ln(e\,n/\min\{d,n\})\ln(1/\gamma)$.
\end{theorem}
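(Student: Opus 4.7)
The plan is to apply \cref{thm:ermub} to this affine setting after constructing three nested classes that separate the data-dependence of the estimator from the non-random geometry controlled by $\mu$. First I would use \cref{thm:Lref} to replace the nominal slope bound $L$ by the refined value $L' \defeq \rlip(L,\gamma/c)$, which satisfies $\enorm{\va_n} \le L'$ with probability at least $1-\gamma/c$ for some absolute constant $c$ (absorbing the probability budget split among the remaining steps). Since $\vX-\E\vX$ is $\youngfun[2]$-bounded by $\BX$ and $\Y-\E\Y$ by $\BY$, standard sub-Gaussian concentration yields $\esmallnorm{\vXbar-\E\vX} + |\Ybar-\E\Y| = \Ordo(\max\{\BX,\BY\}\sqrt{\ln(1/\gamma)/n})$. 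Fixing the buffer $\tn = \Theta(\max\{L'\BX,\BY\}\sqrt{\ln(1/\gamma)/n})$ I would set
\[
  \setFi \defeq \setFlinmuL{L'}(0),\qquad
  \setFn \defeq \setFlinDnL{L'}(\tn),\qquad
  \setFo \defeq \setFlinmuL{L'}(2\tn),
\]
and verify $\setFi \subseteq \setFn \subseteq \setFo$ on this event. Since the penalty's $b$-independence forces the representation $\fe(\vx) = \va_n^\T(\vx-\vXbar)+\Ybar$, the estimate lies in $\setFn$ while $\fr$ lies in $\setFi$, so \cref{thm:ermub} applies with this triple.

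Next I would verify the five conditions. \cond{enclose} is the nesting above, and \cond{bound} holds because $\fr \in \setFi$ reduces its left-hand side to $\pen(\fr)+\err \le \apxB$. For \cond{lip} and \cond{psi}, equip $\setFo$ with the parameter metric $\distF((\va_1,b_1),(\va_2,b_2)) \defeq \BX\enorm{\va_1-\va_2}+|b_1-b_2|$ and expand $\Zrisk(f,g) = (f(\vX)-g(\vX))(f(\vX)+g(\vX)-2\Y)$. Sub-Gaussian moment control of $\vX-\E\vX$ and $\Y-\E\Y$ then gives \cond{lip} with $\procmodB = \Ordo(\max\{L'\BX,\BY\})$, and an explicit $\youngfun[1]$-Orlicz computation on the centered $\Zrisk(f,g)-\E\Zrisk(f,g)$ — decomposed via $\Wfg^2-\E\Wfg^2$, $(\E\Y-\Y)\Wfg$, and affine-shift terms and controlled by \crefi{thm:onorm}{mom} and \cref{thm:onorm-mom} — yields \cond{psi} with $q=1$ and $\oqmod = \Ordo(\max\{L'\BX,\BY\}^2)$. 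Finally \cond{mom} follows from \cref{thm:cond-mom}: $\losssq$ is $2$-strongly convex and $\setFor$ is midpoint convex containing $\fr$, so \eqref{eq:Bernstein-scl} gives $\Bc = 2$, and the identity $\Zrisk(f,\fr)/\Wf = \Wf + 2(\fr(\vX)-\Y)$ supplies a sub-Gaussian decomposition with $p=q=2$ and $R = \Ordo(\max\{L'\BX,\BY\})$, producing $\moms = \Ordo(\Kn^2\max\{L'\BX,\BY\}^2)$ with $\Kn = \Ordo(\ln(n/\radFo))$.

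A volume-covering of the $(d+1)$-dimensional parameter space underlying $\setFo$ gives $\HeF(\epsilon,\setFor) = \Ordo(d\ln((L'\BX+\tn)/\epsilon))$. Plugging this into \cref{thm:ermub} with $q=1$, the chaining integral evaluates to $\Ordo(\oqmod d\epsilon\ln(1/\epsilon)/\sqrt{n})$, which is absorbed into the leading $\moms\HeF(\epsilon,\setFor)/n$ term after choosing $\epsilon = \Theta(1/\sqrt{n})$ and $\radFo$ so that $\moms\HeF(\epsilon,\setFor)/n$ and $\radFo/n$ balance. The resulting bound has the form $d\max\{L'\BX,\BY\}^2\Kn^2\ln(1/\gamma)/n + \apxB$; unpacking $L'$ through $\rlipLog(d,n,\gamma,\eigmu)$ and evaluating $\Kn$ at the balancing point collapses the cascade of logarithms into $\cgamma(n,d,\gamma)$. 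The principal obstacle is precisely this logarithmic bookkeeping: three separate probability budgets (the Lipschitz-refinement event of \cref{thm:Lref}, the concentration of $\vXbar$ and $\Ybar$, and the core event of \cref{thm:ermub}) each inject $\ln(1/\gamma)$- and $\ln n$-factors that must consolidate into the announced three-factor $\cgamma$; in particular, replacing $\ln n$ by the sharper $\ln(n/\min\{d,n\})$ demands both a dimension-aware covering of the affine ball and a careful choice of $\radFo$ keeping $\Kn = \Ordo(\ln(n/\min\{d,n\})+\ln\ln(1/\gamma))$.
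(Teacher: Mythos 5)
Your proposal reproduces the paper's scaffolding correctly — the three nested affine classes $\setFi \subseteq \setFn \subseteq \setFo$, the refinement of the slope bound via \cref{thm:Lref}, the strong-convexity route to the Bernstein condition with $\Bc = 4/\scl = 2$, and \cref{thm:cond-mom} for \cond{mom}. But it diverges from the paper on the one step that actually determines the shape of $\cgamma$, and that divergence creates a real gap.

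The paper does \emph{not} use \cond{psi} or the chaining integral at all. It invokes \cref{thm:ermub} with $\oqmod = \infty$ and $\delta = \epsilon = \min\{d,n\}/n$, so the entire $\int_\delta^\epsilon$ term vanishes and only the discretization terms $\moms\HeF(\epsilon,\setFor)/n$ and $16\epsilon\procmodB$ survive. With $\radFo = \min\{d,n\}BR$ this makes $\Kn = \Ordo(\ln(n/\min\{d,n\}))$ and makes the entropy $\HeF(\epsilon,\setFor) = \Ordo(d(\ln(n/\min\{d,n\})+\ln\ln(1/\gamma)))$ — which is precisely what produces the announced $\cgamma$. Your proposal instead carries out the chaining integral with $q=1$ and chooses $\epsilon = \Theta(1/\sqrt{n})$; at that scale $\HeF(\epsilon,\setFor)$ is $\Theta(d\ln n)$ rather than $\Theta(d\ln(n/\min\{d,n\}))$, so the bound you would actually obtain is weaker than the one stated whenever $d$ is large (e.g., for $d = \Theta(n)$ you would get a spurious $\ln^2 n$ factor where the theorem has $\Ordo(1)$). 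You anticipate this at the end as a ``logarithmic bookkeeping'' obstacle, but the resolution is not better bookkeeping — it is abandoning the chaining integral entirely, which \cref{thm:ermub} explicitly permits, and letting the Lipschitz term $8\delta\procmodB$ absorb the role that the entropy integral usually plays. This is precisely the point the paper flags in its conclusion as distinguishing this proof from the ``standard case''.

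A second, subsidiary issue: your metric $\distF((\va_1,b_1),(\va_2,b_2)) = \BX\enorm{\va_1-\va_2}+|b_1-b_2|$ is not scale-normalized, so the $\distF$-radius of $\setFo$ depends on $\hL\BX$ and $\tn$, and the entropy $\HeF(\epsilon,\setFor)$ picks up additional $\ln(\hL\BX+\tn)$ terms that do not appear in $\cgamma$. The paper normalizes the metric so that $\sup_{f\in\setFo}\distF(f,\vx\mapsto\E\Y) \le \ln(32/\gamma)$, which keeps the covering bound from \cref{thm:He-bound-par} equal to $(d+1)(\ln(3/\epsilon)+\ln\ln(32/\gamma))$ with no dependence on the magnitude parameters. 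Related to this, your $\oqmod = \Ordo(\max\{L'\BX,\BY\}^2)$ has the wrong power for that metric: the product decomposition of $\Zrisk(f,g)$ as $\Wfg\cdot(f(\vX)+g(\vX)-2\Y)$ gives $\onorm[1]{\Zrisk(f,g)} \le \onorm[2]{\Wfg}\onorm[2]{f(\vX)+g(\vX)-2\Y}$, i.e.\ a $\oqmod$ linear (not quadratic) in $\max\{\hL\BX,\BY\}$. Neither issue is fatal to the overall plan, but both distort the constants and the logarithms, and on top of the $\epsilon$ choice they make the claimed $\cgamma$ unreachable by your route.

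Finally, a small point: your $\tn = \Theta(\max\{L'\BX,\BY\}\sqrt{\ln(1/\gamma)/n})$ presupposes using i.i.d.\ concentration for $\vXbar$, but then the $\esmallnorm{\vXbar-\E\vX}$ term would carry a $\sqrt{d}$ factor that your $\tn$ lacks. The paper sidesteps this entirely by taking $\tn = 2\max\{\hL\BX,\BY\}\sqrt{\ln(32/\gamma)}$ with \emph{no} $1/\sqrt{n}$ gain — obtained from the triangle inequality on Orlicz norms, not independence — and absorbs the resulting $\radFo/n$ and $\epsilon\procmodB$ terms, which are both $\Ordo(\min\{d,n\}\tn^2/n)$, into the leading rate.
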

\begin{proof}
  First, condition on the event $\enorm{\va_n} \le L$,
  and use $\cref{thm:Lref}$ with $\gamma \leftarrow \gamma/4$
  to get
  \begin{equation*} \begin{split}
     \P\{\erisk(\fe,\setFlin) > b\}
    &\le \gamma/2
       + \P\big\{\erisk(\fe,\setFlin)
                 \,\ind\{\enorm{\va_n} \le L\} > b\big\}
     \\
    &\le \frac{3\gamma}{4}
               + \P\big\{\erisk\big(\fe,\setFlin\big)
                 \,\ind\{\enorm{\va_n} \le \hL\} > b\big\}
     \,,
  \end{split} \end{equation*}
  for any $b > 0$, where $\hL \defeq \rlip(L,\gamma/4)$,
  and $\ind\{\cdot\}$ denotes the indicator function.
  To find an appropriate $b$ for the second term,
  we will apply \cref{thm:ermub}.

  Fix $\tn \defeq t_0 \sqrt{\ln(32/\gamma)}$
  with $t_0 \defeq 2\max\{\hL\BX,\BY\}$,
  and define the function sets
  $\setFi \defeq \setFlinmuL{\hL}(0)$,
  $\setFn \defeq \setFlinDnL{\hL}(\tn)$,
  and $\setFo \defeq \setFlinmuL{\hL}(2\tn)$.
  Notice that $\fe \in (\err,\pen)$-ERM($\setFn$)
  holds conditioned on the event $\enorm{\va_n} \le \hL$.
  Moreover, $\fr \in \setFi$ due to \cref{thm:Lref},
  so we have \cond{bound}.

  To prove \cond{enclose} for $\setFi$, $\setFn$, and $\setFo$,
  use $\osmallnorm[2]{\hL\esmallnorm{\vXbar-\E\vX}+|\Ybar-\E\Y|} \le t_0$
  due to \crefi{thm:onorm}{norm}, and \crefi{thm:onorm}{prob}, to get
  \begin{equation*} \begin{split}
      \P\big\{\setFi \subseteq \setFn \subseteq \setFo\big\}
    \ge 1 - \P\big\{\hL\esmallnorm{\vXbar-\E\vX}+|\Ybar-\E\Y| > \tn\big\}
    \ge 1 - 2 e^{-\tn^2/t_0^2}
      = 1 - \gamma/16
    \,.
  \end{split} \end{equation*}

  Next we use \cref{thm:cond-mom} to show \cond{mom}.
  For this, write $f \in \setFo$ as $f(\vx) \defeq \va^\T(\vx-\E\vX)+b$,
  and observe that
  \begin{equation*}
    |\Wf| = |f(\vX)-\fr(\vX)|
          = \big| (\va-\va_*)^\T(\vX-\E\vX) + b-\E\Y \big|
          \le 2\hL\enorm{\vX-\E\vX} + 2\tn
    \,.
  \end{equation*}
  Now pick $f \in \setFo$ arbitrarily,
  and use $(a+b)^2 \le 2a^2+2b^2$ with Jensen's inequality,
  to show that $\Wf$ is \subgaussian, that is
  \begin{equation*}
    \E\big[e^{\Wf^2/\Rtn^2}\big]
    \le \E\big[e^{8\hL^2\enorm{\vX-\E\vX}^2/\Rtn^2}\big]
        \, e^{8\tn^2/\Rtn^2}
    \le 2 % as 2^(1/2)*e^(1/4) < 1.82
    \,,
  \end{equation*}
  with $\Rtn \defeq 4\max\{\hL\BX,\tn\}$,
  so we have $\onorm[2]{\Wf} \le \Rtn$.
  Additionally, as $\fr \in \argmin_{f\in\setFo}\risk(f)$
  due to \cref{thm:Lref}, and the function set $\setFo$ is convex,
  the requirements of \cref{thm:cond-mom} are satisfied
  by \eqref{eq:Bernstein-scl} and
  \[
    \onorm[2]{\Zrisk(f,\fr)/\Wf}
    = \osmallnorm[2]{(\va+\va_*)^\T(\vX-\E\vX)+2(\E[\Y]-\Y)}
    \le 2(\hL\BX+\BY) \defeq R
    \,,
  \]
  so we obtain \cond{mom} with $\radFo = \min\{d,n\}BR$,
  $r = 1/2$, and any $\theta = \Omega\big(\Kn \max\big\{\Rtn,\BY\}^2\big)$,
  where \mbox{$\Kn = \Ordo\big(\ln(e\,n/\min\{d,n\})\big)$}.

  Next, to prove \cond{lip},
  write $g \in \setFo$ as $g(\vx) \defeq \hat{\va}^\T(\vx-\E\vX)+\hat{b}$
  with $\enorm{\hat{a}} \le \hL$ and $|\hat{b}-\E\Y| \le 2\tn$.
  Then, using the Cauchy-Schwartz inequality, we obtain
  \begin{equation*} \begin{split}
    \Zrisk(f,g)
   &= \big(f(\vX)-g(\vX)\big)
      \big(f(\vX)+g(\vX)-2\Y\big)
   \\
   &= \big((\va+\hat\va)^\T(\vX-\E\vX)+b+\hat{b}-2\Y\big)
      \big((\va-\hat\va)^\T(\vX-\E\vX)+b-\hat{b}\big)
   \\
   &\le \Big(2\hL\enorm{\vX-\E\vX}+4\tn+2|\Y-\E\Y|\Big)
        \left[\begin{array}{c}
          3\hL\enorm{\vX-\E\vX} \\
          4\tn \\
        \end{array}\right]^\T
        \left[\begin{array}{c}
          \enorm{\va-\hat\va}/(3\hL) \\
          |b-\hat{b}|/(4\tn) \\
        \end{array}\right]
   \\
   &\le \underbrace{\Big(\big(2\hL\enorm{\vX-\E\vX}+4\tn+2|\Y-\E\Y|
                         \big)\ln^{-1/2}(32/\gamma)\Big)^2}_%
                   {\defeq\,\Lmod(\vX,\Y)}
        \distF(f,g)
   \,,
  \end{split} \end{equation*}
  where $\distF(f,g) \defeq 
         \ln(32/\gamma)
         \big(\esmallnorm{\va-\hat\va}^2/(2\hL)^2
              + |b-\hat{b}|^2/(4\tn)^2\big)^{1/2}$
  is a metric on $\setFo$.
  As the radius of $\setFo$ under~$\distF$ is bounded by $\ln(32/\gamma)$,
  that is $\sup_{f\in\setFo}\distF(f,\vx\mapsto\E\Y) \le \ln(32/\gamma)$,
  we have by \cref{thm:He-bound-par}
  that $\HeF(\epsilon,\setFo)
        \le (d+1)\big(\ln(3/\epsilon)+\ln\ln(32/\gamma)\big)$
  for all $\epsilon \in (0,3]$.

  Further, as $\osmallnorm[2]{\sqrt{\Lmod(\vX,\Y)}}
               \le (2t_0 + 6\tn)/\sqrt{\ln(32/\gamma)}$, % 4*sqrt(2) < 6
  we also have $\osmallnorm[1]{\Lmod(\vX,\Y)} \le (8t_0)^2$,
  so \crefi{thm:onorm}{norm} implies
  \mbox{$\onorm[1]{\frac1n\sum_{i=1}^n\Lmod(\vX_i,\Y_i)} \le (8 t_0)^2$}.
  Hence, due to \crefi{thm:onorm}{prob}, we obtain
  $\P\big\{\E[\Lmod(\vX,\Y)] + \frac1n\sum_{i=1}^n\Lmod(\vX_i,\Y_i)
           > \procmodB\big\} \le \gamma/8$
  with $\procmodB \defeq \E[\Lmod(\vX,\Y)] + 64\,t_0^2\ln(32/\gamma)$
  having $\procmodB < 128\,\tn^2$
  by \crefi{thm:onorm}{mom} with $s = 1$
  and the definition of $\tn$.

  Finally, we can apply \cref{thm:ermub}
  with $\gamma \leftarrow \gamma/4$,
  $\delta \defeq \epsilon$ ignoring \cond{psi} with $\oqmod = \infty$,
  and choosing $\epsilon \defeq \min\{d,n\}/n$
  satisfying $\epsilon \in [0,3]$,
  to get with probability at least $1-\gamma/4$ that
  \begin{equation*} \begin{split}
    \erisk(\fe,\setFlin)\,\ind\{\enorm{\va_n} \le \hL\}
    &\le 2\bigg(\frac{\moms\HeF(\epsilon,\setF)}{n}
                + 16\epsilon\,T + \apxB
          \bigg) + \frac{\radFo}{n}
    \\
    &= \Ordo\bigg(\frac{d\theta}{n}
                  \Big(\ln(e\,n/\min\{d,n\})+\ln\ln(e/\gamma)\Big)
                 + \frac{\tn^2}{n}
                 + \apxB\bigg)
    ,
  \end{split} \end{equation*}
  which proves the claim by
  $\moms = \Ordo(\ln(e\,n/\min\{d,n\})\,\tn^2)$,
  $\tn^2 = \Ordo\big(\max\{\hL\BX,\BY\}^2 \ln(1/\gamma)\big)$,
  and $\radFo = \Ordo(t^2)$.
\end{proof}

\subsection{Linear least squares with quadratic slope constraint}
\label{sec:linear-lasso}

Here we specialize \cref{thm:ermub-lin}
to train an affine LSE with $\enorm{\cdot}$-bounded slope
without using any penalty term ($\pen \defeq 0$),
and provide the following result:
\begin{corollary}
\label{thm:ermub-constrained}
  Consider any distribution $\mu \in \setMls$,
  the squared loss $\loss = \losssq$,
  and an estimate $\fe \in \err$-ERM($\setFlinDn$)
  with any $\err$
  having $\err = \Ordo\big(\frac1n\sum_{i=1}^n|\Y_i-\Ybar|^2 / n\big)$
  with probability at least $\gamma/16$
  for some $\gamma \in (0,1)$.
  Then for all $n \in \setN$,
  we have with probability at least $1-\gamma$ that
  \begin{equation*} \begin{split}
    \erisk(\fe,\setFlin)
   &= \Ordo\bigg(\frac{d\max\{\rlip(L,\gamma)\BX,\BY\}^2}{n}
                 \, \cgamma(n,d,\gamma)
           \bigg)
    .
  \end{split} \end{equation*}
\end{corollary}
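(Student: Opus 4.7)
The plan is to derive this corollary as a specialization of \cref{thm:ermub-lin}, with the slope constraint $\enorm{\va}\le L$ built into $\setFlinDn$ replacing the probabilistic Lipschitz bound assumed there.

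First, since $\setFlinDn$ and $\setFlin$ describe the same set of affine functions with slope bounded by $L$ (differing only in whether $\vx$ or $\vx-\vXbar$ appears in the parameterization), any $\fe \in \err$-ERM($\setFlinDn$) can be viewed as $\fe \in (\err,\pen)$-ERM($\setFlinUc$) for an indicator-type penalty $\pen$ that vanishes on $\setFlin$ and is prohibitively large outside. This penalty is independent of the bias, so $\frac{\partial}{\partial b}\pen = 0$, and $\fe \in \setFlinDn$ forces $\enorm{\va_n} \le L$ deterministically, giving $\P\{\enorm{\va_n}>L\} = 0 \le \gamma/2$. Moreover, $\fr \in \setFlin$ implies $\pen(\fr)=0$, so the hypothesis $\P\{\pen(\fr)+\err > \apxB\} \le \gamma/16$ of \cref{thm:ermub-lin} collapses to $\P\{\err > \apxB\} \le \gamma/16$.

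To produce a valid $\apxB$, I would combine the assumed bound on $\err$ in terms of $\frac{1}{n^2}\sum_{i=1}^n|\Y_i-\Ybar|^2$ with a concentration estimate for $\frac{1}{n}\sum_{i=1}^n(\Y_i-\Ybar)^2$. Using $\osmallnorm[2]{\Y-\E\Y} \le \BY$, the variables $(\Y_i-\E\Y)^2$ are sub-exponential with mean $\Ordo(\BY^2)$ (via \crefi{thm:onorm}{mom} with $s=2$), and the Orlicz-norm tail bounds of \crefi{thm:onorm}{prob} then yield $\frac{1}{n}\sum_{i=1}^n(\Y_i-\Ybar)^2 = \Ordo(\BY^2\ln(1/\gamma))$ with probability at least $1-\gamma/16$. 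This lets me take $\apxB = \Ordo(\BY^2 \ln(1/\gamma)/n)$.

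With these verifications in place, \cref{thm:ermub-lin} produces a bound of the form $\Ordo\!\big(\frac{d\max\{\rlip(L,\gamma)\BX,\BY\}^2}{n}\cgamma(n,d,\gamma) + \apxB\big)$. Since $\cgamma(n,d,\gamma)$ already contains a $\ln(1/\gamma)$ factor as well as additional logarithmic factors in $n$ and $d$, the contribution $\apxB = \Ordo(\BY^2\ln(1/\gamma)/n)$ is dominated by the leading term and can be absorbed into the $\Ordo(\cdot)$, yielding the stated bound. The main obstacle is the concentration estimate on $\frac{1}{n}\sum_{i=1}^n(\Y_i-\Ybar)^2$: although this is routine sub-exponential concentration, tracking the correct logarithmic dependence on $\gamma$ through the Orlicz-norm machinery requires some care. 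Everything else is a mechanical check of the hypotheses of \cref{thm:ermub-lin}.
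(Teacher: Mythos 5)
Your proposal matches the paper's own proof in all essentials: apply \cref{thm:ermub-lin} with the Lipschitz constraint obtained deterministically from $\setFlinDn$, and control the $\apxB$ term via Orlicz-norm concentration of $\frac1n\sum_{i=1}^n|\Y_i-\Ybar|^2$, which the paper establishes via the identity $\frac1n\sum_i|\Y_i-\Ybar|^2 = \frac1n\sum_i|\Y_i-\E\Y|^2 - |\Ybar-\E\Y|^2$, giving $\onorm[1]{\frac1n\sum_i|\Y_i-\Ybar|^2} \le \BY^2$ and then \crefi{thm:onorm}{prob}. The one place you are slightly more careful than the paper is the indicator-type penalty used to fit $\fe \in \err$-ERM($\setFlinDn$) into the hypothesis $\fe \in (\err,\pen)$-ERM($\setFlinUc$) of \cref{thm:ermub-lin}: the paper simply sets $\pen = 0$, which read literally asks for ERM over the larger class $\setFlinUc$, but this gap is harmless since the theorem's internal argument only uses the ERM property over the slope-constrained subclass, and your finite data-dependent indicator penalty closes it cleanly.
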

\begin{proof}
  Notice that
  \begin{equation}
  \label{eq:YYbar}
    \frac1n\sum_{i=1}^n|\Y_i-\Ybar|^2
    = \frac1n\sum_{i=1}^n|\Y_i-\E\Y|^2 - |\Ybar-\E\Y|^2
    \quad \Rightarrow \quad
    \onorm[1]{\frac1n\sum_{i=1}^n|\Y_i-\Ybar|^2} \le \BY^2
    \,,
  \end{equation}
  so we have $\P\{\alpha > \apxB\} \le \gamma/16$ by \crefi{thm:onorm}{prob}
  for $\apxB = \Ordo(\frac{\BY^2\ln(1/\gamma)}{n})$.
  Then, the claim follows from \cref{thm:ermub-lin} with $\pen = 0$
  using $\P\{\pen(\fr)+\err > \apxB\} = \P\{\alpha > \apxB\} \le \gamma/16$.
\end{proof}

Notice that \cref{thm:ermub-constrained} provides an $\Ordo(1/n)$
bound on the excess risk up to logarithmic factors
for any problem in $\setMls$
without any further statistical assumptions
such as the bounded magnitude of the kurtosis $\sup_{f\in\setFlin}\Kz[\Wf]$
as discussed in \cref{sec:review}.

Furthermore, the bound of \cref{thm:ermub-constrained}
after integrated by \eqref{eq:experisk}
is comparable to the conjecture of \citet{Shamir2015}
stating that ERM estimates achieve optimal expected excess risk
up to logarithmic factors for bounded distributions.
Our bound is only slightly weaker in general than the conjecture
by scaling with $d\max\{\rlip(L,\gamma)\BX,\BY\}^2
                 \le d\max\{L\BX,\BY\}^2$
instead of $\max\{(L\BX)^2,d\BY^2\}$.
However, if either $\BY = \Omega(L\BX)$,
or $\eigmu = \Omega(1)$ and $n = \Omega(d^2)$,
then \cref{thm:ermub-constrained} matches the bound of the conjecture
up to logarithmic factors.

\subsection{Linear least squares with quadratic slope penalty}
\label{sec:linear-ridge}

Now we drop the fixed Lipschitz bound on the estimators,
and use the reference class \mbox{$\setFr = \setFlinUc$}.
Then, for $\fr \in \argmin_{\setFlinUc}\risk(f)$,
we write $\fr(\vx) = \va_*^\T(\vx-\E\vX)+\E\Y$
and set $\Ls \defeq \enorm{\va_*}$.%
\footnote{If there are multiple $\fr$,
          we choose the one with the smallest slope $\enorm{\va_*}$.}
Then consider the \emph{ridge regression} \citep{HoerlKennard1970} estimate
$\rife \in (0,\penLtwo)$-ERM($\setFlinUc$)
using the quadratic penalty term
$\penLtwo(\vx\mapsto\va^\T\vx+b) \defeq \ripar\enorm{\va}^2$
with some $\ripar \ge 0$.
This estimator can be also computed in closed-form as
$\rife(\vx) \defeq \riva^\T(\vx-\vXbar)+\Ybar$,
\begin{equation}
\label{eq:ridge-slope}
  \riva \defeq \Big(\ripar\Id_d +
                    \frac{1}{n}\sum_{i=1}^n(\vX_i-\vXbar)(\vX_i-\vXbar)^\T
               \Big)^{-1}
               \Big(\frac{1}{n}\sum_{i=1}^n(\vX_i-\vXbar)\Y_i\Big)
  \,.
\end{equation}

It is known that minimizing the empirical risk without any slope restriction
(i.e.~$\ripar = 0$) might result in infinite expected excess risk
\citep[Example~3.5]{HuangSzepesvari2014}.
Moreover, \cref{thm:ermub-constrained} with $p = 2$ implies
(by Lagrangian relaxation and $L \leftarrow \Ls$)
that for each distribution $\mu \in \setMls$,
there exists $\ripar \ge 0$ such that the excess risk rate
of ridge regression is bounded by $\Ordo(1/n)$.
In this section, we are interested in choosing $\ripar$
independently of the parameters of $\mu$.

For this, \cref{thm:lsenorm} with \eqref{eq:ridge-slope}
provides an upper bound for the Lipschitz factor of $\rife$ as
$\enorm{\riva} \le \sqrt{\frac{1}{4n}\sum_{i=1}^n|\Y_i-\Ybar|^2/\ripar}$
implying
$\P\big\{\enorm{\va_n}^2 > \BY^2\ln(4/\gamma)/(4\ripar)\big\}
 \le \gamma/2$
due to~\eqref{eq:YYbar} and \crefi{thm:onorm}{prob}.
Then $L = \max\{\Ls^2,\BY^2\ln(4/\gamma)/(4\ripar)\}^{1/2}$
provides a common Lispchitz bound for $\rife$, $\fr$,
and $\fr \in \argmin_{f\in\setFlin}\risk(f)$.
Hence, we can apply \cref{thm:ermub-lin} and get the following result:
\begin{corollary}
\label{thm:ermub-penalized}
  Consider any problem $\mu \in \setMls$,
  the squared loss $\loss = \losssq$,
  and an estimate $\rife \in (0,\penLtwo)$-ERM($\setFlinUc$)
  with some $\ripar > 0$.
  Then for all $\gamma \in (0,1)$, any $n \in \setN$,
  and $L_\ripar \defeq \max\{\Ls^2,\BY^2\ln(4/\gamma)/(4\ripar)\}^{1/2}$,
  we have with probability at least $1-\gamma$ that
  \begin{equation*} \begin{split}
    \erisk(\rife,\setFlinUc)
    &= \Ordo\bigg(\Big(\frac{d}{n}
                       \max\big\{\rlip(L_\ripar,\gamma) \BX,\BY\big\}^2
                       + \ripar \Ls^2
                  \Big) \, \cgamma(n,d,\gamma)
            \bigg)
    .
  \end{split} \end{equation*}
\end{corollary}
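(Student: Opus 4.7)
The plan is to reduce \cref{thm:ermub-penalized} to \cref{thm:ermub-lin} by establishing a high-probability Lipschitz bound on $\rife$ and identifying the penalty-related quantities appearing in its hypotheses. First, I would use the closed form \eqref{eq:ridge-slope} together with \cref{thm:lsenorm} to bound $\enorm{\riva}^2 \le \frac{1}{4\ripar\,n}\sum_{i=1}^n|\Y_i-\Ybar|^2$, and then apply \eqref{eq:YYbar} with \crefi{thm:onorm}{prob} to conclude
\[
  \P\bigl\{\enorm{\riva}^2 > \BY^2\ln(4/\gamma)/(4\ripar)\bigr\} \le \gamma/2.
\]
On this event, the slope of $\rife$ is at most $L_\ripar$; moreover, by the definition of $\Ls$, we also have $\enorm{\va_*} = \Ls \le L_\ripar$, so $\fr \in \setFlinL{L_\ripar}$ and hence $\fr \in \argmin_{f\in\setFlinL{L_\ripar}}\risk(f)$ (since $\fr$ minimizes over the larger class $\setFlinUc$).

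Second, I would identify the penalty bound for condition \cond{bound} in \cref{thm:ermub-lin}. Because $\penLtwo(\fr) = \ripar\enorm{\va_*}^2 = \ripar\Ls^2$ deterministically and $\err = 0$, one may take $\apxB \defeq \ripar\Ls^2$, so that trivially $\P\{\penLtwo(\fr)+\err > \apxB\} = 0 \le \gamma/16$. The requirement that $\penLtwo$ is independent of the bias term is immediate from $\penLtwo(\vx\mapsto\va^\T\vx+b) = \ripar\enorm{\va}^2$.

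Third, with $L = L_\ripar$ and $\apxB = \ripar\Ls^2$, a direct application of \cref{thm:ermub-lin} yields, with probability at least $1-\gamma$,
\[
  \erisk(\rife,\setFlinL{L_\ripar})
  = \Ordo\Bigl(\tfrac{d\max\{\rlip(L_\ripar,\gamma)\BX,\BY\}^2}{n}\,\cgamma(n,d,\gamma) + \ripar\Ls^2\Bigr).
\]
Since $\fr \in \setFlinL{L_\ripar} \subseteq \setFlinUc$ and $\fr$ minimizes $\risk$ over all of $\setFlinUc$, we have $\erisk(\rife,\setFlinUc) = \erisk(\rife,\fr) = \erisk(\rife,\setFlinL{L_\ripar})$. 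Absorbing the additive $\ripar\Ls^2$ into a factor of $\cgamma(n,d,\gamma) \ge 1$ gives the stated form of the bound.

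The main obstacle, already handled by the preceding development, is producing a data-independent Lipschitz bound for $\rife$ at the right scale: the bound $\enorm{\riva} \le \BY\sqrt{\ln(4/\gamma)/(4\ripar)}$ drives the dependence of $L_\ripar$ on $\ripar$, and hence controls the tradeoff between the variance-like term $d\max\{\rlip(L_\ripar,\gamma)\BX,\BY\}^2/n$ and the bias-like term $\ripar\Ls^2$ in the final bound. Everything else is bookkeeping of constants and checking that \cref{thm:ermub-lin}'s hypotheses on $\penLtwo$ and $\err$ hold with the declared choices of $L$ and $\apxB$.
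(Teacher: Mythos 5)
Your proof matches the paper's own argument essentially step for step: you obtain $\enorm{\riva}^2 \le \frac{1}{4\ripar n}\sum_i|\Y_i-\Ybar|^2$ from \eqref{eq:ridge-slope} and \cref{thm:lsenorm}, upgrade it to $\P\{\enorm{\riva}>L_\ripar\}\le\gamma/2$ via \eqref{eq:YYbar} and \crefi{thm:onorm}{prob}, and then feed $L=L_\ripar$, $\err=0$, $\apxB=\penLtwo(\fr)=\ripar\Ls^2$ into \cref{thm:ermub-lin}. This is exactly the paper's reduction; the only cosmetic point is that \cref{thm:ermub-lin} gives $\ripar\Ls^2$ additively rather than multiplied by $\cgamma$, and the paper performs the same silent absorption you note (strictly one needs $\gamma$ small enough that $\cgamma\ge1$, which is the regime of interest).
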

\begin{proof}
  As explained above, we have $\P\{\enorm{\va_n} > L_\ripar\}\le \gamma/2$.
  Then, the claim follows directly from \cref{thm:ermub-lin}
  using $L = L_\ripar$,
        $\fr \in \argmin_{f\in\setFlin}\risk(f)$,
        $\frac{\partial}{\partial b}
         \penLtwo(\vx\mapsto\va^\T\vx+b) = 0$,
        $\err = 0$,
    and $\penLtwo(\fr) = \ripar \Ls^2$.
\end{proof}

Notice that
$\rlip(L_\ripar,\gamma)
 \le \max\big\{\Ls^2, \BY^2\ln(4/\gamma)/(4\ripar)\big\}^{1/2}$,
hence for $n \ge d$ and $\ripar = \sqrt{d/n}$,
\cref{thm:ermub-penalized} upper bounds
the excess risk $\erisk(\rife,\setFlinUc)$
with probability at least $1-\gamma$
by $\Ordo\big(\max\{\Ls,\BY\}^2 \, (1+\BX)^2 \sqrt{d/n}
              \, \ln^3(1/\gamma)\big)$
up to logarithmic factors.

If $\eigmu = \Omega(1)$, we can also use
$\rlip^2(L_\ripar,\gamma)
 \le \frac1\eigmu
     \big(\frac{\BY^2}{\BX^2}
          + \frac{d\max\{\Ls^2,\BY^2\ln(4/\gamma)/(4\ripar)\}}{n}\big)
     \rlipLog(d,n,\gamma,\eigmu)$.
Then, for $n \ge d$ and $\ripar = d/n$,
\cref{thm:ermub-penalized} upper bounds
the excess risk $\erisk(\rife,\setFlinUc)$
with probability at least $1-\gamma$
by $\Ordo\big(\max\{\Ls,\BY\}^2 \, (1+\BX)^2 d/n
              \, \ln^3(1/\gamma)\big)$
up to logarithmic factors.

Finally, we point out that \cref{thm:ermub-penalized} is comparable to
the result of \citet[Remarks~4 and~12]{HsuKakadeZhang2014},
which uses similar conditions to prove the same rates in terms of $d$ and~$n$
with slightly better constants,
but only for the bounded setting when $\enorm{\vX} \le \BX$~\as~holds.

\section{Proof of the upper bound}
\label{sec:uboundprf}

In this section, we finally prove our main result, \cref{thm:ermub},
our upper bound on the excess risk of ERM estimators.

The strategy is to ``remove'' the data-dependence of the hypothesis class
$\setFn$ by \cond{enclose}, and decompose the excess risk to ``supremal'' and
``approximation'' error terms. Then we reduce the former to a general
concentration inequality (\cref{thm:eproc-sup-n})
with \cond{lip}, \cond{psi}, \cond{mom}, and
upper bound the latter using the ERM property~\eqref{eq:erm} and \cond{bound}.

To work with probabilistic arguments, we use the following rule
without further notice:
$\P\{\W+\Z > t+s\} \le \P\{\W > t\} + \P\{\Z > s\}$
for any random variables $\W, \Z$, and $t, s \in \setR$,
which holds as a simple corollary of the law of total probability.
%Proof: (**)
%\begin{equation*} \begin{split}
%  \P\{\W+\Z > t+s\}
%  &= \P\{\W+\Z > t+s ,\, \W \le t\} + \P\{\W+\Z > t+s ,\, \W > t\}
%  \\
%  &\le \P\{\Z > s\} + \P\{\W > t\}
%  \,.
%\end{split} \end{equation*}

Use \cond{enclose}, and the definition of $\setFor$
implying $\erisk(f,\setFr) \le \radFo/n$
for all $f \in \setF \setminus \setFor$,
to get with probability at least $1 - \gamma/4$ that
\begin{equation*} \begin{split}
  \erisk(\fe,\setFr)
  = \erisk(\fe,\setFr) \, \ind\{\setFi \subseteq \setFn \subseteq \setFo\}
  \le \erisk(\fe,\setFr) \,
      \ind\big\{\setFi \subseteq \setFn,\, \fe \in \setFor\big\}
    + \frac{\radFo}{n}
  \,.
\end{split} \end{equation*}
Then notice that the ERM property~\eqref{eq:erm} implies
$\eriskn(\fe,\fr) \le \inf_{f\in\setFn}\eriskn(f,\fr) + \pen(f) + \err$,
so we can transform the previous excess risk inequality
with $\erisk(\fe,\setFr) = \erisk(\fe,\fr)$, and \cond{bound},
to get with probability at least $1-\gamma/2$ that
\begin{equation} \begin{split}
\label{eq:uboundprf:2}
  \erisk(\fe,\setFr)
 &\le \Big(\erisk(\fe,\fr) - \frac{\eriskn(\fe,\fr)}{r}
                           + \frac{\eriskn(\fe,\fr)}{r}\Big)
      \, \ind\{\setFi \subseteq \setFn,\, \fe \in \setFor\}
      + \frac{\radFo}{n}
  \\
 &\le \frac{1}{r}\bigg(
      \sup_{f\in\setFor} \big\{r\erisk(f,\fr) - \eriskn(f,\fr)\big\}
    + \inf_{f\in\setFi} \eriskn(f,\fr) + \pen(f) + \err
      \bigg)
    + \frac{\radFo}{n}
  \\
 &\le \frac{1}{r} 
      \sup_{f\in\setFor} \procl(f,\Dn) + \frac{\apxB}{r} + \frac{\radFo}{n}
  \,,
\end{split} \end{equation}
where $\procl(f,\Dn) \defeq r\erisk(f,\fr) - \eriskn(f,\fr)$.
It remains to bound the supremal term,
for which we use the general concentration inequality \cref{thm:eproc-sup-n}
by showing that its conditions \eqref{thm:eproc-sup-n:lip},
                               \eqref{thm:eproc-sup-n:psi},
                               \eqref{thm:eproc-sup-n:mom}
are satisfied by \cond{lip}, \cond{psi}, \cond{mom}, respectively.

Recall that $\erisk(f,\fr) = \E[\Zrisk(f,\fr)]$
and $\eriskn(f,\fr) = \frac1n\sum_{i=1}^n\Zriski(f,\fr)$.
Notice that $\procl(f,\Dn)$ can be rewritten for any $f$ as
$\procl(f,\Dn) = r\E\big[\Zrisk(f,\fr)\big]
                         - \frac{1}{n}\sum_{i=1}^n\Zriski(f,\fr)$.
Define $\proc(f,\Dn) \defeq \procl(f,\Dn) - \E[\procl(f,\Dn)]$,
and observe that $\E[\procl(f,\Dn)] = (r-1)\E[\Zrisk(f,\fr)]$, so
$\proc(f,\Dn) = \E\big[\Zrisk(f,\fr)\big]
                - \frac{1}{n}\sum_{i=1}^n\Zriski(f,\fr)$.
Then, we also have for all $f, g \in \setFor$ that
\begin{equation} \begin{split}
\label{eq:uboundprf:1}
  \proc(f,\Dn) - \proc(g,\Dn)
  = \E\big[\Zrisk(f,g)\big]-\frac{1}{n}\sum_{i=1}^n\Zriski(f,g)
  \,.
\end{split} \end{equation}
Because $\proc(f,\Dn)-\proc(g,\Dn)$ is the sum of $n$ independent, centered
random variables $\E[\Zrisk(f,g)] - \Zriski(f,g)$ with
$\onorm{\E[\Zrisk(f,g)]-\Zriski(f,g)} \le \oqmod \, \distF(f,g)$
according to \cond{psi}, \cref{thm:onorm-sum-indept} implies that
$\onorm{\proc(f,\Dn)-\proc(g,\Dn)} \le (4 \oqmod \, n^{-1/2}) \, \distF(f,g)$
for any $f, g \in \setFor$.
Hence, $\proc$ and $\distF$ satisfy \crefi{thm:eproc-sup-n}{psi}
with $\W \leftarrow \Dn$, $\oqmod \leftarrow 4\oqmod\,n^{-1/2}$,
and $\setF \leftarrow \setFor$.

Next, using \cond{lip}, we can upper bound \eqref{eq:uboundprf:1} \as~by
$\procmod(\Dn)\,\distF(f,g)$, where
\[ \procmod(\Dn) \defeq \E[\Lmod(\vX,\Y)]
                      + \frac{1}{n}\sum_{i=1}^n\Lmod(\vX_i,\Y_i)
   \,. \]
Hence, $\proc$, $\distF$, and $\procmod$ satisfies
\crefi{thm:eproc-sup-n}{lip}
with $\procmodB$ as given by \cond{lip}
and $\gamma \leftarrow \gamma/2$.

Finally, using the \iid~property of the sample $\Dn$ and \cond{mom},
we have for any $f \in \setFor$ that
\begin{equation} \begin{split}
\label{eq:uboundprf:3}
  \E\Big[e^{\procl(f,\Dn) / (\moms/n)}\Big]
  = \prod_{i=1}^n
    \E\left[e^{(\momr/\moms)\E[\Zrisk(f,\fr)]
               - (1/\moms)\Zriski(f,\fr)}\right]
  \le 1
  \,.
\end{split} \end{equation}
So $\procl$ satisfies \crefi{thm:eproc-sup-n}{mom}
with $\moms \leftarrow \moms/n$.

Hence, all the requirements of \cref{thm:eproc-sup-n} hold,
and we get with probability at least $1-\gamma/2$ that
\begin{equation*}
  \sup_{f\in\setFor}\procl(f,\Dn)
  \le \moms\frac{\HeF(\epsilon,\setFor) + \ln(\frac4\gamma)}{n}
    + \frac{32\oqmod}{\sqrt{n}}\int_\delta^\epsilon \hspace{-1mm}
      \sqrt[q]{2\HeF(z,\setFor) + \ln\big(\frac{32\,\epsilon}{z\gamma}\big)}
      \, dz
    + 8 \delta \procmodB
  \,.
\end{equation*}
Combining this with \eqref{eq:uboundprf:2} proves the claim.

\subsection{Suprema of empirical processes}
\label{sec:supempproc}

In this section, our goal is to prove \cref{thm:eproc-sup-n},
which we used in \cref{sec:uboundprf} as the main tool
to prove \cref{thm:ermub}.
For this, we start with finite class lemmas,
then adapt the classical chaining argument
(e.g., \citealp[Section~3]{Pollard1990};
       \citealp[Chapter~3]{VanDeGeer2000};
       \citealp[Section~13.1]{BoucheronLugosiMassart2012})
to our setting, and finally put these together
to prove \cref{thm:eproc-sup-n}.

First, consider the probabilistic version of the well-known inequality
about the maximum of finitely many random variables
(e.g., \citealp[Lemma~7]{BianchiLugosi1999};
       \citealp[Theorem~2.5]{BoucheronLugosiMassart2012}).
\begin{lemma}
\label{thm:eproc-max-sqrtn}
  Let $\setF$ be a nonempty, finite set (that is $1 \le |\setF| < \infty$),
  $\sigma \in [0,\infty)$,
  and $\W_f$ be random variables such that $\onorm{\W_f} \le \sigma$
  holds for all $f \in \setF$.
  Then for all $\gamma > 0$,
  \mbox{$\P\big\{\max_{f\in\setF}\W_f > \sigma\sqrt[q]{\ln(2|\setF|/\gamma)}\big\}
        \le \gamma$}.
\end{lemma}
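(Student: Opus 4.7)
The plan is to combine a Chernoff-style tail bound on each individual $\W_f$ with a union bound over the finite set $\setF$. The starting point is the definition of the Orlicz norm: from $\onorm{\W_f} \le \sigma$ and the specific choice $\youngfun(x) = e^{|x|^q} - 1$, I get $\E[e^{|\W_f|^q/\sigma^q}] \le 2$ for each $f \in \setF$ (trivially true if $\sigma = 0$, interpreting the exponent appropriately via a limiting argument, or noting that $\sigma = 0$ forces $\W_f = 0$ a.s.).

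Next, for any threshold $t > 0$, I would pass to the event $\{\W_f > t\}$ which is contained in $\{|\W_f|^q \ge t^q\}$, and apply Markov's inequality to the nonnegative random variable $e^{|\W_f|^q/\sigma^q}$:
\begin{equation*}
  \P\{\W_f > t\}
  \le \P\big\{e^{|\W_f|^q/\sigma^q} \ge e^{t^q/\sigma^q}\big\}
  \le 2\,e^{-t^q/\sigma^q}.
\end{equation*}
A union bound over the finitely many $f \in \setF$ then gives
\begin{equation*}
  \P\Big\{\max_{f\in\setF}\W_f > t\Big\}
  \le 2|\setF|\,e^{-t^q/\sigma^q}.
\end{equation*}

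Finally, I would choose $t \defeq \sigma\sqrt[q]{\ln(2|\setF|/\gamma)}$, which makes the right-hand side exactly $\gamma$ and yields the claim. The only mild subtlety is the boundary case $\sigma = 0$: there $\onorm{\W_f} = 0$ forces $\W_f = 0$ almost surely, so $\max_{f\in\setF}\W_f \le 0$ almost surely and the inequality is trivial (with the convention $0 \cdot \infty = 0$ for the threshold). There is no real obstacle here; the argument is the canonical two-line Chernoff-plus-union-bound calculation tailored to the $\youngfun$-Orlicz norm, and the constant $2$ inside the logarithm is exactly what compensates the factor $\E[\youngfun(\W_f/\sigma)] + 1 \le 2$.
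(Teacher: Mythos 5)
Your argument is correct and is essentially the paper's own proof: the paper bundles your Markov/Chernoff step into \crefi{thm:onorm}{prob} (which asserts $\P\{\enorm{\vW} \ge t\} \le 2e^{-t^q/\onorm{\vW}^q}$) and then applies the same union bound and the same choice of $t = \sigma\sqrt[q]{\ln(2|\setF|/\gamma)}$. Your handling of the $\sigma=0$ boundary case also matches the paper's remark that the claim is trivial there.
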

\begin{proof}
  The claim is trivial for $\sigma = 0$.
  Let $\sigma > 0$
  and set $t \defeq \sigma\sqrt[q]{\ln(2|\setF|/\gamma)}$.
  Then, using the union bound and \crefi{thm:onorm}{prob}, we get
  \begin{equation*}
    \P\Big\{\max_{f\in\setF}\W_f > t\Big\}
    \le \sum_{f\in\setF} \P\big\{\W_f > t\big\}
    \le 2|\setF| e^{-t^q/\sigma^q}
      = \gamma
    \,.
    \qedhere
  \end{equation*}
\end{proof}

When a moment condition, similar to \cond{mom},
is satisfied for $\W_f$,
\cref{thm:eproc-max-sqrtn} can be strengthened by the following result
(for further explanation, see the discussion after \cref{thm:eproc-max-n}).
\begin{lemma}
\label{thm:eproc-max-n}
  Let $\setF$ be a nonempty, finite set (that is $1 \le |\setF| < \infty$),
  $\moms \in (0,\infty)$,
  and $\W_f$ be random variables such that $\E\big[e^{\W_f/\moms}\big] \le 1$
  holds for all $f \in \setF$.
  Then for all $\gamma > 0$,
  $\P\big\{\max_{f\in\setF}\W_f > \moms\ln(|\setF|/\gamma)\big\} \le \gamma$.
\end{lemma}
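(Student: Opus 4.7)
The plan is to combine a union bound with an exponential Markov (Chernoff-type) inequality, mirroring the structure of the proof of \cref{thm:eproc-max-sqrtn} but replacing the sub-Gaussian/sub-exponential tail from \crefi{thm:onorm}{prob} with the direct exponential moment assumption $\E[e^{\W_f/\moms}] \le 1$. Since we only need to control a maximum over finitely many random variables, no chaining is required.

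Concretely, I would fix $t \defeq \moms \ln(|\setF|/\gamma)$ and write
\begin{equation*}
  \P\Big\{\max_{f\in\setF}\W_f > t\Big\}
  \le \sum_{f\in\setF} \P\{\W_f > t\}
  = \sum_{f\in\setF} \P\big\{e^{\W_f/\moms} > e^{t/\moms}\big\}
  \le \sum_{f\in\setF} e^{-t/\moms}\,\E\big[e^{\W_f/\moms}\big]
  \le |\setF|\,e^{-t/\moms}
  = \gamma,
\end{equation*}
where the first step is the union bound, the second uses monotonicity of $x \mapsto e^{x/\moms}$ (valid since $\moms > 0$), the third is Markov's inequality applied to the nonnegative random variable $e^{\W_f/\moms}$, and the fourth uses the hypothesis $\E[e^{\W_f/\moms}] \le 1$. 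The final equality is just the definition of $t$.

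There is essentially no obstacle here: the result is the standard Cram\'er--Chernoff bound for a finite maximum under a normalized exponential moment assumption, and the only subtle point is that the absence of a factor of $2$ (compared with \cref{thm:eproc-max-sqrtn}) comes from the fact that we do not need a two-sided tail bound nor the Orlicz-norm scaling $\E[\youngfun(\W_f/\moms)] \le 1$, but rather the already-centered one-sided condition $\E[e^{\W_f/\moms}] \le 1$. This is exactly the form that \cond{mom} provides at the level of a single function after taking products over the i.i.d.\ sample, as was used in \eqref{eq:uboundprf:3}, so the lemma will plug in cleanly when it is invoked in the chaining argument.
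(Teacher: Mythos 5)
Your proof is correct and is essentially the same as the paper's: both fix $t = \moms\ln(|\setF|/\gamma)$, apply the union bound, and then a Chernoff/Markov step using $\E[e^{\W_f/\moms}]\le 1$ to get $\P\{\W_f>t\}\le e^{-t/\moms}$. The only difference is that you spell out the Markov step and the monotonicity of the exponential, which the paper compresses into the phrase ``Chernoff bounds.''
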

\begin{proof}
  Set $t \defeq \moms \ln(|\setF|/\gamma)$.
  Then, using the union and Chernoff bounds, we get
  \begin{equation*}
    \P\Big\{\max_{f\in\setF}\W_f > t\Big\}
    \le \sum_{f\in\setF} \P\big\{\W_f > t\big\}
    \le |\setF| \, e^{-t/\moms} \, \E\big[e^{\W_f/\moms}\big]
    \le |\setF| \, e^{-t/\moms}
      = \gamma
    \,.
    \qedhere
  \end{equation*}
\end{proof}

To see that \cref{thm:eproc-max-n} is indeed stronger than
\cref{thm:eproc-max-sqrtn} for our purposes,
notice that they scale differently in their parameters
$\moms$ and $\sigma$ when applied to averages of
independent random variables. If $\W_f^{(1)},\ldots,\W_f^{(n)}$ are
$n \in \setN$ independent centered random variables
with $\onorm{\W_f^{(i)}} \le \sigma$ for $q \in \{1,2\}$,
their average satisfies
$\onorm{\frac{1}{n}\sum_{i=1}^n\W_p^{(i)}} \le 4\sigma/\sqrt{n}$
by \cref{thm:onorm-sum-indept}.
On the other hand, it is straightforward to show
(as we did by Equation~\ref{eq:uboundprf:3})
that if $\W_f^{(1)},\ldots,\W_f^{(n)}$
are $n$ independent (not necessarily centered) random variables with
$\E\big[\exp(\W_f^{(i)}/\moms)\big] \le 1$,
then their average satisfies the moment condition with $\moms/n$.
This speed-up, from $\frac{\sigma}{\sqrt{n}}\ln|\setF|$ to
$\frac{\moms}{n}\ln|\setF|$, will allow us
to derive better bounds when the moment condition \cond{mom} holds
with $\moms$ being independent of $n$.

We now extend \cref{thm:eproc-max-sqrtn} to infinite classes
by a probabilistic version of the standard chaining argument.
The proof goes along the development of Lemma~3.4 of
\citet{Pollard1990}, replacing the packing sets by internal covering numbers
(for better numerical constants) and the sample continuity condition by
uniform Lipschitzness (for truncating the integral at $\delta$). The result
is also similar to Proposition~3 of \citet{BianchiLugosi1999}, which works
for the \subgaussian case, provides and expected value result,
and uses external covering numbers with a slightly
different chaining argument.
\begin{theorem}
\label{thm:eproc-sup-sqrtn}
  Let $(\setF,\distF)$ be a separable metric space,
  $\W$ be a random variable on some set~$\setW$,
  and $\proc : \setF \times \setF \to \setR$ be a function.
  Furthermore, suppose that the following conditions hold:
  \begin{enumerate}[label=(\alph*),ref=\alph*]
  \itemsep0em
  \item \label{thm:eproc-sup-sqrtn:lip}
        there exists $\gamma \in (0,1)$, $\procmodB \ge 0$,
        and $\procmod : \setW \to [0,\infty)$
        such that $\P\{\procmod(\W) > \procmodB\} \le \gamma/2$,
        and $\proc(f,\W)-\proc(g,\W) \le \distF(f,g) \, \procmod(\W)$ \as
        for all $f, g \in \setF$,
  \item \label{thm:eproc-sup-sqrtn:psi}
        $\onorm{\proc(f,\W)-\proc(g,\W)} \le \oqmod \, \distF(f,g)$
        with some $\oqmod \in [0,\infty]$
        for all $f, g \in \setF$,
  \item \label{thm:eproc-sup-sqrtn:bound}
        there exist $\beta \ge 0$ and $f_0 \in \setF$ such that
        $\proc(f_0,\W) = 0$~\as,
        and $\sup_{f\in\setF} \distF(f,f_0) \le \beta$.
  \end{enumerate}
  Then, for all $0 \le \delta \le \beta/2$,
  we have with probability at least $1-\gamma$ that
  \begin{equation*}
    \sup_{f\in\setF}\proc(f,W)
    \le 4\oqmod \int_\delta^{\beta/2}
                \sqrt[q]{\HeF(z,\setF) + \ln(4\beta/(z\gamma))} \, dz
      + 4\delta T
    \,.
  \end{equation*}
  Furthermore, the result holds without \eqref{thm:eproc-sup-sqrtn:psi},
  that is using $\oqmod = \infty$, $\delta = \beta/2$, and $\infty\cdot0 = 0$.
\end{theorem}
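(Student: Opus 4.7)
The approach is the classical chaining argument (in the style of \citet[Section~3]{Pollard1990}), modified in two ways to fit the present setting: the chain is truncated at scale $\delta$ rather than driven to zero, and the top-level residual is handled through the uniform Lipschitz bound (a) instead of a sample-continuity assumption. The plan is to pick geometric scales $\epsilon_k \defeq \beta/2^k$, set $\setF_0 \defeq \{f_0\}$ (a valid $\beta$-cover of $\setF$ by condition (c)), and for $k \ge 1$ let $\setF_k$ be a minimal internal $\epsilon_k$-cover so that $|\setF_k| = \NcF(\epsilon_k,\setF)$. The projections $\pi_k : \setF \to \setF_k$ should be arranged hierarchically so that $\pi_{k-1}(f)$ depends on $f$ only through $\pi_k(f)$; this caps the number of distinct chain pairs $(\pi_{k-1}(f),\pi_k(f))$ across $f \in \setF$ by $|\setF_k|$, which is what ultimately yields a single $\HeF(\epsilon_k,\setF)$ in the final bound rather than $2\HeF$. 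Let $K$ be the largest integer with $\epsilon_K > \delta$ (take $K = \infty$ when $\delta = 0$, well-defined by separability).

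Using $\proc(\pi_0(f),W) = \proc(f_0,W) = 0$, I telescope
\begin{equation*}
  \proc(f,W) = [\proc(f,W)-\proc(\pi_K(f),W)] + \sum_{k=1}^{K}\bigl[\proc(\pi_k(f),W)-\proc(\pi_{k-1}(f),W)\bigr].
\end{equation*}
The residual is at most $\epsilon_K\,\procmod(W)$ almost surely by condition (a), and therefore at most $2\delta\procmodB$ on the event $\{\procmod(W) \le \procmodB\}$ of probability $\ge 1-\gamma/2$. For $k \ge 1$, the triangle inequality gives $\distF(\pi_k(f),\pi_{k-1}(f)) \le \epsilon_k + \epsilon_{k-1} = 3\epsilon_k$, so by condition (b) each chain increment has $\youngfun$-norm $\le 3\epsilon_k\oqmod$. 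Applying \cref{thm:eproc-max-sqrtn} to the at most $|\setF_k|$ distinct increments at level $k$ with confidence $\gamma_k \defeq \gamma/2^{k+1}$ (so $\sum_{k\ge 1}\gamma_k \le \gamma/2$), a union bound over levels gives, jointly for all $k$ with probability $\ge 1-\gamma/2$,
\begin{equation*}
  \max_{f\in\setF}\bigl[\proc(\pi_k(f),W)-\proc(\pi_{k-1}(f),W)\bigr] \le 3\epsilon_k\oqmod\sqrt[q]{\HeF(\epsilon_k,\setF)+\ln(4\beta/(\epsilon_k\gamma))},
\end{equation*}
using $\ln(2|\setF_k|/\gamma_k) \le \HeF(\epsilon_k,\setF) + \ln(4\beta/(\epsilon_k\gamma))$ after elementary manipulation (noting $\epsilon_k = \beta/2^k$).

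Finally, since $\HeF(\cdot,\setF)$ and $z \mapsto \ln(4\beta/(z\gamma))$ are non-increasing and $\epsilon_k = 2\epsilon_{k+1}$, each term is bounded by $2\int_{\epsilon_{k+1}}^{\epsilon_k}\sqrt[q]{\HeF(z,\setF)+\ln(4\beta/(z\gamma))}\,dz$, so the telescoped sum is at most $2\int_{\epsilon_{K+1}}^{\beta/2}\sqrt[q]{\cdot}\,dz$, which after reconciling the slack between $\epsilon_{K+1}$ and $\delta$ and folding it into constants fits inside the stated $4\oqmod\int_\delta^{\beta/2}\sqrt[q]{\cdot}\,dz$; the residual $2\delta\procmodB$ similarly fits inside $4\delta\procmodB$. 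The degenerate case $\oqmod = \infty$ is covered by choosing $\delta = \beta/2$, forcing $K = 0$ so that only the residual survives. The hard part is not any new probabilistic idea beyond \cref{thm:eproc-max-sqrtn} but careful constant-chasing: arranging the hierarchical projections (to avoid the factor of two on $\HeF$), the confidence allocation $\gamma_k = \gamma/2^{k+1}$ (to produce the shift $\ln(4\beta/(z\gamma))$ at $z = \epsilon_k$), and the truncation index $K$ so that the chaining contribution and the Lipschitz residual jointly match the advertised form.
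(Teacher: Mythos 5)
Your overall strategy matches the paper's: telescope from the singleton $\{f_0\}$ through successively finer covers, bound each increment via \cref{thm:eproc-max-sqrtn}, sum geometrically allocated failure probabilities $\gamma_k$, and use condition (a) to control the untouched residual at the finest scale instead of driving the chain to zero. The difference is cosmetic (you chain top-down through projections $\pi_k$; the paper chains bottom-up by treating each $\setF_{k+1}$ as a finite class and projecting it onto $\setF_k$), and both do yield a clean single $\HeF(\epsilon_k,\setF)$ per level rather than $2\HeF$. The paper uses the cleaner distance bound $\distF(f,g_k(f)) \le \epsilon_k$ for $f \in \setF_{k+1}$, which is tighter than your $3\epsilon_k$ and simpler than the hierarchical $2\epsilon_k$ you would need.

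However, the ``constant-chasing'' you explicitly defer is where the argument breaks. Two concrete problems. First, the comparison of the level-$k$ term to the integral is wrong: the integrand is non-increasing, so $\int_{\epsilon_{k+1}}^{\epsilon_k}\sqrt[q]{\cdot}\,dz \ge (\epsilon_k-\epsilon_{k+1})\,\sqrt[q]{\cdot}\big|_{z=\epsilon_k} = (\epsilon_k/2)\sqrt[q]{\cdot}\big|_{z=\epsilon_k}$; a term of size $3\epsilon_k\oqmod\sqrt[q]{\cdot}\big|_{\epsilon_k}$ is therefore at most $6\oqmod\int_{\epsilon_{k+1}}^{\epsilon_k}$, not $2\oqmod\int$ as you claim --- and even after switching to hierarchical projections (distance $\le 2\epsilon_k$) you get $4\oqmod\int$, not $2\oqmod\int$. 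Second, your truncation index $K$ (largest with $\epsilon_K > \delta$) gives $\epsilon_{K+1} \le \delta$, so the summed integrals cover $[\epsilon_{K+1},\beta/2] \supseteq [\delta,\beta/2]$, and the excess $\int_{\epsilon_{K+1}}^{\delta}\sqrt[q]{\cdot}\,dz$ cannot be ``folded into constants'': it is an additive term whose size depends on $\HeF$ near $\delta$ and is not dominated by $\int_\delta^{\beta/2}$ in general. The paper resolves both at once by picking $m$ with $2\delta \le \epsilon_m < 4\delta$, so that $\epsilon_{m+1} \ge \delta$ keeps the integral inside $[\delta,\beta/2]$ while the residual pays $\epsilon_m \procmod(\W) < 4\delta\procmodB$ --- your $2\delta\procmodB$ looks like a saving, but you spent it (and more) on the uncontrolled piece of the integral. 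Shift your cut index by one so the chain stops at a cover at scale $\ge 2\delta$ and use the level-wise distance bound $\epsilon_k$, and the stated constants come out.
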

\begin{proof}
  If there exists $z \in (\delta,\beta/2]$ such that $\NcF(z,\setF) = \infty$,
  then the integral is infinite and so the claim is trivial.
  The claim is also trivial for $\beta = 0$ or $\oqmod = 0$.
  Now assume that $0 < \beta$, $\oqmod \in (0,\infty)$
  and $\NcF(z,\setF) < \infty$ for all $z \in (\delta,\beta/2]$.

  Let $\delta > 0$
  and $m \in \setN \cup \{0\}$ be such that $2\delta \le \beta/2^m < 4\delta$.
  Further, let $\setF_0 \defeq \{f_0\}$,
               $\epsilon_0 \defeq \beta$, $\epsilon_k \defeq \beta/2^k$,
  and $\setF_k$ be an $\epsilon_k$-cover of $\setF$ under $\distF$
  with minimal cardinality for all $k \in \{1,\ldots,m\}$.
  Notice that $\setF_0$ is an $\epsilon_0$-cover
  by \eqref{thm:eproc-sup-sqrtn:bound},
  and define $\gkf$ to be the closest element to $f \in \setF$ in $\setF_k$
  for all $k = 0,\ldots,m$, that is
  $\gkf \in \argmin_{g\in\setF_k}\distF(f,g)$.

  Fix some $k \in \{0,\ldots,m-1\}$ and $f\in\setF_{k+1}$.
  When $k = 0$, we have
  $\distF(f,\gkf) = \distF(f_0,f) \le \beta = \epsilon_0$
  by \eqref{thm:eproc-sup-sqrtn:bound},
  while for $k > 0$, $\distF(f,\gkf) \le \epsilon_k$ holds by
  the definition of~$\setF_k$.
  So \eqref{thm:eproc-sup-sqrtn:psi} implies that
  $\onorm{\proc(f,\W) - \proc(\gkf,\W)} \le \epsilon_k \oqmod$.
  Then, we can chain maximal inequalities for all $k = 0,\ldots,m-1$ by
  using $\P\{\X+\Y > t\}
         %  = \P\{\X+\Y > t,\Y \le t_0\} + \P\{\X+\Y > t,\Y > t_0\}
         \le \P\{\X > t-t_0\} + \P\{\Y > t_0\}$
  for any random variables $\X, \Y$,
  and \cref{thm:eproc-max-sqrtn} with
  $b_k \defeq \epsilon_k \oqmod \sqrt[q]{\ln(2|\setF_{k+1}|/\gamma_k)}$
  and $\gamma_k \defeq 2^{-(k+2)} \gamma$,
  to get for all $t_k > 0$ that
  \begin{equation} \begin{split}
  \label{eq:eproc-sup-sqrtn:1}
   &\P\Big\{\max_{f\in\setF_{k+1}}\proc(f,\W) > t_k\Big\}
    = \P\Big\{\max_{f\in\setF_{k+1}}
              \proc(\gkf,\W) + \proc(f,\W) - \proc(\gkf,\W) > t_k\Big\}
    \\ & \hspace{10mm}
    \le \P\Big\{\max_{f\in\setF_k}\proc(f,\W) > t_k-b_k\Big\}
      + \P\Big\{\max_{f\in\setF_{k+1}}\proc(f,\W)-\proc(\gkf,\W) > b_k\Big\}
    \\ & \hspace{10mm}
    \le \P\Big\{\max_{f\in\setF_k}\proc(f,\W) > t_k-b_k\Big\} + \gamma_k
    \,.
  \end{split} \end{equation}
  Additionally, using \eqref{thm:eproc-sup-sqrtn:lip},
  $\sup_{f\in\setF}\distF(f,\gmf) \le \epsilon_m < 4\delta$,
  and $\P\{\procmod(\W) > T\} \le \gamma/2$,
  we get for all $t > 0$ that
  \begin{equation} \begin{split}
  \label{eq:eproc-sup-sqrtn:2}
   &\P\Big\{\sup_{f\in\setF}\proc(f,\W) > t\Big\}
    = \P\Big\{\max_{f\in\setF_m}
              \proc(\gmf,\W) + \proc(f,\W) - \proc(\gmf,\W) > t\Big\}
    \\ & \hspace{10mm}
    \le \P\Big\{\max_{f\in\setF_m}\proc(f,\W) > t - 4\delta T\Big\}
      + \P\Big\{\sup_{f\in\setF}\proc(f,\W)-\proc(\gmf,\W) > 4\delta T\Big\}
    \\ & \hspace{10mm}
    \le \P\Big\{\max_{f\in\setF_m}\proc(f,\W) > t - 4\delta T\Big\}
      + \P\Big\{\sup_{f\in\setF}\distF(f,\gmf)\,\procmod(\W) > 4\delta T\Big\}
    \\ & \hspace{10mm}
    \le \P\Big\{\max_{f\in\setF_m}\proc(f,\W) > t - 4\delta T\Big\}
      + \gamma/2
    \,.
  \end{split} \end{equation}
  Then, set $t \defeq 4\delta T + \sum_{k=0}^{m-1} b_k$,
  and use \eqref{eq:eproc-sup-sqrtn:1} repeatedly
  with $k = m-1,m-2,\ldots,0$ and $t_k \defeq \sum_{l=0}^k b_l$
  to bound the first term of \eqref{eq:eproc-sup-sqrtn:2}
  with $\P\{\max_{f\in\setF_0}\proc(f,\W) > b_0\}
        %= \P\{\proc(f_0,\W) > b_0\}
        = 0$
  and $\sum_{k=0}^{m-1}\gamma_k
       = \frac\gamma4\sum_{k=0}^{m-1}2^{-k}
       < \gamma / 2$,
  which provides with probability at least $1-\gamma$ that
  \begin{equation} \begin{split}
  \label{eq:eproc-sup-sqrtn:3}
    \sup_{f\in\setF}\proc(f,\W)
   &\le 4\delta T +
        \oqmod \sum_{k=0}^{m-1}
        \epsilon_k \sqrt[q]{\ln(2|\setF_{k+1}|/\gamma_k)}
    \\
   &= 4\delta T +
      4\oqmod \sum_{k=0}^{m-1}
      \epsilon_{k+2} \sqrt[q]{\HeF(\epsilon_{k+1},\setF)
                              + \ln(4\beta/(\gamma\epsilon_{k+1}))}
    \,,
  \end{split} \end{equation}
  because $\epsilon_k = 4\epsilon_{k+2}$,
  $\ln|\setF_{k+1}| = \HeF(\epsilon_{k+1},\setF)$,
  and $2/\gamma_k = 4\beta/(\gamma\epsilon_{k+1})$.

  Next, to upper bound the sum of \eqref{eq:eproc-sup-sqrtn:3}
  by an integral, use the nondecreasing property of $\HeF(z,\setF)$
  as $z \to 0$, and $\epsilon_{k+1}-\epsilon_{k+2} = \epsilon_{k+2}$
  to obtain
  \begin{equation*}
    \epsilon_{k+2}\sqrt[q]{\HeF(\epsilon_{k+1},\setF)
                           +\ln(4\beta/(\gamma\epsilon_{k+1}))}
    \le \int_{\epsilon_{k+2}}^{\epsilon_{k+1}}
        \sqrt[q]{\HeF(z,\setF) + \ln(4\beta/(z\gamma)} \, dz
    \,,
  \end{equation*}
  for all $k = 0,\ldots,m-1$ with $\epsilon_{m+1} \defeq \beta/2^{m+1}$.
  Plugging this into \eqref{eq:eproc-sup-sqrtn:3},
  we get the claim by $\epsilon_1 = \beta/2$ and $\epsilon_{m+1} \ge \delta$.

  Finally notice that for $\delta = \beta/2$ (that is $m = 0$),
  we use only \eqref{eq:eproc-sup-sqrtn:2}
  and ignore \eqref{thm:eproc-sup-sqrtn:psi} altogether,
  hence justifying the $0\cdot\infty = 0$ convention
  for the $S = \infty$ case.
  Furthermore, the $\delta = 0$ case can be obtained
  through the limit $\delta \to 0$.
\end{proof}

Now we extend the improved finite class lemma (\cref{thm:eproc-max-n})
to infinite classes and prove the main result of this section,
\cref{thm:eproc-sup-n}.
The idea behind the proof is to apply \cref{thm:eproc-max-n}
in the first step of the chain and the previously developed
chaining technique (\cref{thm:eproc-sup-sqrtn}) to the remainder.
\begin{theorem}
\label{thm:eproc-sup-n}
  Let $(\setF,\distF)$ be a separable metric space,
  $\W$ be a random variable on some set~$\setW$,
  and $\procl : \setF \times \setF \to \setR$ be a function.
  Furthermore, define $\proc(f,w) \defeq \procl(f,w) - \E[\procl(f,\W)]$
  for all $f \in \setF$, $w \in \setW$, and suppose
  that the following conditions hold:
  \begin{enumerate}[label=(\alph*),ref=\alph*]
  \itemsep0em
  \item \label{thm:eproc-sup-n:lip}
        there exists $\gamma \in (0,1)$, $\procmodB \ge 0$,
        and $\procmod : \setW \to [0,\infty)$
        such that $\P\{\procmod(\W) > \procmodB\} \le \gamma/2$,
        and $\proc(f,\W)-\proc(g,\W) \le \distF(f,g) \, \procmod(\W)$ \as
        for all $f, g \in \setF$,
  \item \label{thm:eproc-sup-n:psi}
        $\onorm{\proc(f,\W)-\proc(g,\W)} \le \oqmod \, \distF(f,g)$
        with some $\oqmod \in [0,\infty]$
        for all $f, g \in \setF$,
  \item \label{thm:eproc-sup-n:mom}
        there exists $\moms > 0$ such that
        $\E\big[\exp\big(\procl(f,\W)/\moms\big)\big] \le 1$
        for all $f \in \setF$.
  \end{enumerate}
  Then, for all $0 \le \delta \le \epsilon$,
  we have with probability at least $1-\gamma$ that
  \begin{equation*}
    \sup_{f\in\setF} \procl(f,\W)
    \le \moms\big(\HeF(\epsilon,\setF)+\ln(2/\gamma)\big)
    + 8 \oqmod \hspace{-1mm} \int_\delta^\epsilon \hspace{-1mm}
        \sqrt[q]{2\,\HeF(z,\setF) + \ln(16\epsilon/(z\gamma))} \, dz
      + 8\delta \, \procmodB
    \,.
  \end{equation*}
  Furthermore, the result holds without \eqref{thm:eproc-sup-n:psi},
  that is using $\oqmod = \infty$, $\delta = \beta/2$, and $\infty\cdot0 = 0$.
\end{theorem}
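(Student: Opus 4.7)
The overall strategy is to adapt the chaining argument of \cref{thm:eproc-sup-sqrtn} by replacing its singleton-center base case with the stronger finite-class bound of \cref{thm:eproc-max-n}, applied at the coarsest scale $\epsilon$. I would fix a minimal $\epsilon$-cover $\setF_0 \subseteq \setF$ of $(\setF, \distF)$ and, for each $f \in \setF$, let $g_0(f) \in \setF_0$ be a point closest to $f$ (so $\distF(f, g_0(f)) \le \epsilon$). The starting splitting
\[
\sup_{f \in \setF} \procl(f, \W)
\le \max_{g \in \setF_0} \procl(g, \W)
 + \sup_{f \in \setF}\bigl(\procl(f, \W) - \procl(g_0(f), \W)\bigr)
\]
separates the moment-governed coarse scale from the Orlicz-governed fine-scale residual.

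For the first summand I would apply \cref{thm:eproc-max-n} to the finite family $\{\procl(g, \W) : g \in \setF_0\}$ with failure budget $\gamma/2$; condition~\eqref{thm:eproc-sup-n:mom} then immediately yields $\max_{g \in \setF_0} \procl(g, \W) \le \moms(\HeF(\epsilon, \setF) + \ln(2/\gamma))$ with probability at least $1 - \gamma/2$, producing the leading $\moms(\HeF(\epsilon,\setF) + \ln(2/\gamma))$ term.

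For the residual I would mimic the chaining construction of \cref{thm:eproc-sup-sqrtn}, now initialized at scale $\epsilon$ rather than at $\beta/2$: take $\epsilon_k \defeq \epsilon/2^k$ for $k = 0, 1, \ldots, m$ with $m$ the smallest integer such that $2\delta \le \epsilon_m/2 < 4\delta$, minimal $\epsilon_k$-covers $\setF_k$, and $g_k(f) \in \setF_k$ the closest element to $f$. Writing the telescoping
\[
\procl(f, \W) - \procl(g_0(f), \W)
= \sum_{k=0}^{m-1}\bigl(\proc(g_{k+1}(f), \W) - \proc(g_k(f), \W)\bigr)
 + \bigl(\proc(f, \W) - \proc(g_m(f), \W)\bigr)
 + \Delta_f,
\]
with $\Delta_f \defeq \E[\procl(f,\W)] - \E[\procl(g_0(f),\W)]$, the centered $\proc$-link Orlicz norms are bounded by $\onorm{\proc(g_{k+1}(f),\W) - \proc(g_k(f),\W)} \le \oqmod(\epsilon_{k+1}+\epsilon_k) \le 3\oqmod\epsilon_k$ by condition~\eqref{thm:eproc-sup-n:psi}. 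Applying \cref{thm:eproc-max-sqrtn} over the $|\setF_{k+1}|$ link pairs at budgets $\gamma_k \defeq \gamma/2^{k+3}$ (summing to $\gamma/4$) and then converting the discrete sum to an integral as in \cref{thm:eproc-sup-sqrtn} yields the entropy integral $8\oqmod \int_\delta^\epsilon \sqrt[q]{2\HeF(z,\setF) + \ln(16\epsilon/(z\gamma))}\, dz$; the terminal residual $\sup_f\bigl(\proc(f,\W) - \proc(g_m(f),\W)\bigr) \le \epsilon_m\procmod(\W) \le 4\delta\procmodB$ is controlled by condition~\eqref{thm:eproc-sup-n:lip} on the event $\{\procmod(\W) \le \procmodB\}$ of probability at least $1 - \gamma/4$, giving the $8\delta\procmodB$ term. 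A final union bound $\gamma/2 + \gamma/4 + \gamma/4 = \gamma$ then combines the three high-probability events into the claimed failure budget.

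The main technical obstacle is the treatment of the deterministic shift $\Delta_f$, which separates $\procl$-differences from the centered $\proc$-differences that the chaining machinery actually controls. Jensen's inequality applied to $x \mapsto e^{x/\moms}$ within condition~\eqref{thm:eproc-sup-n:mom} gives $\E[\procl(f,\W)] \le 0$, and the rearrangement $\procl(g_0(f),\W) + \Delta_f = \proc(g_0(f),\W) + \E[\procl(f,\W)] \le \proc(g_0(f),\W)$ is what allows $\Delta_f$ to be folded into the already-controlled coarse-scale contribution rather than appearing as a separate additive error. This interplay between the MGF-based bound at the top of the chain and the Orlicz-based bounds on the links is exactly what produces the improved $\moms/n$ scaling for the leading term (versus the $\moms/\sqrt{n}$ that \cref{thm:eproc-sup-sqrtn} would give on its own), and it is the step I expect to require the most care to get right.
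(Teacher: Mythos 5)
Your high-level plan—split at scale $\epsilon$, control the coarse maximum by \cref{thm:eproc-max-n}, chain the residual—is the right shape, but the proposed handling of the deterministic shift $\Delta_f = \E[\procl(f,\W)] - \E[\procl(g_0(f),\W)]$ does not close the gap, and this is exactly the step where the real work lies.

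Your rearrangement $\procl(g_0(f),\W) + \Delta_f = \proc(g_0(f),\W) + \E[\procl(f,\W)] \le \proc(g_0(f),\W)$ is algebraically correct, but "folding $\Delta_f$ into the coarse-scale contribution" converts that contribution from $\max_{g\in\setF_0}\procl(g,\W)$ to $\max_{g\in\setF_0}\proc(g,\W)$. That is strictly worse, not better: since $\E[\procl(g,\W)] \le 0$ we have $\proc(g,\W) = \procl(g,\W) - \E[\procl(g,\W)] \ge \procl(g,\W)$, and \cref{thm:eproc-max-n} cannot be applied to the centered quantities $\proc(g,\W)$ because the moment condition $\E[e^{\proc(g,\W)/\moms}] = e^{-\E[\procl(g,\W)]/\moms}\,\E[e^{\procl(g,\W)/\moms}]$ has a potentially large prefactor $e^{-\E[\procl(g,\W)]/\moms} \ge 1$ that condition \eqref{thm:eproc-sup-n:mom} does not control. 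So in either grouping you are stuck: keep $\max_g\procl(g,\W)$ and $\Delta_f$ is left uncontrolled (it can be positive and arbitrarily large, since $-\E[\procl(g_0(f),\W)]$ can dominate); fold it in and you lose access to the finite-class lemma. Your argument, as written, in effect only re-derives the trivial bound $\procl(f,\W)\le\proc(f,\W)$.

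The paper's resolution is a specific device you are missing. Instead of the nearest cover point $\gf$, it picks $\maxgf\in\argmax_{g:\distF(g,\gf)\le\epsilon}\E[\procl(g,\W)]$; since $f$ itself lies in that $\epsilon$-ball, $\E[\procl(f,\W)]-\E[\procl(\maxgf,\W)]\le 0$ by construction and the shift term drops \emph{for free}. The set $\setF_\epsilon^*=\{\maxgf : \gf\in\setF_\epsilon\}$ is a $2\epsilon$-cover with $|\setF_\epsilon^*|=\NcF(\epsilon,\setF)$, so \cref{thm:eproc-max-n} still applies at the top, and the residual $\sup_f\{\proc(f,\W)-\proc(\maxgf,\W)\}$ is chained by lifting to the product space $\setK=\{(f,\maxgf)\}$ with the truncated metric $\tilde\distF((f,\maxgf),(h,\maxgh))=\min\{\distF(f,h)+\distF(\maxgf,\maxgh),4\epsilon\}$ and invoking \cref{thm:eproc-sup-sqrtn} there. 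The chain structure, budget allocation, and entropy-integral conversion in your sketch are sound once this $\maxgf$ modification is in place; without it, the proof has a genuine hole.
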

\begin{proof}
  Fix $0 < \delta \le \epsilon$.
  When $\NcF(z,\setF) = \infty$ for some $z \in (\delta,\epsilon]$,
  the claim is trivial, so we can assume that $\NcF(z,\setF) < \infty$
  for all $z \in (\delta,\epsilon]$.

  Let $\setF_\epsilon$ be an $\epsilon$-cover of $\setF$
  under $\distF$ with minimal cardinality
  and define $\gf$ to be the closest element to $f \in \setF$
  in $\setF_\epsilon$, that is
  $\gf \in \argmin_{g\in\setF_\epsilon} \distF(f,g)$.
  Due to Jensen's inequality and \eqref{thm:eproc-sup-n:mom},
  we have $\E[\procl(f,\W)] \le 0$ for all $f \in \setF$.
  Define $\maxgf \in \argmax_{g\in\setF : \distF(g,\gf) \le \epsilon}
                     \E[\procl(g,\W)]$.%
  \footnote{If such $\maxgf$ element does not exist, one can choose another
            element which is arbitrary close to the supremum
            and shrink the gap to zero at the end of the analysis.}
  Then, due to $\distF(\gf,f) \le \epsilon$,
  we have $\E[\procl(f,\W)] \le \E[\procl(\maxgf,\W)]$
  for all $f \in \setF$.
  Further, $\distF(f,\maxgf) \le \distF(f,\gf) + \distF(\gf,\maxgf)
            \le 2\epsilon$
  so $\setF_\epsilon^* \defeq \big\{\maxgf : \gf \in \setF_\epsilon\big\}$
  is a $2\epsilon$-cover of $\setF$ under $\distF$
  with $|\setF^*_\epsilon| = |\setF_\epsilon| = \NcF(\epsilon,\setF)$.

  Now, for the first step of the chain, consider the following decomposition,
  \begin{equation} \begin{split}
  \label{eq:eproc-sup-n:1}
    \sup_{f\in\setF} \procl(f,\W)
    &= \sup_{f\in\setF}
       \Big\{ \procl(\maxgf,\W) + \procl(f,\W) - \procl(\maxgf,\W) \Big\}
    \\
    &= \sup_{f\in\setF}
       \Big\{ \procl(\maxgf,\W) + \proc(f,\W) - \proc(\maxgf,\W)
              + \E\big[\procl(f,\W) - \procl(\maxgf,\W)\big] \Big\}
    \\
    &\le \max_{g\in\setF_\epsilon^*} \procl(g,\W)
       + \sup_{f\in\setF} \Big\{ \proc(f,\W) - \proc\big(\maxgf,\W\big) \Big\}
    \,.
  \raisetag{1.0\baselineskip}
  \end{split} \end{equation}
  Then by \cref{thm:eproc-max-n} and \eqref{thm:eproc-sup-n:mom},
  we obtain with probability at least $1-\gamma/2$ that
  \begin{equation}
  \label{eq:eproc-sup-n:2}
    \max_{g\in\setF_\epsilon^*} \procl(g,\W)
    \le \moms \ln\big(2|\setF_\epsilon^*|/\gamma\big)
      = \moms \big(\HeF(\epsilon,\setF) + \ln(2/\gamma)\big)
    \,.
  \end{equation}

  The rest of the proof is about to upper bound
  the supremal term on the right side of~\eqref{eq:eproc-sup-n:1},
  that is $\sup_{f\in\setF}\proc(f,\W)-\proc(\maxgf,\W)$,
  by using the chaining result of \cref{thm:eproc-sup-sqrtn}.

  Let $\setK \defeq \big\{ (f,\maxgf) : f\in\setF \big\}
             \subseteq \setF \times \setF_\epsilon^*$
  and choose $f_0 \in \argmax_{f\in\setF_\epsilon^*}\E[\procl(f,\W)]$
  so that $f_0 = g_{f_0}^*$ (since $\distF(g_{f_0},f_0) \le \epsilon$),
  implying $(f_0,f_0) \in \setK$.
  Additionally, define
  \begin{equation*} \begin{split}
    \tilde\proc\big((f,\maxgf),w\big)
      &\defeq \proc(f,w) - \proc(\maxgf,w)
    \,, \\
    \tilde\distF\big((f,\maxgf),(h,\maxgh)\big)
      &\defeq \min\big\{ \distF(f,h) + \distF(\maxgf,\maxgh)
                         ,\, 4\epsilon \big\}
    \,,
  \end{split} \end{equation*}
  for all $(f,\maxgf), (h,\maxgh) \in \setF \times \setF_\epsilon^*$,
  and $w \in \setW$.
  Now notice that $(\setF \times \setF_\epsilon^*,\tilde\distF)$
  is a metric space,%
  \footnote{To prove the triangle inequality, use
            $\min\{a+b,c\} \le \min\{a,c\} + \min\{b,c\}$
            for $a,b,c \ge 0$.}
  and for any $f \in \setF$ we have that
  \begin{equation*}
    \tilde\proc\big((f_0,f_0),\W\big) = 0 \,\,\,\as
    \,, \quad
    \tilde\distF\big((f_0,f_0),(f,\maxgf)\big)
    \le 4\epsilon
    \,,
  \end{equation*}
  hence, $\tilde\distF$ and $(f_0,f_0) \in \setK$ satisfies
  \crefi{thm:eproc-sup-sqrtn}{bound} with $\beta = 4\epsilon$.

  Further, since $\distF(f,\maxgf) \le 2\epsilon$ holds
  for all $(f,\maxgf) \in \setK$,
  \eqref{thm:eproc-sup-n:psi}
  and \crefi{thm:onorm}{norm} with $q \ge 1$ implies
  for all $(f,\maxgf), (h,\maxgh) \in \setK$ that
  \begin{equation*} \begin{split}
    \osmallnorm{\tilde\proc((f,\maxgf),\W)
                - \tilde\proc((h,\maxgh),\W)}
    &\le \osmallnorm{\proc(f,\W) - \proc(h,\W)}
       + \osmallnorm{\proc(\maxgh,\W) - \proc(\maxgf,\W)}
    \\
    &\le \big(\distF(f,h) + \distF(\maxgh,\maxgf)\big) \, \oqmod
    \,, \\
    \osmallnorm{\tilde\proc((f,\maxgf),\W)
                - \tilde\proc((h,\maxgh),\W)}
    &\le \osmallnorm{\proc(f,\W) - \proc(\maxgf,\W)}
       + \osmallnorm{\proc(\maxgh,\W) - \proc(h,\W)}
    \\
    &\le 4 \epsilon \, \oqmod
    \,,
  \end{split} \end{equation*}
  hence, $\tilde\proc$ and $\tilde\distF$ satisfies
  \crefi{thm:eproc-sup-sqrtn}{psi} with $\oqmod$.

  Similarly, \eqref{thm:eproc-sup-n:lip} implies that
  \begin{equation*} \begin{split} 
    \tilde\proc\big((f,\maxgf),\W\big) - \tilde\proc\big((h,\maxgh),\W\big)
    &= \proc(f,\W) - \proc(h,\W) + \proc(\maxgh,\W) - \proc(\maxgf,\W)
    \\
    &\le \big(\distF(f,h) + \distF(\maxgh,\maxgf)\big) \, \procmod(\W)
         \,\,\as
    \,, \\
    \tilde\proc\big((f,\maxgf),\W\big) - \tilde\proc\big((h,\maxgh),\W\big)
    &= \proc(f,\W) - \proc(\maxgf,\W) + \proc(\maxgh,\W) - \proc(h,\W)
    \\
    &\le \big(\distF(f,\maxgf) + \distF(\maxgh,h)\big) \, \procmod(\W)
     \le 4 \epsilon \, \procmod(\W)
     \,\,\as
    \,,
  \end{split} \end{equation*}
  so $\tilde\proc$ and $\tilde\distF$ satisfies
  \crefi{thm:eproc-sup-sqrtn}{lip} with $\procmod$.

  Then the requirements of \cref{thm:eproc-sup-sqrtn} hold
  (using $\setF \leftarrow \setK$,
         $\proc \leftarrow \tilde\proc$,
         $\distF \leftarrow \tilde\distF$,
         \mbox{$f_0 \leftarrow (f_0,f_0)$},
         $\beta = 4\epsilon$,
         $\delta \leftarrow 2\delta$,
         $\gamma \leftarrow \gamma/2$),
  so we get with probability at least $1-\gamma/2$ that
  \begin{equation} \begin{split}
  \label{eq:eproc-sup-n:3}
     \sup_{f\in\setF} \big\{ \proc(f,\W) - \proc\big(\maxgf,\W\big) \big\}
   &= \sup_{\kappa\in\setK}\tilde\proc(\kappa,\W)
   \\
   &\le 4 \oqmod \int_{2\delta}^{2\epsilon}
                 \sqrt[q]{\He_{\tilde\distF}(z,\setK)
                          + \ln(32\epsilon/(z\gamma))}
                 \, dz
        + 8\delta \, \procmodB
   \\
   &= 8 \oqmod \int_{\delta}^{\epsilon}
               \sqrt[q]{\He_{\tilde\distF}(2z,\setK)
                        + \ln(16\epsilon/(z\gamma))}
               \, dz
        + 8\delta \, \procmodB
    \,.
  \end{split} \end{equation}

  It remains to bound the entropy of $(\setK,\tilde\distF)$.
  For any $z \in (\delta,\epsilon]$,
  let $\setF_z$ be a \mbox{$z$-cover} of $\setF$ under $\distF$
  with minimal cardinality and define
  $\setK_z \defeq \setF_z \times \setF_\epsilon^*$.
  Then $\setK_z$ is an external $z$-cover of $\setK$
  in the metric space $(\setF \times \setF_\epsilon^*,\tilde\distF)$,
  which means that $\setK_z$ might not be a subset of $\setK$,
  but for any $\kappa \in \setK$ there exists $\hat{\kappa} \in \setK_z$
  for which $\tilde\distF(\kappa,\hat{\kappa}) \le z$.
  Then, as $|\setK_z| = |\setF_z| \cdot |\setF_\epsilon^*|
                    \le \NcF(z,\setF)^2$,
  using the relation between internal and external covering numbers
  \citep[Theorem~1.2.1]{Dudley1999}, we get
  $ \ln\Nc_{\tilde\distF}(2z,\setK)
    \le \ln|\setK_z|
    \le 2\ln\NcF(z,\setF) $.
  Finally, combining this with \eqref{eq:eproc-sup-n:3},
  \eqref{eq:eproc-sup-n:2}, and \eqref{eq:eproc-sup-n:1},
  we get the claim.

  Finally, obtain the $\delta = 0$ case as well
  through the limit $\delta \to 0$.
\end{proof}

\section{Conclusion}
\label{sec:conclusion}

In this paper we provided a probabilistic excess risk upper bound
for ERM estimators in the random design setting.
Although we demonstrated the strength of the result for LSEs,
the result is applicable beyond the squared loss and linear models.

In fact, \cref{thm:ermub} can be used to recover many nonparametric results
in the literature, for example the convex regression result of
\citet[Theorem~1]{Lim2014} for max-affine estimators with $n$ hyperplanes,
or its sieved variant by using only $\lceil n^{d/(d+4)} \rceil$
hyperplanes as discussed by \citet[Theorem~4.2]{BalazsGyorgySzepesvari2015}.
What is interesting though that while we need \cond{psi}
for the entropy integral in the ``standard case'',
the proof for the sieved variant goes similarly to the derivation of
linear regression bounds (\cref{thm:ermub-constrained})
and ignores \cond{psi} completely.
Because of this, the results for the sieved case can be easily
generalized to an unbounded domain (when $\vX$ is \subgaussian)
by \cref{thm:ermub}, but the standard case still needs
a uniformly bounded hypothesis class to satisfy \cond{psi}
without weakening the rate.
This raises the question, whether sieved estimators have this benefit
on the top of  the rate improvement compared to the standard ones, or perhaps
there might appear further improvements or better alternatives to the
chaining technique on the ``tail'' (\cref{thm:eproc-sup-sqrtn}) in the future.

% Acknowledgements should go at the end, before appendices and references
\ifnoarxiv
\acks{This work was supported by the Alberta Innovates Technology Futures
      and the National Science and Engineering Research Council of Canada.}
\fi

\appendix
\section{Orlicz spaces of random vectors}
\label{sec:random}

In this appendix, we shortly review a few useful properties
of the $\youngfun$-Orlicz norm $\onorm{\cdot}$.
We start with the basic characteristics by \cref{thm:onorm}.
\begin{lemma}
\label{thm:onorm}
  Let $p \ge q \ge 1$,
  and $\vW, \vZ \in \setR^d$ be random vectors
  such that $\onorm{\vW} < \infty$ and $\onorm[p]{\vZ} < \infty$.
  Then the following statements hold:
  \begin{enumerate}[label=(\alph*),ref=\alph*]
  \itemsep0em
  \item \label{thm:onorm:mon}
        $\onorm{\vZ} \le \onorm[p]{\vZ}$,
  \item \label{thm:onorm:norm}
        $\onorm{c\vW} = |c|\onorm{\vW}$, $c\in\setR$,
        $\onorm{\vW+\vZ} \le \onorm{\vW} + \onorm{\vZ}$,
  \item \label{thm:onorm:prob}
        $\P\big\{\enorm{\vW} \ge t\big\}
         \le 2e^{-t^q/\onorm{\vW}^q}$
        for all $t \ge 0$,
  \item \label{thm:onorm:mom}
        $\E\big[\enorm{\vW}^s\big]
         \le 2\big(s/(eq)\big)^{s/q} \onorm{\vW}^s$
        for all $s > 0$.
  \end{enumerate}
\end{lemma}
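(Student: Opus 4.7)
The plan is to verify the four statements in sequence using standard Orlicz-space tools. The common starting point is the defining inequality $\E[\youngfun(\vW/\onorm{\vW})]\le 1$, equivalently $\E\exp(\enorm{\vW}^q/\onorm{\vW}^q)\le 2$, combined with the convexity and monotonicity of $\youngfun$ in $\enorm{\vx}$.

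For part (a), I would set $B=\onorm[p]{\vZ}$ and compare $\youngfun(\vZ/B)$ to $\youngfunsym_p(\vZ/B)$ pointwise by splitting on the event $\{\enorm{\vZ}\ge B\}$: there $\enorm{\vZ}/B\ge 1$ together with $p\ge q$ gives $\enorm{\vZ}^q/B^q\le\enorm{\vZ}^p/B^p$, hence $\youngfun(\vZ/B)\le\youngfunsym_p(\vZ/B)$ pointwise; on the complement $\youngfun(\vZ/B)\le e-1$. Combining with the bound $\E[\youngfunsym_p(\vZ/B)]\le 1$ delivers the claim (possibly after rescaling $B$ to absorb the bounded contribution from the second region). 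For (b), homogeneity follows from the substitution $B\mapsto|c|B$ in the defining infimum. The triangle inequality is the standard Luxemburg-norm argument: with $B_i=\onorm{\cdot}$ and $B=B_1+B_2$, write $\enorm{(\vW+\vZ)/B}\le(B_1/B)\enorm{\vW/B_1}+(B_2/B)\enorm{\vZ/B_2}$, apply Jensen's inequality to the convex function $u\mapsto e^{|u|^q}-1$ at this convex combination, and take expectations.

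For (c), set $B=\onorm{\vW}$ and apply Markov's inequality to $e^{\enorm{\vW}^q/B^q}$:
\[
  \P\{\enorm{\vW}\ge t\}
  =\P\{e^{\enorm{\vW}^q/B^q}\ge e^{t^q/B^q}\}
  \le e^{-t^q/B^q}\,\E[e^{\enorm{\vW}^q/B^q}]
  \le 2e^{-t^q/B^q}.
\]
For (d), integrate the tail via layer-cake: $\E[\enorm{\vW}^s]=s\int_0^\infty t^{s-1}\P\{\enorm{\vW}\ge t\}\,dt\le 2s\int_0^\infty t^{s-1}e^{-t^q/B^q}\,dt$; then substitute $u=t^q/B^q$ (so $t=Bu^{1/q}$, $dt=(B/q)u^{1/q-1}du$) to obtain $(2sB^s/q)\int_0^\infty u^{s/q-1}e^{-u}\,du=2B^s\Gamma(s/q+1)$, and finish with a Stirling-type bound $\Gamma(s/q+1)\le(s/(eq))^{s/q}$, absorbing any subpolynomial correction into the leading constant.

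The main bookkeeping obstacles are in (a) and (d). In (a) the pointwise comparison $\youngfun\le\youngfunsym_p$ only holds on $\{\enorm{\vZ/B}\ge 1\}$, so one must carefully handle the complementary region where $\youngfun$ actually exceeds $\youngfunsym_p$; the bound $\youngfun\le e-1$ there is the key observation. In (d) the Stirling estimate is the delicate step, since the naive form of Stirling produces an $O(\sqrt{s/q})$ overhead beyond $(s/(eq))^{s/q}$ that must be absorbed into the stated constant. Everything else reduces to a direct application of Jensen, Markov, and the layer-cake representation.
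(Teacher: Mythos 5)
Your arguments for (b) and (c) match the paper's: the tail bound is the same Chernoff calculation, and the triangle inequality is the standard Luxemburg-norm convexity argument (the paper invokes (a) there only to conclude $\onorm{\vZ}<\infty$ before applying convexity). The real issues are in (a) and (d).

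For (d), the layer-cake computation correctly gives $\E\big[\enorm{\vW}^s\big]\le 2\,\onorm{\vW}^s\,\Gamma(s/q+1)$, but the ``Stirling-type bound'' $\Gamma(s/q+1)\le(s/(eq))^{s/q}$ you invoke is false in the direction you need. Stirling gives $\Gamma(z+1)\sim\sqrt{2\pi z}\,(z/e)^z$, so in fact $\Gamma(s/q+1)\ge(s/(eq))^{s/q}$, and the $\sqrt{s/q}$ overhead grows with $s$ and therefore cannot be absorbed into the fixed constant $2$. The paper avoids the Gamma function entirely: it uses the pointwise inequality $x^z\le(z/e)^z e^x$ for $x\ge0$, $z>0$ (maximize $z\ln x - x$ over $x$), applied with $x=\enorm{\vW}^q/\onorm{\vW}^q$ and $z=s/q$, and then takes expectations to produce the factor $2$ directly. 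You should replace the integral route by this pointwise bound; the integral route simply does not give the stated constant.

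For (a), your split on $\{\enorm{\vZ}\ge B\}$ is actually more careful than the paper's one-line justification, which asserts that $(x,q)\mapsto e^{x^q}$ is increasing in $q$ for all $x\ge0$ --- this fails for $x<1$, exactly the region you single out. But after the split, the bound $\youngfun(\vZ/B)\le e-1$ on the complement only yields $\E[\youngfun(\vZ/B)]\le e$, not $\le 1$, and the rescaling of $B$ you hedge on would only prove $\onorm{\vZ}\le c\,\onorm[p]{\vZ}$ with some $c>1$. Such a constant is in fact unavoidable: for $\vZ\equiv1$ one computes $\onorm{\vZ}=(\ln 2)^{-1/q}$, which is decreasing in $q$, so $\onorm[1]{\vZ}>\onorm[2]{\vZ}$ and (a) fails as literally stated. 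Your hedge is the right diagnosis, but it means the inequality cannot close with constant $1$, so the step as written does not reach the stated conclusion.
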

\begin{proof}
  For \eqref{thm:onorm:mon} observe that $(x,q) \mapsto e^{x^q}$
  is monotone increasing in $q$ for any $x \ge 0$.

  For the first claim of \eqref{thm:onorm:norm}
  simply use the definition of $\onorm{\cdot}$.
  For the second claim, use \eqref{thm:onorm:mon} with $p \ge q$
  and notice that $\youngfun$ is convex by $q \ge 1$.

  For \eqref{thm:onorm:prob}, use the Chernoff bound as
  \[ \P\{\enorm{\vW} > t\}
     \le e^{-t^q/\onorm{\vW}^q} \, \E\Big[e^{\enorm{\vW}^q/\onorm{\vW}^q}\Big]
     \le 2 e^{-t^q/\onorm{\vW}^q}
     \,.
  \]

  For \eqref{thm:onorm:mom},
  use $x^z \le (z/e)^z\,e^x$, $x \ge 0$, $z > 0$
  \citep[in the proof of Lemma~1.4]{BuldyginKozachenko2000}
  by $x \defeq (\enorm{\W}/\onorm{\W})^q$, $z \defeq s/q$,
  and take the expectation of both sides as
  \[ \E\big[\enorm{\vW}^s\big]
     \le \big(s/(eq)\big)^{s/q} \, \E\Big[e^{\enorm{\vW}^q/\onorm{\vW}^q}\Big]
                                   \onorm{\vW}^s
     \le 2 \big(s/(eq)\big)^{s/q} \onorm{\vW}^s
     \,.
     \qedhere
  \] 
\end{proof}
Notice that by \crefi{thm:onorm}{norm},
$\onorm{\cdot}$ is indeed a norm for $q \ge 1$
on the space of random vectors with $\onorm{\vW} < \infty$
(by also using $\onorm{\vW} = 0$ if and only if $\vW = \vzero$).

Next, \cref{thm:onorm-sum-indept} provides a large deviation inequality
for the sum of independent random variables with finite $\youngfun$-Orlicz
norm for the \subexponential ($q = 1$) and \subgaussian ($q = 2$) cases.
\begin{lemma}
\label{thm:onorm-sum-indept}
  Let $q \in \{1,2\}$,
  and $\vW_1, \ldots, \vW_n \in \setR^d$ be independent random variables
  with $\E[\vW_i] = 0$ and $\onorm{\vW_i} < \infty$
  for all $i = 1,\ldots,n$.
  Then,
  \[ \onorm{\sum_{i=1}^n\vW_i}
     \le 4\sqrt[q]{d}\,\bigg(\sum_{i=1}^n\onorm{\vW_i}\bigg)^{1/2}
     \,. \]
\end{lemma}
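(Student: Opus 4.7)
My plan is to reduce the vector claim to a scalar sum-of-independent-variables bound via coordinate projections, then handle the scalar case with a standard MGF argument. The first step is a lemma of the form $\onorm[q]{\vX} \le c\,\sqrt[q]{d}\max_{j=1,\ldots,d}\onorm[q]{X_j}$, valid for any $\vX \in \setR^d$ with coordinates $X_j$ and some absolute constant $c$. For $q = 2$ this follows by writing $\E[e^{\enorm{\vX}^2/B^2}] = \E\big[\prod_{j=1}^d e^{X_j^2/B^2}\big]$ and applying a generalized H\"older inequality with all exponents equal to $d$, which reduces the condition to $d$ scalar Orlicz conditions at scale $B/\sqrt{d}$. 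For $q = 1$ the same mechanism applies after bounding $\enorm{\vx} \le \sum_j|x_j|$, producing the scaling factor $d = \sqrt[1]{d}$ instead. Since coordinate projection is $1$-Lipschitz, $\onorm[q]{(\vW_i)_j} \le \onorm[q]{\vW_i}$ for every $j$, so once this coordinate reduction is in hand, the task is to bound $\onorm[q]{\sum_{i=1}^n W_i}$ for centered, independent scalar $W_i$ with $\onorm[q]{W_i} \le \onorm[q]{\vW_i}$.

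For the scalar sum, I would convert the Orlicz-norm hypothesis into an MGF bound. For $q = 2$, $\onorm[2]{W_i} \le \sigma_i$ yields $\E[e^{\lambda W_i}] \le e^{C\lambda^2\sigma_i^2}$ for all $\lambda \in \setR$; for $q = 1$ the same bound holds in the restricted range $|\lambda| \le 1/(C\sigma_i)$. Independence turns the product into $\E[e^{\lambda \sum_i W_i}] \le e^{C\lambda^2 \sum_i \sigma_i^2}$ in the valid range of $\lambda$. A Chernoff optimization in $\lambda$ then yields a subgaussian or Bernstein-type tail, and integrating that tail against $\youngfunsym_q$ inverts it back into the scalar Orlicz bound $\onorm[q]{\sum_i W_i} \le c'\big(\sum_i \onorm[q]{W_i}^2\big)^{1/2}$ (I suspect the statement as printed drops a square by typo). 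Combining with the coordinate reduction and $\onorm[q]{(\vW_i)_j} \le \onorm[q]{\vW_i}$ then delivers the vector claim with an absolute constant that can be tracked to land inside the stated prefactor $4$.

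The main technical obstacle is the subexponential case $q = 1$, where the MGF bound is only valid on a bounded interval. One has to ensure the optimal $\lambda$ in the Chernoff step lies inside the valid range, which amounts to staying in the quadratic (subgaussian) regime of the Bernstein-type tail, controlled by $\sum_i \sigma_i^2$. Pushing past this regime would incur a spurious $\max_i\sigma_i$ term or a logarithmic factor, and the point is that for the Orlicz-norm estimate it suffices to use only the variance term. Everything else is constant-tracking: the H\"older reduction over coordinates and the MGF-to-Orlicz inversion each contribute a small multiplicative constant, which can be optimized so that the final prefactor fits inside $4$.
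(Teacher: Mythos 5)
Your coordinate-reduction strategy is exactly the one the paper uses: both you and the paper apply the generalized H\"older inequality with $d$ equal exponents to bound $\E\big[e^{\norm{\sum_i\vW_i}_q^q/(dC^q)}\big]$ by a product of $d$ scalar Orlicz moments, thereby reducing the vector claim to the $d=1$ case, with the inequality $\enorm{\cdot} \le \norm{\cdot}_1$ handling the Euclidean-to-$\ell_1$ discrepancy when $q=1$ and the coordinate bound $\onorm{\W_{ij}} \le \onorm{\vW_i}$ supplying the scalar hypotheses. The one place you genuinely diverge is the scalar base case: the paper dispatches $d=1$ by citation (Buldygin--Kozachenko, Corollary~3.5 for $q=1$; Lemmas~1.6--1.7 together with a remark of Boucheron--Lugosi--Massart for $q=2$), while you propose a self-contained route through the MGF, Chernoff, and an integration of the resulting tail against $\youngfunsym_q$. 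Your route is sound; in particular your observation that in the $q=1$ case the $\max_i\onorm[1]{\vW_i}$ scale governing the linear (large-deviation) regime of the Bernstein tail is always dominated by $\big(\sum_i\onorm[1]{\vW_i}^2\big)^{1/2}$ is precisely the fact that lets the $\youngfunsym_1$ integral close without a spurious extra term. Your approach costs a page of constant-tracking that the citation avoids, but is more transparent and easier to audit. You also correctly spotted a typo in the statement: the displayed bound is not homogeneous under rescaling of the $\vW_i$, and the paper's own proof sets $C = 4\big(\sum_{i=1}^n\onorm{\vW_i}^2\big)^{1/2}$, so the intended right-hand side is $4\sqrt[q]{d}\,\big(\sum_{i=1}^n\onorm{\vW_i}^2\big)^{1/2}$.
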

\begin{proof}
  For $d = 1$ and $q = 1$,
  see Corollary~3.5 of \citet{BuldyginKozachenko2000}.
  For $d = 1$ and $q = 2$,
  combine Lemmas~1.6 and 1.7 of \citet{BuldyginKozachenko2000}
  with the remark at (2.4) of \citet{BoucheronLugosiMassart2012}.
  For $d > 1$ and $q \in \{1,2\}$,
  write $\vW_i = [\W_{i1}\ldots\W_{id}]^\T \in \setR^d$
  for all $i = 1,\ldots,n$,
  and notice that $\norm{\vW_i}_q^q = \sum_{j=1}^d|\W_{ij}|^q$.
  Hence, by using $C \defeq 4\sqrt{\sum_{i=1}^n\onorm{\vW_i}^2}$
  and H\"{o}lder's inequality, we get
  \begin{equation*}
    \E\Big[e^{\norm{\sum_{i=1}^n\vW_i}_q^q/(dC^q)}\Big]
    = \E\Big[e^{\sum_{j=1}^d|\sum_{i=1}^n\W_{ij}|^q/(dC^q)}\Big]
    \le \prod_{j=1}^d\E\Big[e^{|\sum_{i=1}^n\W_{ij}|^q/C^q}\Big]^{1/d}
    \le 2
    \,,
  \end{equation*}
  where the last inequality follows from the $d = 1$ case.
\end{proof}
 
To derive upper bounds for ERM estimators by \cref{thm:cond-mom},
we apply Bernstein's inequality (\cref{thm:Bernstein})
for the product of \subexponential random variables $\W\Z$,
which requires an ``appropriate'' bound on
the higher moments as provided by \cref{thm:onorm-mom}.
Here, ``appropriate'' means that the bound
has to scale with the second moment of one multiplier, say $\W$,
replacing $\onorm{\W}^2$ by $\E[\W^2]$. The price we pay for this is only
logarithmic in the kurtosis $\Kz[\W]$,
which is crucial to our analysis for deriving excess risk upper bound
for problems with highly-skewed distributions
(see \cref{sec:review}).

\begin{lemma}[Bernstein's lemma]
\label{thm:Bernstein}
  Consider a real valued random variable $\W$ satisfying
  $\E\big[|\W|^k\big] \le (k!/2) v^2 c^{k-2}$ for all $2 \le k \in \setN$.
  Then, for all $|s| < 1/c$,
  \begin{equation*}
      \ln \E\Big[e^{s(\E[\W]-\W)}\Big]
      \le \frac{s^2 \, v^2}{2(1 - |s|c)}
      \,.
  \end{equation*}
\end{lemma}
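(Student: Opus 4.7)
The plan is a direct Bernstein-style moment-series computation. The cornerstone is the decomposition $\ln \E\big[e^{s(\E[\W]-\W)}\big] = s\E[\W] + \ln \E\big[e^{-s\W}\big]$, which reduces the task to bounding $\ln \E\big[e^{-s\W}\big]$ by something that differs from $-s\E[\W]$ by only the desired quadratic term.

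First I would Taylor-expand the exponential and bound the tail by passing through absolute values (to handle the oscillating signs of the $(-s)^k$ terms) and then invoke the moment hypothesis:
\begin{equation*}
  \E\big[e^{-s\W}\big]
  = 1 - s\E[\W] + \sum_{k=2}^\infty \frac{(-s)^k \E[\W^k]}{k!}
  \le 1 - s\E[\W] + \sum_{k=2}^\infty \frac{|s|^k \E[|\W|^k]}{k!}.
\end{equation*}
Plugging in $\E[|\W|^k]\le (k!/2)\,v^2 c^{k-2}$ collapses the last sum to $(v^2/2)\sum_{k\ge 2}|s|^k c^{k-2}$, a shifted geometric series converging to $\frac{s^2 v^2}{2(1-|s|c)}$ precisely because of the standing assumption $|s|c<1$.

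Finally I would apply the elementary inequality $\ln(1+u)\le u$ with $u\defeq -s\E[\W] + \frac{s^2 v^2}{2(1-|s|c)}$. The requirement $1+u>0$ is automatic: the moment generating function $\E[e^{-s\W}]$ is strictly positive and, by the previous step, is sandwiched below $1+u$, forcing $1+u>0$ as well. Monotonicity of $\ln$ then yields $\ln \E[e^{-s\W}] \le -s\E[\W] + \frac{s^2 v^2}{2(1-|s|c)}$, and adding $s\E[\W]$ to both sides cancels the linear term and delivers the claim. There is no real obstacle: once the sign bookkeeping is handled by passing to absolute values before invoking the moment bound, the remainder is a textbook manipulation, and the positivity of the upper bound comes for free from the positivity of the MGF.
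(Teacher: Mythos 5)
Your proof is correct and complete. Each step is sound: the interchange of expectation and the Taylor series is justified by the summability of the moment bounds; passing to absolute values before invoking the moment hypothesis correctly handles the sign of $(-s)^k \E[\W^k]$; the geometric series $\sum_{k\ge 2}(|s|c)^{k-2}$ converges precisely because $|s|c<1$; and your observation that $1+u>0$ follows from the strict positivity of the MGF closes the one potential gap in applying $\ln(1+u)\le u$.

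The paper itself does not write out a proof but only cites \citet[Theorem~2.10]{BoucheronLugosiMassart2012} together with a sign-flipping substitution ($X_1 \leftarrow -\W$, $\lambda \leftarrow -s$) to cover the case $s<0$, since the cited theorem is stated for one sign of the parameter. Your argument is the same classical Bernstein computation that underlies that reference, but it is self-contained and handles both signs of $s$ uniformly by working with $|s|$ from the outset, which avoids the substitution trick entirely. Nothing is gained or lost beyond this: the route, the intermediate bound $\E[e^{-s\W}]\le 1 - s\E[\W] + \frac{s^2 v^2}{2(1-|s|c)}$, and the final inequality are exactly what the cited proof would produce after unwinding the substitution.
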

\begin{proof}
  See, for example \citet[Theorem 2.10]{BoucheronLugosiMassart2012}
  with $n \leftarrow 1$,
  and use $X_1 \leftarrow -\W$, $\lambda \leftarrow -s$ when $s < 0$.
\end{proof}

\begin{lemma}
\label{thm:onorm-mom}
  Let $p, q \ge 1$ such that $\frac1p+\frac1q \le 1$, and
  let $\W, \Z \in \setR$ be two random variables such that $\E[\W^2] > 0$,
  \mbox{$\onorm{\W} \le B$}, and $\onorm[p]{\Z} \le R$
  with some $B, R > 0$.
  Then for all $2 \le k \in \setN$, we have
  \begin{equation*}
    \E\big[|\W\Z|^k\big]
    \le (k!/2) \, \E\big[\W^2\big] (2cR)^2 \, \big(c^2 B R\big)^{k-2}
    \,,
  \end{equation*}
  where $c^{\min\{p,q\}} \defeq 2\ln\big(64\,\Kz[\W]\big)$.
\end{lemma}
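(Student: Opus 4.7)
The plan is to double-truncate at $T_\W \defeq cB$ and $T_\Z \defeq cR$ and handle the resulting regions via Cauchy--Schwarz combined with Orlicz-exponential tail estimates. First, I would decompose $\E[|\W\Z|^k] = \E[|\W\Z|^k \ind\{|\W|\le T_\W\}] + \E[|\W\Z|^k \ind\{|\W|>T_\W\}]$. On the main region the pointwise inequality $|\W|^k \le T_\W^{k-2}\W^2$ (valid when $|\W|\le T_\W$ and $k\ge 2$) reduces the first term to $T_\W^{k-2}\E[\W^2|\Z|^k]$. For $\E[\W^2|\Z|^k]$ I split further on $|\Z|$ at $T_\Z$: the part on $\{|\Z|\le T_\Z\}$ is at most $T_\Z^k\E[\W^2]$, and on the tail I apply Cauchy--Schwarz to extract $\sqrt{\E[\W^4]} = \Kz[\W]^{1/2}\E[\W^2]$ and bound the remaining $\sqrt{\E[|\Z|^{2k}\ind\{|\Z|>T_\Z\}]}$ using the Orlicz-exponential trick $\ind\{|\Z|>T_\Z\}\le e^{(|\Z|^p - T_\Z^p)/(2R^p)}$, another Cauchy--Schwarz, and \crefi{thm:onorm}{mom}. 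Because $c^p \ge c^{\min\{p,q\}} = 2\ln(64\,\Kz[\W])$ and $c\ge 1$, we have $e^{-c^p/4} \le 1/\sqrt{64\,\Kz[\W]}$, which cancels the $\Kz[\W]^{1/2}$ factor and leaves a clean $\E[\W^2]$-weighted bound.

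For the outer tail $\E[|\W\Z|^k\ind\{|\W|>T_\W\}]$ the same mechanism is applied with the roles of $\W$ and $\Z$ reversed: write $|\W|^k = \W^2|\W|^{k-2}$, use Cauchy--Schwarz to pull out $\sqrt{\E[\W^4]} = \Kz[\W]^{1/2}\E[\W^2]$, and bound $\sqrt{\E[|\W|^{2k-4}|\Z|^{2k}\ind\{|\W|>T_\W\}]}$ by inserting $\ind\{|\W|>T_\W\}\le e^{(|\W|^q - T_\W^q)/(2B^q)}$, a further Cauchy--Schwarz using $\E[e^{|\W|^q/B^q}]\le 2$, and Orlicz moment bounds on $\W$ and $\Z$. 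Once more, $e^{-c^q/4} \le 1/\sqrt{64\,\Kz[\W]}$ cancels the $\Kz[\W]^{1/2}$ and produces a bound of the required $\E[\W^2]\cdot B^{k-2}R^k$ form.

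In assembling the three pieces, the factor $c^{2(k-1)}$ in the target arises from $T_\W^{k-2}T_\Z^k = c^{2k-2}B^{k-2}R^k$ in the main term; both tail estimates match this scaling by construction. The factorial $k!$ emerges from the Orlicz moment factors $(s/(eq))^{s/q}$ and $(s/(ep))^{s/p}$ with $s = \Theta(k)$ appearing in steps 2 and 3: since $\frac{1}{p}+\frac{1}{q}\le 1$, their product is at most $k^{k(1/p+1/q)}\le k^k \le e^k k!$ by Stirling's inequality, which is precisely the Bernstein-type factorial scaling demanded by the conclusion. The main obstacle is orchestrating the $\Kz[\W]$-cancellations cleanly: each invocation of $\sqrt{\E[\W^4]}$ via Cauchy--Schwarz must be paired with an exponential decay of the form $e^{-c^{\min\{p,q\}}/4}\le 1/\sqrt{64\,\Kz[\W]}$ from the truncation, and it is this balance that dictates the choice of thresholds $T_\W = cB$, $T_\Z = cR$ and forces the definition $c^{\min\{p,q\}} = 2\ln(64\,\Kz[\W])$ with its specific numerical constant.
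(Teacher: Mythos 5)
Your plan follows essentially the same route as the paper: truncate both $\W$ and $\Z$ at $cB$ and $cR$, use Cauchy--Schwarz to expose $\E[\W^4]^{1/2}=\Kz[\W]^{1/2}\E[\W^2]$ on each tail piece, and choose $c$ so that the resulting exponential decay cancels the kurtosis factor. The only cosmetic difference is that you bound the truncation indicators directly by exponentials, whereas the paper applies one more Cauchy--Schwarz and then the Chernoff tail from \crefi{thm:onorm}{prob} (the two give identical constants), and in the factorial bookkeeping you should keep the $1/e$'s coming from \crefi{thm:onorm}{mom}, which yield $(k/e)^k\le k!/2$ directly rather than the weaker $k^k\le e^k k!$.
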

\begin{proof}
  Let $c > 0$ to be chosen later.
  Then by the Cauchy-Schwartz inequality, we have
  \begin{equation*} \begin{split}
    \E\big[|\W\Z|^k\big]
   &= \E\big[|\W\Z|^k \,\ind\{|\W| \le cB\} \,\ind\{|\Z| \le cR\}\big]
    \\ & \hspace{10mm}
       + \E\big[|\W\Z|^k \,\ind\{|\W| \le cB\} \,\ind\{|\Z| > cR\}\big]
    \\ & \hspace{10mm}
       + \E\big[|\W\Z|^k \,\ind\{|\W| > cB\}\big]
    \\
   &\le \E[\W^2] (cR)^2 (c^2BR)^{k-2}
    \\ & \hspace{10mm}
       + \E[\W^4]^{\frac12} \, (cB)^{k-2} \,
         \E\big[\Z^{2k} \, \ind\{|\Z| > cR\}\big]^{\frac12}
    \\ & \hspace{10mm}
        + \E[\W^4]^{\frac12} \,
          \E\big[\W^{4(k-2)}\Z^{4k}\big]^{\frac14} \,
          \P\{|\W| > cB\}^{\frac14}
    \\
   &\le \E[\W^2]\Big((cR)^2(c^2BR)^{k-2}
    \\ & \hspace{20mm}
                    + \Kz[\W]^{\frac12}(cB)^{k-2}
                      \,\E\big[\Z^{4k}\big]^{\frac14}
                      \,\P\{|\Z| > cR\}^{\frac14}
    \\ & \hspace{20mm}
                    + \Kz[\W]^{\frac12} 
                      \,\E\big[\W^{8(k-2)}\big]^{\frac18}
                      \,\E\big[\Z^{8k}\big]^{\frac18}
                      \,\P\{|\W| > cB\}^{\frac14}
                \Big)
    \,.
  \end{split} \end{equation*}
  Now let $m \defeq \min\{p,q\}$,
  and set $c^m \defeq 4\ln\big(8\,\Kz^{1/2}[\W]\big)$,
  which satisfies $c^m \ge 8$ as $\Kz[\W] \ge 1$
  by Jensen's inequality.
  Then apply \crefi{thm:onorm}{prob}, $2(k/e)^k \le k!$,
  and \crefi{thm:onorm}{mom}, to get
  \begin{equation*} \begin{split}
    \E\big[|\W\Z|^k\big]
   &\le \E[\W^2] (cR)^2 (c^2 B R)^{k-2} \, \cdot
    \\ & \hspace{5mm}
        \bigg(1 + 2^{5/4} \, \Kz[\W]^{\frac12}
                  \Big( \big(\frac{4k}{ep\,c^p}\big)^{\frac{k}{p}}
                        \, e^{-c^p/4}
                      + \big(\frac{8(k-2)}{eq\,c^q}\big)^{\frac{k-2}{q}}
                        \big(\frac{8k}{ep\,c^p}\big)^{\frac{k}{p}}
                        \, e^{-c^q/4}
                  \Big)
        \bigg)
    \\
   &\le \E[\W^2] (cR)^2 (c^2 B R)^{k-2} \,
        \bigg(1+ 2^{5/4} \, \Kz[\W]^{\frac12} \, e^{-c^m/4}
        \Big((k/e)^k + \big(k/e\big)^{k(\frac1q+\frac1p)}\Big)\bigg)
    \\
   &\le \E[\W^2] (cR)^2 (c^2 B R)^{k-2} \,
        \Big(1 + 2^{5/4} \, \Kz[\W]^{\frac12} \, e^{-c^m/4} \, k! \Big)
    \\
   &\le 2 k! \, \E[\W^2] (cR)^2 \, (c^2 B R)^{k-2}
    \,.
  \end{split} \end{equation*}
\end{proof}

\section{Auxiliary tools}
\label{sec:aux}

In this appendix we provide a few auxiliary results
which are used for our proofs.

\begin{lemma}
\label{thm:He-bound-par}
  Let $p \in \setN \cup \{\infty\}$
  and $\setS \subset \setR^d$ with a finite radius under $\norm{\cdot}_p$,
  that is suppose there exists $\vx_* \in \setR^d$ such that
  $\setS \subseteq \{\vx\in\setR^d : \norm{\vx-\vx_*}_p \le R\}$
  for some $R > 0$.
  Then $\He_{\norm{\cdot}_p}(\epsilon,\setS) \le d \ln(3R/\epsilon)$
  for all $\epsilon \in (0,3R]$.
\end{lemma}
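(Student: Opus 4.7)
The plan is to obtain the bound from the classical volume-based packing argument, then reconcile the stated range $\epsilon \in (0,3R]$ by a short case split.

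First, I would pick a maximal $\epsilon$-packing $\{f_1,\ldots,f_N\} \subseteq \setS$: a maximal collection of points of $\setS$ whose pairwise $\norm{\cdot}_p$-distances strictly exceed $\epsilon$. By maximality no further point of $\setS$ can be adjoined, which means that every $f \in \setS$ lies within distance $\epsilon$ of some $f_i$; hence the same collection is an internal $\epsilon$-cover and it suffices to bound $N$.

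The main step is a Lebesgue-volume comparison. Since the centres are separated by more than $\epsilon$, the open $\norm{\cdot}_p$-balls $B_p(f_i,\epsilon/2)$ are pairwise disjoint; and the triangle inequality together with $f_i \in \setS \subseteq B_p(\vx_*,R)$ places each of them inside $B_p(\vx_*,R+\epsilon/2)$. All $\norm{\cdot}_p$-balls are scaled translates of a common unit ball with volume proportional to $r^d$, so summing volumes yields
\[
  N\,(\epsilon/2)^d \;\le\; (R+\epsilon/2)^d
  \qquad\Longrightarrow\qquad
  N \;\le\; (1 + 2R/\epsilon)^d .
\]

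The final step is a case split on $\epsilon$. For $\epsilon \in (0,R]$ one has $1+2R/\epsilon \le 3R/\epsilon$, giving $\HeF(\epsilon,\setS) = \ln N \le d\ln(3R/\epsilon)$ at once. For $\epsilon \in [2R,3R]$ the triangle inequality applied to any two elements of $\setS$ gives diameter at most $2R \le \epsilon$, so any single point of $\setS$ already forms an internal $\epsilon$-cover and $\HeF(\epsilon,\setS) = 0 \le d\ln(3R/\epsilon)$. The one mildly delicate band is $\epsilon \in (R,2R)$, where the raw packing bound $(1+2R/\epsilon)^d$ can slightly exceed $(3R/\epsilon)^d$; I expect this to be the main obstacle in writing a fully airtight proof, and I would close it either by sharpening the enclosing ball (e.g.\ by replacing $B_p(\vx_*,R+\epsilon/2)$ with a tighter convex hull of the union of small balls that lies inside $B_p(\vx_*,R)$ up to a factor), or by simply observing that this range does not arise in the uses of the lemma in the paper (for instance, the invocation in the proof of \cref{thm:ermub-lin} takes $\epsilon = \min\{d,n\}/n$, which is far below the relevant $R = \ln(32/\gamma)$).
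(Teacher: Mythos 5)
Your approach is the one the paper invokes (the volume/packing bound, following Pollard's Lemma~4.1), and your handling of the two easy regimes is correct: for $\epsilon \in (0,R]$ one has $1 + 2R/\epsilon \le 3R/\epsilon$, so the packing estimate $N \le (1+2R/\epsilon)^d$ delivers the claim, and for $\epsilon \in [2R,3R]$ the diameter argument gives $\He_{\norm{\cdot}_p}(\epsilon,\setS)=0 \le d\ln(3R/\epsilon)$. You were also right to isolate $\epsilon \in (R,2R)$ as the problematic band, but your first proposed repair (a tighter enclosing body) cannot close it, because the stated inequality is in fact false on part of that range. Take $d=1$, $\vx_*=0$, $\setS = \{-R,R\}$, and any $\epsilon \in (3R/2,\,2R)$: the two points lie at distance $2R > \epsilon$, so the internal covering number is $2$, whereas $(3R/\epsilon)^d = 3R/\epsilon < 2$; that is, $\He_{\norm{\cdot}_p}(\epsilon,\setS) = \ln 2 > \ln(3R/\epsilon) = d\ln(3R/\epsilon)$. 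So the lemma's stated range $\epsilon \in (0,3R]$ is too generous by a constant factor. Your second remark is the correct resolution: both invocations in the paper (in the proofs of \cref{thm:ermub-lin} and of \cref{thm:Lref}) apply the lemma with $\epsilon \le R$, and on $(0,R]$ your packing argument is already a complete and tight proof. A clean restatement would restrict to $\epsilon \in (0,R]$, or record the honest bound $\Nc_{\norm{\cdot}_p}(\epsilon,\setS) \le (1+2R/\epsilon)^d$, which holds for all $\epsilon > 0$.
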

\begin{proof}
  By using the volume argument
  \citep[e.g., see][proof of Lemma~4.1]{Pollard1990}.
\end{proof}

%\begin{lemma}
%\label{thm:abq01}
%  For all $n \in \setN$ and $c \ge 1$,
%  $\frac{c-1}{c\,n} \le 1-c^{-1/n} \le \frac{\ln(c)}{n}$.
%\end{lemma}
%\begin{proof}
%  We prove the lower bound by induction on $n$.
%  Notice that the claim holds for $n = 1$.
%  Then let $n > 1$ and suppose that the claim holds for $n-1$.
%  Observe that $(x^q-qx)' = q(x^{q-1}-1) \ge 0$
%  for $x \in (0,1]$, $q \in [0,1]$,
%  hence by the monotonicity of $x \mapsto x^q-qx$,
%  we have for any $0 < b \le a \le 1$ and $q \in [0,1]$
%  that $a^q - b^q \ge q(a-b)$.
%  Using this with $c \ge 1$ and the induction hypothesis, we obtain
%  \begin{equation*}
%    1-c^{-1/n} = 1-\big(c^{-1/(n-1)}\big)^{(n-1)/n}
%             \ge \big(1-c^{-1/(n-1)}\big) \frac{n-1}{n}
%             \ge \frac{c-1}{c\,n}
%    \,,
%  \end{equation*}
%  which proves the induction step and so the lower bound.
%
%  Next,
%  use $e^x \ge 1+x$ for all $x \in \setR$ to get
%  $1-c^{-1/n} = 1-e^{-\ln(c)/n} \le \ln(c)/n$,
%  which proves the upper bound.
%\end{proof}

\begin{lemma}
\label{thm:lsenorm}
  For $A \in \setR^{n \times d}$, $\vb \in \setR^n$ and $r > 0$,
  \mbox{$\enorm{(r\Id_d + A^\T A)^{-1}A^\T\vb}
         \le \frac{\enorm{\vb}}{2\sqrt{r}}$}.
\end{lemma}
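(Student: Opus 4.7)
The plan is to diagonalize the expression via the singular value decomposition of $A$, reducing the vector inequality to a scalar inequality on each singular value, which then follows from AM-GM.

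First I would write the thin SVD $A = U\Sigma V^\T$ with $U \in \setR^{n\times k}$, $V \in \setR^{d\times k}$ having orthonormal columns, and $\Sigma = \mathrm{diag}(\sigma_1,\ldots,\sigma_k)$ the nonzero singular values. Extending to an orthogonal $V_{\mathrm{full}} \in \setR^{d\times d}$, we have $A^\T A = V_{\mathrm{full}} D V_{\mathrm{full}}^\T$ with $D = \mathrm{diag}(\sigma_1^2,\ldots,\sigma_k^2,0,\ldots,0)$, so
\[ r\Id_d + A^\T A = V_{\mathrm{full}}(r\Id_d + D)V_{\mathrm{full}}^\T, \]
and hence
\[ (r\Id_d + A^\T A)^{-1} A^\T = V_{\mathrm{full}}(r\Id_d + D)^{-1} V_{\mathrm{full}}^\T \, V \Sigma U^\T. \]
Using $V_{\mathrm{full}}^\T V = [\Id_k \;\; 0]^\T$, this simplifies to an expression whose operator norm equals $\max_{i=1,\ldots,k} \sigma_i/(r+\sigma_i^2)$.

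Second, I would invoke AM-GM: for any $\sigma \ge 0$ and $r > 0$,
\[ r + \sigma^2 \ge 2\sigma\sqrt{r}, \qquad\text{so}\qquad \frac{\sigma}{r+\sigma^2} \le \frac{1}{2\sqrt{r}}. \]
This bounds the operator norm of $(r\Id_d + A^\T A)^{-1}A^\T$ (as a map from $\setR^n$ to $\setR^d$) by $1/(2\sqrt{r})$, which combined with $\|U^\T \vec{b}\| \le \|\vec{b}\|$ yields the claim.

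The argument is essentially routine once one recognizes the SVD reduction; the only step requiring a moment of care is the AM-GM bound on the scalar function $\sigma \mapsto \sigma/(r+\sigma^2)$, but this is standard. No obstacle of note is expected.
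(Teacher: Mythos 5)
Your proof is correct and follows essentially the same route as the paper: reduce via the singular value decomposition and then bound the scalar function $\sigma \mapsto \sigma/(r+\sigma^2)$ by $1/(2\sqrt r)$ using AM--GM. The only cosmetic difference is that you use the compact (rank-$k$) SVD and extend $V$ to a full orthogonal matrix, whereas the paper uses a thin SVD with $U\in\setR^{n\times d}$ (implicitly assuming $n\ge d$); your bookkeeping is marginally cleaner for $n<d$, but the argument is the same.
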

\begin{proof}
  Consider a thin singular value decomposition of $A$ given as
  $A = USV^\T$, where $U \in \setR^{n \times d}$ is semi-orthogonal
  ($U^\T U = \Id_d$), $S \in \setR^{d \times d}$ is diagonal with nonegative
  elements, and $V \in \setR^{d \times d}$ is orthogonal
  ($V^\T V = VV^\T = \Id_d$). Then
  \[ \enorm{(r\Id_d+A^\T A)^{-1}A^\T\vb}
     = \enorm{V(r\Id_d+S^2)^{-1}SU^\T\vb}
     \le \enorm{(r\Id_d+S^2)^{-1}S} \enorm{\vb}
     \,,
  \]
  where we used $\enorm{U} = \enorm{V} = 1$
  (as $U^\T U = \Id_d$ implies $\enorm{U\vx}^2 = \enorm{\vx}^2$).
  Finally notice that
  \[ \enorm{(r\Id_d+S^2)^{-1}S}
     \le \max_{s\ge0}\frac{s}{r+s^2}
       = \frac{1}{2\sqrt{r}}
     \,, \]
  which proves the claim.
\end{proof}

\begin{proof}[Proof of \cref{thm:Lref} for $\va_n$]
  \label{prf:Lref:an}
  By the ERM property \eqref{eq:erm} of $\fe$
  and because the constant function
  $\vx\mapsto\Ybar$ is in $\setFlinDn$, we obtain
  $\riskn(\fe) + \pen(\fe)
   \le \riskn(\vx\mapsto\Ybar) + \pen(\vx \mapsto \Ybar) + \err$.
  This can be rearranged into
  $\frac1n\sum_{i=1}^n|\va_n^\T(\vX_i-\vXbar)|^2
   \le \frac2n\sum_{i=1}^n\va_n^\T(\vX_i-\vXbar)(\Y_i-\Ybar) + \err$,
  where we also used $\pen(\cdot) \ge 0$
  and $\pen(\vx \mapsto \Ybar) = 0$.

  Using this, $\fcovh \defeq \frac1n\sum_{i=1}^n
                             (\vX_i-\vXbar)(\vX_i-\vXbar)^\T$,
  and $2ab \le \frac{a^2}{2}+2b^2$, we obtain
  \[ \va_n^\T\fcovh\va_n
     = \frac1n\sum_{i=1}^n|\va_n^\T(\vX_i-\vXbar)|^2
     \le \frac4n\sum_{i=1}^n|\Y_i-\Ybar|^2 + \alpha
     \le \frac5n\sum_{i=1}^n|\Y_i-\Ybar|^2
     \,,
  \]
  which can be transformed to
  $\va_n^\T\fcov\va_n \le
   \frac1r\big(\frac5n\sum_{i=1}^n|\Y_i-\Ybar|^2
               + \va_n^\T(r\fcov-\fcovh)\va_n\big)$
  with any $r \in (0,1]$.
  In particular, with $r \defeq 1/2$, we have
  \begin{equation}
  \label{eq:prf:Lref:an:1}
    \va_n^\T\fcov\va_n \le
    \frac{10}{n}\sum_{i=1}^n|\Y_i-\Ybar|^2 + \W
    \,, \quad \W \defeq \va_n^\T\big((1/2)\fcov-\fcovh\big)\va_n
    \,.
  \end{equation}
  Then, observe that $\frac1n\sum_{i=1}^n|\Y_i-\Ybar|^2
                      = \frac1n\sum_{i=1}^n|\Y_i-\E\Y|^2 - |\Ybar-\E\Y|^2$,
  so by \crefi{thm:onorm}{prob},
  we get with probability at least $1-\gamma/3$ that
  $ \frac{10}{n}\sum_{i=1}^n|\Y_i-\Ybar|^2 \le 10\BY^2\ln(6/\gamma)$.

  Next, notice that
  \begin{equation*} \begin{split}
    \W &= (1/2)\E\big[|\va_n^\T(\vX-\E\vX)|^2\big]
          - \frac1n\sum_{i=1}^n|\va_n^\T(\vX_i-\E\vX)|^2
          + |\va_n^\T(\vXbar-\E\vX)|^2
       \\
       &\le \sup_{\va:\enorm{\va}\le L} \procl(\va)
          + L^2\esmallnorm{\vXbar-\E\vX}^2
    \,,
  \end{split} \end{equation*}
  where $\procl(\va) \defeq
                     (1/2)\E\big[|\va^\T(\vX-\E\vX)|^2\big]
                   - \frac1n\sum_{i=1}^n|\va^\T(\vX_i-\E\vX)|^2$.
  Then, by \cref{thm:onorm-sum-indept}, we have
  $\osmallnorm[2]{\vXbar-\E\vX} \le 4\BX\sqrt{d/n}$,
  so \crefi{thm:onorm}{prob} also implies
  with probability at least $1-\gamma/3$ that
  $L^2\esmallnorm{\vXbar-\E\vX}^2 \le \frac{16d(L\BX)^2}{n}\ln(6/\gamma)$.

  It remains to bound $\sup_{\va:\enorm{\va}\le L}\procl(\va)$
  with probability at least $1-\gamma/3$, for which
  we use $\cref{thm:cond-mom}$ and \cref{thm:eproc-sup-n}.
  First, use $\Y \leftarrow 0$, $\fr \leftarrow (\vx\mapsto0)$,
  $\Wf \leftarrow \va^\T(\vX-\E\vX)$ to satisfy the requirements
  of \cref{thm:cond-mom}
  with $\onorm[2]{\Wf} \le L\BX$, $\radFo \to 0$, $C = 1$, $R = L\BX$
  due to $\Zrisk(f,\fr) = \Wf^2$. Hence, we get
  $\sup_{\va:\enorm{\va}\le L}
   \E\big[e^{\E[|\va^\T(\vX-\E\vX)|^2]/(2\theta)
             -|\va^\T(\vX-\E\vX)|^2/\theta}\big] \le 1$
  with $\moms = 72(L\BX)^2\ln(23/\eigmu)$ by $t \leftarrow 9$
  % ( 4*2^(1/4) )^2 < 23
  and $\sup_{\va\ne\vzero}\Kz[\va^\T(\vX-\E\vX)] \le 2/\eigmu^2$
  due to \crefi{thm:onorm}{mom} with $s = 4$ and $q = 2$.
  This further implies
  $\sup_{\va:\enorm{\va}\le L}\E[e^{\procl(\va)/(\theta/n)}] \le 1$
  due to the \iid~property of $\vX,\vX_1,\ldots,\vX_n$,
  and gives us condition \crefi{thm:eproc-sup-n}{mom}
  for all $\va$ having $\enorm{\va} \le L$.

  Furthermore, for any $\va,\hat{\va}$
  with $\enorm{\va} \le L$ and $\enorm{\hat{\va}} \le L$,
  and $\proc(\va) \defeq \procl(\va) - \E[\procl(\va)]$,
  we get by using $a^2-b^2 = (a-b)(a+b)$ that
  \begin{equation*} \begin{split}
    &\proc(\va) - \proc(\hat{\va})
    \\ & \hspace{5mm}
    = \E\Big[|\va^\T(\vX-\E\vX)|^2 - |\hat{\va}^\T(\vX-\E\vX)|^2\Big]
      + \frac1n\sum_{i=1}^n
        |\hat{\va}^\T(\vX_i-\E\vX)|^2 - |\va^\T(\vX_i-\E\vX)|^2
    \\ & \hspace{5mm}
    \le \frac{\esmallnorm{\va-\hat\va}}{L} (2L^2)
        \Big(\E\big[\enorm{\vX-\E\vX}^2\big]
             + \frac1n\sum_{i=1}^n\enorm{\vX_i-\E\vX}^2\Big)
    = \distF(\va,\hat{\va}) \, \procmod(\Dn)
    \,,
  \end{split} \end{equation*}
  with $\distF(\va,\hat\va) \defeq \esmallnorm{\va-\hat\va}/L$
  and $\procmod(\Dn) \defeq 2L^2
        \big(\E\big[\enorm{\vX-\E\vX}^2\big]
             + \frac1n\sum_{i=1}^n\enorm{\vX_i-\E\vX}^2\big)$.
  By \crefi{thm:onorm}{prob} with
  $\osmallnorm[2]{\sqrt{\procmod(\Dn)}} \le 2L\BX$,
  we have $\procmod(\Dn) \le 4(L\BX)^2\ln(12/\gamma)
                         \le 6(L\BX)^2\ln(6/\gamma)$
  % 4*(3/2) = 6, 12^(2/3) < 6
  with probability at least $1-\gamma/6$.
  Hence, we also have condition \crefi{thm:eproc-sup-n}{lip}.
  Furthermore, by \cref{thm:He-bound-par},
  using $\sup_{\va:\enorm{\va}\le L}\distF(\va,\vzero) \le 1$,
  we also have $\HeF(\epsilon,\{\va:\enorm{\va}\le L\}) \le d\ln(3/\epsilon)$
  for all $\epsilon \in (0,3]$.

  Then, by using \cref{thm:eproc-sup-n}
  with $\epsilon \defeq \delta \defeq 1/n \in (0,3]$
  and $\gamma \leftarrow \gamma/3$,
  we get with probability at least $1 - \gamma/3$ that
  \begin{equation*} \begin{split}
     \sup_{\va:\enorm{\va}\le L} \procl(\va)
    &\le \Big(\frac{\moms \HeF(\epsilon,\{\va:\enorm{\va}\le L\})}{n}
              + 48\epsilon(L\BX)^2
         \Big) \ln(6/\gamma)
    \\
    &\le \frac{(L\BX)^2}{n}\Big(72 d \ln(23/\eigmu)\ln(3n) + 48\Big)
                           \ln(6/\gamma)
    \,.
  \end{split} \end{equation*}
  
  Finally, combining the three probabilistic bounds
  with \eqref{eq:prf:Lref:an:1}, we get
  with probability at least $1-\gamma$ that
  \[ \BX^2\eigmu \enorm{\va_n}^2
     \le \va_n^\T\fcov\va_n
     \le \Big(10\BY^2
              + \frac{8(L\BX)^2}{n}\big(11 d \ln(23/\eigmu) \ln(3n)
                                        + 6\big)
         \Big) \ln(6/\gamma)
     \,,
  \]
  which completes the proof after rearrangement.
\end{proof}

\vskip 0.2in
\bibliography{refs}

\begin{thebibliography}{24}
\providecommand{\natexlab}[1]{#1}
\providecommand{\url}[1]{\texttt{#1}}
\expandafter\ifx\csname urlstyle\endcsname\relax
  \providecommand{\doi}[1]{doi: #1}\else
  \providecommand{\doi}{doi: \begingroup \urlstyle{rm}\Url}\fi

\bibitem[Bal\'azs et~al.(2015)Bal\'azs, Gy\"orgy, and
  Szepesv\'ari]{BalazsGyorgySzepesvari2015}
G.~Bal\'azs, A.~Gy\"orgy, and C.~Szepesv\'ari.
\newblock Near-optimal max-affine estimators for convex regression.
\newblock In G.~Lebanon and S.~Vishwanathan, editors, \emph{The 18th
  International Conference on Artificial Intelligence and Statistics
  \mbox{(AISTATS)}}, volume~38 of \emph{JMLR W\&CP}, pages 56--64, 2015.

\bibitem[Bartlett and Mendelson(2006)]{BartlettMendelson2006}
P.~L. Bartlett and S.~Mendelson.
\newblock Empirical minimization.
\newblock \emph{Probability Theory and Related Fields}, 135\penalty0
  (3):\penalty0 311--334, 2006.

\bibitem[Bartlett et~al.(2005)Bartlett, Bousquet, and
  Mendelson]{BartlettBousquetMendelson2005}
P.~L. Bartlett, O.~Bousquet, and S.~Mendelson.
\newblock Local rademacher complexities.
\newblock \emph{The Annals of Statistics}, 33\penalty0 (4):\penalty0
  1497--1537, 2005.

\bibitem[Bartlett et~al.(2006)Bartlett, Jordan, and
  McAuliffe]{BartlettJordanMcAuliffe2006}
P.~L. Bartlett, M.~I. Jordan, and J.~D. McAuliffe.
\newblock Convexity, classification, and risk bounds.
\newblock \emph{Journal of the American Statistical Association}, 101\penalty0
  (473):\penalty0 138--156, 2006.

\bibitem[Boucheron et~al.(2012)Boucheron, Lugosi, and
  Massart]{BoucheronLugosiMassart2012}
S.~Boucheron, G.~Lugosi, and P.~Massart.
\newblock \emph{Concentration Inequalities: A nonasymptotic theory of
  independence}.
\newblock Clarendon Press, 2012.

\bibitem[Buldygin and Kozachenko(2000)]{BuldyginKozachenko2000}
V.~V. Buldygin and Y.~V. Kozachenko.
\newblock \emph{Metric characterization of random variables and random
  processes}, volume 188 of \emph{Translations of Mathematical Monographs}.
\newblock American Mathematical Society, Providence, RI, 2000.

\bibitem[Cesa{-}Bianchi and Lugosi(1999)]{BianchiLugosi1999}
N.~Cesa{-}Bianchi and G.~Lugosi.
\newblock Minimax regret under log loss for general classes of experts.
\newblock In S.~Ben-David and P.~M. Long, editors, \emph{The 12th Conference on
  Computational Learning Theory (COLT)}, pages 12--18, 1999.

\bibitem[Dudley(1999)]{Dudley1999}
R.~M. Dudley.
\newblock \emph{Uniform Central Limit Theorems}.
\newblock Cambridge University Press, 1999.

\bibitem[Gr\"{u}nwald and Mehta(2016)]{GrunwaldMehta2016}
P.~D. Gr\"{u}nwald and N.~A. Mehta.
\newblock Fast rates with unbounded losses.
\newblock \emph{CoRR}, 2016.
\newblock \url{http://arxiv.org/abs/1605.00252v1}.

\bibitem[Gy\"orfi and Wegkamp(2008)]{GyorfiWegkamp2008}
L.~Gy\"orfi and M.~Wegkamp.
\newblock Quantization for nonparametric regression.
\newblock \emph{IEEE Transactions on Information Theory}, 54\penalty0
  (2):\penalty0 867--874, 2008.

\bibitem[Gy\"orfi et~al.(2002)Gy\"orfi, Kohler, Krzy\.{z}ak, and
  Walk]{Gyorfi2002}
L.~Gy\"orfi, M.~Kohler, A.~Krzy\.{z}ak, and H.~Walk.
\newblock \emph{A Distribution-Free Theory of Nonparametric Regression}.
\newblock Springer-Verlag, 2002.

\bibitem[Hoerl and Kennard(1970)]{HoerlKennard1970}
A.~E. Hoerl and R.~W. Kennard.
\newblock Ridge regression: Biased estimation for nonorthogonal problems.
\newblock \emph{Technometrics}, 12\penalty0 (1):\penalty0 55--67, 1970.

\bibitem[Hsu et~al.(2014)Hsu, Kakade, and Zhang]{HsuKakadeZhang2014}
D.~Hsu, S.~M. Kakade, and T.~Zhang.
\newblock Random design analysis of ridge regression.
\newblock \emph{Foundations of Computational Mathematics}, 14:\penalty0
  569--600, 2014.

\bibitem[Huang and Szepesv\'ari(2014)]{HuangSzepesvari2014}
R.~Huang and C.~Szepesv\'ari.
\newblock A finite-sample generalization bound for semiparametric regression:
  Partially linear models.
\newblock In S.~Kaski and J.~Corander, editors, \emph{The 17th International
  Conference on Artificial Intelligence and Statistics \mbox{(AISTATS)}},
  volume~33 of \emph{JMLR W\&CP}, pages 402--410, 2014.

\bibitem[Koltchinskii(2011)]{Koltchinskii2011}
V.~Koltchinskii.
\newblock \emph{Oracle Inequalities in Empirical Risk Minimization and Sparse
  Recovery Problems}.
\newblock Springer, 2011.

\bibitem[Lecu\'e and Mendelson(2013)]{LecueMendelson2013}
G.~Lecu\'e and S.~Mendelson.
\newblock Learning subgaussian classes: Upper and minimax bounds.
\newblock \emph{CoRR}, 2013.
\newblock \url{http://arxiv.org/abs/1305.4825v1}.

\bibitem[Liang et~al.(2015)Liang, Rakhlin, and
  Sridharan]{LiangRakhlinSridharan2015}
T.~Liang, A.~Rakhlin, and K.~Sridharan.
\newblock Learning with squared loss: Localization through offset rademacher
  complexity.
\newblock In P.~Gr\"{u}nwald, E.~Hazan, and S.~Kale, editors, \emph{The 28th
  Conference on Learning Theory (COLT)}, volume~40 of \emph{JMLR W\&CP}, 2015.

\bibitem[Lim(2014)]{Lim2014}
E.~Lim.
\newblock On convergence rates of convex regression in multiple dimensions.
\newblock \emph{INFORMS Journal of Computing}, 26\penalty0 (3):\penalty0
  616--628, 2014.

\bibitem[Mehta and Williamson(2014)]{MehtaWilliamson2014}
N.~A. Mehta and R.~C. Williamson.
\newblock From stochastic mixability to fast rates.
\newblock In Z.~Ghahramani, M.~Welling, C.~Cortes, N.~Lawrence, and
  K.~Weinberger, editors, \emph{Advances in Neural Information Processing
  Systems 27 (NIPS)}, pages 1197--1205, 2014.

\bibitem[Mendelson(2014)]{Mendelson2014}
S.~Mendelson.
\newblock Learning without concentration.
\newblock In M.~F. Balcan, V.~Feldman, and C.~Szepesv\'ari, editors, \emph{The
  27th Conference on Learning Theory (COLT)}, volume~35 of \emph{JMLR W\&CP},
  pages 25--39, 2014.

\bibitem[Pollard(1990)]{Pollard1990}
D.~Pollard.
\newblock \emph{Empirical Processes: Theory and Applications}.
\newblock Institute of Mathematical Statistics, 1990.

\bibitem[Shamir(2015)]{Shamir2015}
O.~Shamir.
\newblock The sample complexity of learning predictors with the squared loss.
\newblock \emph{Journal of Machine Learning Research (JMLR)}, 16:\penalty0
  3475--3486, 2015.

\bibitem[van~de Geer(2000)]{VanDeGeer2000}
S.~van~de Geer.
\newblock \emph{Empirical Processes in M-Estimation}.
\newblock Cambridge University Press, 2000.

\bibitem[van~der Vaart and Wellner(2011)]{VaartWellner2011}
A.~van~der Vaart and J.~A. Wellner.
\newblock A local maximal inequality under uniform entropy.
\newblock \emph{Electronic Journal of Statistics}, 5:\penalty0 192--203, 2011.

\end{thebibliography}

\end{document}